\documentclass[10pt,twocolumn,letterpaper]{article}
\usepackage[pagebackref]{hyperref}
\usepackage[pagenumbers]{cvpr}
\usepackage{xcolor}
\usepackage{bm}
\usepackage{xcolor}
\usepackage{amsmath}
\usepackage{amsfonts}
\usepackage{amssymb}
\usepackage{mathrsfs}
\usepackage{amsthm}
\usepackage{graphicx}
\usepackage{hyperref}
\hypersetup{colorlinks,linkcolor=blue,anchorcolor=blue,citecolor=blue}
\usepackage{pifont}
\usepackage{booktabs}      
\usepackage{siunitx}
\usepackage{tikz}
\usepackage{pgfplots}
\usepackage{longtable}
\pgfplotsset{compat=1.18} 

\definecolor{myPurple}{RGB}{128,0,128}

\newcommand{\bphi}{\bm{\phi}}

\newcommand{\bomega}{\bm{\omega}}

\newcommand{\bOmega}{\bm{\Omega}}

\newtheorem{theorem}{Theorem}
\newtheorem*{theorem_unnumbered}{Theorem}
\newtheorem{definition}{Definition}

\newtheorem{lemma}{Lemma}

\allowdisplaybreaks
\makeatletter

\usepackage{xcolor}

\definecolor{cvprblue}{rgb}{0.21,0.49,0.74}

\def\paperID{} 
\def\confName{CVPR}
\def\confYear{2026}

\title{Content-Aware Frequency Encoding for Implicit Neural Representations with Fourier-Chebyshev Features}

\author{
    Junbo Ke\textsuperscript{1} \quad
    Yangyang Xu\textsuperscript{1} \quad
    You-Wei Wen\textsuperscript{1,}\thanks{Corresponding author: \href{mailto:wenyouwei@gmail.com}{wenyouwei@gmail.com}.} \quad
    Chao Wang\textsuperscript{2} \\
    \textsuperscript{1}School of Mathematics and Statistics, Hunan Normal University\\
    \textsuperscript{2}Department of Statistics and Data Science, Southern University of Science and Technology\\
}

\begin{document}
\maketitle

\begin{abstract}
Implicit Neural Representations (INRs) have emerged as a powerful paradigm for various signal processing tasks, but their inherent spectral bias limits the ability to capture high-frequency details. Existing methods partially mitigate this issue by using Fourier-based features, which usually rely on fixed frequency bases. This forces multi-layer perceptrons (MLPs) to inefficiently compose the required frequencies, thereby constraining their representational capacity. To address this limitation, we propose Content-Aware Frequency Encoding (CAFE), which builds upon Fourier features through multiple parallel linear layers combined via a Hadamard product. CAFE can explicitly and efficiently synthesize a broader range of frequency bases, while the learned weights enable the selection of task-relevant frequencies. Furthermore, we extend this framework to CAFE+, which incorporates Chebyshev features as a complementary component to Fourier bases. This combination provides a stronger and more stable frequency representation.  Extensive experiments across multiple benchmarks validate the effectiveness and efficiency of our approach, consistently achieving superior performance over existing methods. Our code is available at \url{https://github.com/JunboKe0619/CAFE}.

\end{abstract}
\section{Introduction}
Implicit Neural Representations (INRs)~\cite{sitzmann2020implicit} are a class of methods that learn a continuous mapping from coordinates to signal or scene values using neural networks. Unlike discrete representations~\cite{choy20163d,riegler2017octnet}, INRs generate continuous outputs for accurate reconstruction at arbitrary points. These properties have attracted extensive attention in recent years, with applications spanning continuous image super-resolution~\cite{chen2021learning,jiang2025hiif,cao2023ciaosr}, image compression~\cite{jayasundara2025sinr,han2025towards,strumpler2022implicit}, inverse problems~\cite{9606601,najaf2024towards,zhu2024disorder}, and neural rendering~\cite{mildenhall2021nerf,muller2022instant}. 

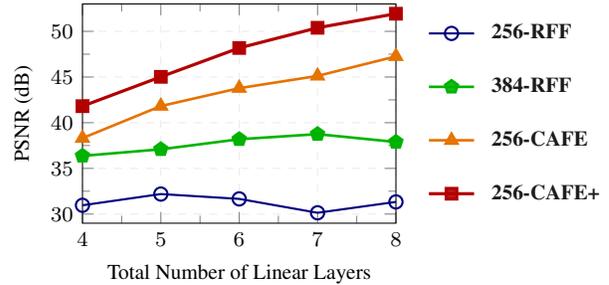
\begin{figure}[htbp]
    \centering
    \begin{tikzpicture}
        \begin{axis}[
            xlabel={Total Number of Linear Layers},
            ylabel={PSNR (dB)},
            xmin=4, xmax=8,
            ymin=29, ymax=53,
            xtick={4,5,6,7,8},
            ytick={30,35,40,45,50},
            axis lines=box,
            width=0.5\linewidth,
            height=0.35\linewidth,
            scale only axis,
            grid=major,
            grid style={dashed, gray!15},
            tick label style={font=\footnotesize},
            label style={font=\footnotesize},
            minor tick num=1,
            legend style={
                at={(1.08,0.5)},       
                anchor=west,
                minimum height=0.06\linewidth,  
                yshift=0pt,
                align=left,
                legend columns=1,
                font=\footnotesize\bfseries,
                draw=none,
                fill=white,
                fill opacity=0.9,
                inner sep=2pt,
                column sep=0pt,
                row sep=6pt,
                column sep=4pt,
                legend cell align=left,
                rounded corners=0pt,
                thick,
            }
        ]

            \addplot[
                color=blue!50!black, line width=0.8pt,
                mark=o, mark size=2.5pt, mark options={fill=white, draw=blue!50!black}
            ] coordinates {
                (4,30.96) (5,32.19) (6,31.66) (7,30.14) (8,31.33)
            };
            \addlegendentry{256-RFF}

            \addplot[
                color=green!70!black, line width=1pt,
                mark=pentagon*, mark size=2.5pt, mark options={fill=green!70!black}
            ] coordinates {
                (4,36.37) (5,37.09) (6,38.19) (7,38.75) (8,37.89)
            };
            \addlegendentry{384-RFF}

            \addplot[
                color=orange!90!black, line width=1pt,
                mark=triangle*, mark size=2.5pt, mark options={fill=orange!90!black}
            ] coordinates {
                (4,38.31) (5,41.82) (6,43.78) (7,45.12) (8,47.26)
            };
            \addlegendentry{256-CAFE}

            \addplot[
                color=red!70!black, line width=1.2pt,
                mark=square*, mark size=2.0pt, mark options={fill=red!70!black}
            ] coordinates {
                (4,41.81) (5,45.02) (6,48.18) (7,50.39) (8,51.93)
            };
            \addlegendentry{256-CAFE+}

        \end{axis}
    \end{tikzpicture}
    \vspace{-0.5em}
    \caption{Effect of the total number of linear layers on the image fitting task in terms of PSNR. For the RFF baselines, linear layers exist only in the MLP. For our methods (CAFE and CAFE+), the total number of linear layers includes those in both the encoding stage and the MLP. Any increase in this total arises from additional layers in the encoding stage, while the MLP depth remains fixed. The legend indicates the number of hidden neurons per layer.
    }
    \label{fig:psnr_time_dualaxis}
    \vspace{-0.5em}
\end{figure}
\setlength{\tabcolsep}{1pt} 
\begin{figure}[htbp]  
    \centering  
    \begin{tabular}{ccc}
        \makebox[0.32\linewidth][c]{\small\text{RFF}} &  
        \makebox[0.32\linewidth][c]{\small\text{CAFE}} &  
        \makebox[0.32\linewidth][c]{\small\text{CAFE+}} \\  

        \includegraphics[width=0.31\linewidth]{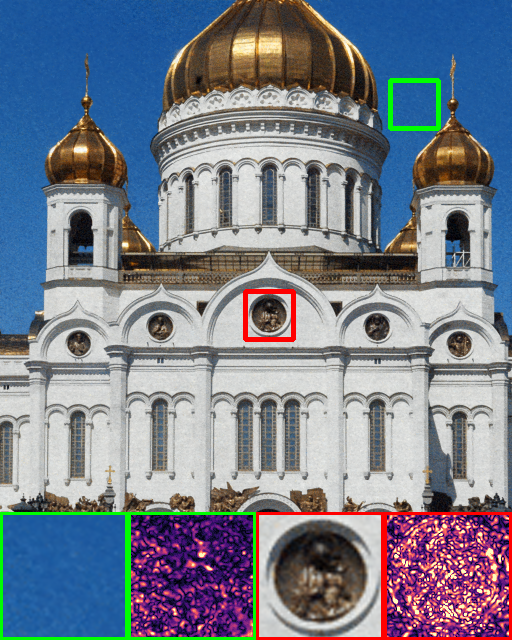} &
        \includegraphics[width=0.31\linewidth]{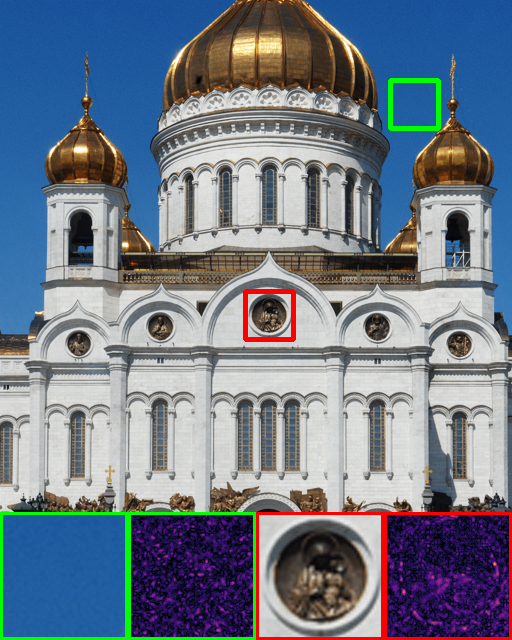} &
        \includegraphics[width=0.31\linewidth]{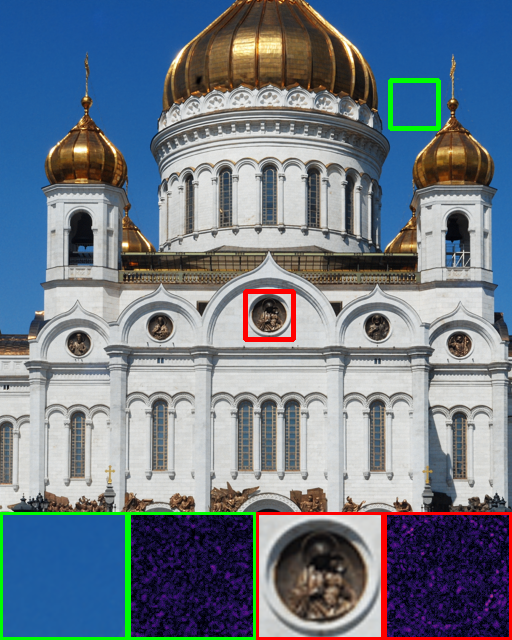} \\

        \makebox[0.31\linewidth][c]{PSNR = 32.19 dB} &
        \makebox[0.31\linewidth][c]{PSNR = 41.82 dB} &
        \makebox[0.31\linewidth][c]{PSNR = 45.02 dB} \\
    \end{tabular}
    \vspace{-0.5em}
    \caption{
    Qualitative comparison of RFF, CAFE, and CAFE+ under the same number of parameters for the image fitting task. High- and low-frequency regions are zoomed in, and corresponding error maps are provided for a detailed evaluation.
    }
    \label{fig:encoding_compare}  %
\end{figure}

 Despite these successes, INRs are known to suffer from spectral bias~\cite{rahaman2019spectral,yuce2022structured}, where neural networks tend to preferentially capture low-frequency components, thereby limiting their ability to represent high-frequency details.
To address the aforementioned limitations, a widely adopted strategy is to use Fourier features mappings, such as Random Fourier Features (RFF)~\cite{tancik2020fourier} or the Positional Encoding (PE)~\cite{mildenhall2021nerf}. These techniques project input coordinates into a high-dimensional space using a predefined set of sinusoidal basis functions. This mapping has been shown to enhance the spectral characteristics of the neural tangent kernel (NTK)~\cite{jacot2018neural}, thereby enabling a multilayer perceptron (MLP) to more accurately approximate the high-frequency components of the target signal.

However, the effectiveness of these approaches is fundamentally limited by their fixed frequency bases.
As a result, the MLP must implicitly synthesize the target frequencies based on these predefined bases through its nonlinear transformations, which is theoretically feasible~\cite{yuce2022structured} but practically inefficient and difficult to optimize. As shown in Fig.~\ref{fig:psnr_time_dualaxis}, although increasing the network depth should, in principle, enhance the model’s ability to compose frequencies, it brings almost no improvement in reconstruction accuracy. Furthermore, widening the network leads to moderate gains at the expense of a substantial increase in parameters. This analysis motivates us to shift the burden of target frequency synthesis from the MLP to the encoding stage, rather than enlarging the network capacity.

Building upon this idea, we propose \emph{Content-Aware Frequency Encoding (CAFE)}. This framework implements the shift by replacing fixed, stochastic bases with a dynamic mechanism that learns to generate a set of frequency bases optimally adapted to the target signal's content.
Specifically, sinusoidal bases generated from Fourier features are projected through $N$ parallel linear layers, whose outputs are combined via Hadamard product to induce frequency interactions.
Our analysis reveals that this design explicitly expands the representable frequency space from $M$ fixed Fourier bases to $\mathcal{O}(M^N  3^{N-1})$ components, while allowing the adaptive selection of task-relevant frequencies through learned weights. This explicit synthesis of target frequencies proves to be more effective, as Fig.~\ref{fig:psnr_time_dualaxis} demonstrates that our method benefits from an increased number of linear layers at the encoding stage.

While CAFE provides a powerful and adaptive mechanism for frequency synthesis, the range of represented frequencies still depends on the initialization of the Fourier features. Since neural networks tend to learn low-frequency components first~\cite{xu2019frequency}, the network may be forced to overuse high-frequency bases to compensate for the missing low-frequency information if CAFE fails to include essential low-frequency bases. This introduces noise in the low-frequency region and also harms the reconstruction of high-frequency signals~\cite{ma2025robustifying, hertz2021sape}.

To achieve a more stable and robust representation of low-frequency components, we propose the Chebyshev features, leveraging the inherent stability and strong approximation properties of Chebyshev polynomials~\cite{powell1981approximation}. We adopt Chebyshev features as a complement to Fourier features, allowing our encoding to more effectively cover the full frequency spectrum. We refer to the enhanced variant, which integrates Fourier-Chebyshev features, as CAFE+. 
As illustrated in Fig.~\ref{fig:encoding_compare}, CAFE+ exhibits better noise suppression in low-frequency regions and stronger fitting capability for high-frequency components, thereby effectively alleviating the aforementioned issues.

To summarize, our main contributions are as follows:
\begin{itemize}
    \item We propose CAFE, a novel encoding framework that adaptively selects task-relevant frequencies from an exponentially expanded spectrum, thereby significantly alleviating the burden of frequency synthesis on the MLP.
    
    \item We introduce the Chebyshev features as a complementary component to Fourier features, providing a stronger and more stable low-frequency representation and further enhancing the  representational capability.
    
    \item Our framework achieves state-of-the-art results, and extensive experiments further demonstrate its effectiveness and generalization ability across various INR tasks.
\end{itemize}

\section{Related Work}

Implicit Neural Representations (INRs) often suffer from spectral bias~\cite{rahaman2019spectral,yuce2022structured} when using standard MLPs, which restricts their ability to capture high-frequency details. In response to this limitation, research has mainly focused on two representative directions: designing specialized activation functions and developing more effective input encoding strategies.

\noindent
{\bf Activation Functions.} SIREN~\cite{sitzmann2020implicit} first introduced sinusoidal activation functions along with an effective initialization scheme. Subsequently, GAUSS~\cite{ramasinghe2022beyond} proposed Gaussian nonlinear activations, while WIRE~\cite{saragadam2023wire} developed continuous complex Gabor wavelet activations, enabling accurate and noise-robust signal reconstruction. FINER~\cite{liu2024finer} further introduced variable-periodic activations, which allow flexible spectral bias tuning to enhance both high-frequency detail representation and overall performance.  In addition, more 
activation functions have also been demonstrated to be effective ~\cite{serrano2024hosc,thennakoon2025bandrc,jayasundara2025pin}. Recently, MIRE~\cite{jayasundara2025mire} introduced a dictionary learning framework that dynamically selects optimal activation functions for each INR layer from a predefined atomic library, enhancing adaptability to task-specific frequency characteristics.

\noindent
{\bf Input Encoding Strategies.} Fourier features, represented by PE~\cite{mildenhall2021nerf} and RFF~\cite{tancik2020fourier}, map coordinates into high-dimensional sinusoidal spaces, boosting MLPs' capability to represent high-frequency signals. These methods, however, are sensitive to the choice of initial sampling frequencies, and their overall representation capability still has room for improvement. To address these issues, SAPE~\cite{hertz2021sape} progressively exposes frequency components across time and space, while SCONE~\cite{li2024learning} uses learnable spatial masks to assign Fourier bases to different regions for improved local representation. Nevertheless, both approaches remain constrained by their reliance on fixed Fourier bases.  MFN~\cite{fathony2020multiplicative} and BACON~\cite{lindell2022bacon} rely on Hadamard products to recursively synthesize frequencies in a serial manner.  In contrast, our method is encoding-based and adopts a parallel architectural design, enabling faster training while eliminating the need for the carefully engineered initialization required by BACON. Besides Fourier features, adaptive wavelet-positional encoding~\cite{zhao2025adaptive} has also been proposed to enhance high-frequency representation. In recent work, SL$^2$A~\cite{rezaeian2025sl2a} introduced the use of KAN~\cite{liu2025kan}, which substantially improves the model’s expressive capacity.

\section{Methodology}

\subsection{Preliminaries}
An INR models a continuous signal as a function 
$f_{\boldsymbol{\theta}}: \mathbb{R}^D \!\to\! \mathbb{R}^{\tilde{D}}$ 
parameterized by an MLP:
\begin{equation}
\begin{aligned}
\mathbf{z}^{(0)} &= \gamma(\mathbf{x}), \quad \mathbf{x} \in \mathbb{R}^D, \\
\mathbf{z}^{(\ell)} &= \rho^{(\ell)}\!\left(\mathbf{W}^{(\ell)} \mathbf{z}^{(\ell-1)} + \mathbf{b}^{(\ell)}\right),
\quad \ell = 1,\ldots,L{-}1, \\
f_{\boldsymbol{\theta}}(\mathbf{x}) &= \mathbf{W}^{(L)} \mathbf{z}^{(L-1)} + \mathbf{b}^{(L)} \in \mathbb{R}^{\tilde{D}},
\end{aligned}
\label{eq:INR}
\end{equation}
where $\mathbf{x}$ denotes a $D$-dimensional coordinate,
$\gamma(\mathbf{x})$ is the input encoding function, and 
$\rho^{(\ell)}$ represents the nonlinear activation of the $\ell$-th layer. To better capture high-frequency details, the input coordinates are often mapped via Fourier features:
\begin{definition}[Fourier Features~\cite{tancik2020fourier}]\label{definition_RFF}
Fourier features map the input coordinate $\mathbf{x} \in \mathbb{R}^D$ into a high-dimensional vector space and can be defined as
\begin{equation}
\Phi_{\mathrm{FF}}(\mathbf{x}) =
\left[ \sin(2\pi \boldsymbol{\omega}_i^\top \mathbf{x}),\;
      \cos(2\pi \boldsymbol{\omega}_i^\top \mathbf{x}) \right]_{i=1}^M,
\label{eq:RFF_M}
\end{equation}
where $\boldsymbol{\omega}_i \in \mathbb{R}^D$ are frequency vectors determined by different sampling strategies. Two common strategies are:

\begin{itemize}
\item \textbf{Positional Encoding (PE)}: 
Frequencies are set as $\boldsymbol{\omega}_i = b^{i/M}\mathbf{e}_d$, $i=0,\dots,M-1$, $d=1,\dots,D$, 
where $\mathbf{e}_d$ denotes the $d$-th standard basis vector in $\mathbb{R}^D$, and $b$ is a constant.

    \item \textbf{Random Fourier Features (RFF)}: 
    Frequencies are sampled from an isotropic Gaussian, $\boldsymbol{\omega}_i \sim \mathcal{N}(\mathbf{0}, s^2 \mathbf{I}_D)$, 
    where $s$ controls the frequency scale.
\end{itemize}
\end{definition}

Building on this, Fourier features networks are a specific instance of INRs, 
in which the encoding function $\gamma(\mathbf{x})$ is defined as the Fourier features mapping 
$\Phi_{\mathrm{FF}}(\mathbf{x})$ in Definition~\ref{definition_RFF} and the activation $\rho^{(\ell)}$ is ReLU~\cite{nair2010rectified}.

\subsection{Content-Aware Frequency Encoding}\label{High-order Encoding}

To understand the frequency synthesis capability of INRs, we first recall the following theorem.

\begin{theorem}[\cite{yuce2022structured}]
 Let $f_{\boldsymbol{\theta}}: \mathbb{R}^{D} \to \mathbb{R}$ be an INR of the form of Eq.~\eqref{eq:INR} with  
$\rho(z) = \sum_{k=0}^{K} \alpha_k z^k$.  
The network takes an input encoding
$\gamma(\mathbf{x})= \sin(\mathbf{\Omega} \mathbf{x} + \mathbf{\bphi})$,
where $\mathbf{\bOmega} \in \mathbb{R}^{M \times D}$ is the frequency matrix, and 
$\mathbf{\bphi} \in \mathbb{R}^M$ is the phase vector.
This architecture can only represent
functions of the form
\[
f_{\boldsymbol{\theta}}(\mathbf{x}) = \sum_{\bomega' \in \mathcal{H}(\bOmega)} c_{\bomega'} 
\, \sin \!\big(\langle \bomega', \mathbf{x} \rangle + \bphi_{\bomega'} \big),
\]
where $c_{\bomega'} \in \mathbb{R}$ are coefficients determined by the network parameters. The admissible set of frequencies is
\[
\mathcal{H}(\bOmega) \subseteq 
\left\{ \sum_{t=1}^{M} s_{t}\,\bOmega_{t} 
\;\middle|\;
s_{t} \in \mathbb{Z}, \;\; \sum_{t=1}^{M} |s_{t}| \leq K^{L-1} \right\},
\]
with $\bOmega_{t}$ denoting the $t$-th row of $\bOmega$. It should be noted that the activation $\rho(\cdot)$ is not limited to polynomial functions, since analytic activations can be effectively approximated by polynomials or Chebyshev expansions.

\label{theorem1}
\end{theorem}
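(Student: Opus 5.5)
We argue by induction on the layer index $\ell$, tracking at each layer the set of frequencies that can appear in every coordinate of $\mathbf{z}^{(\ell)}$. Define
\[
\mathcal{S}_\ell=\Big\{\textstyle\sum_{t=1}^{M}s_t\bOmega_t \;\Big|\; s_t\in\mathbb{Z},\ \sum_t|s_t|\le K^{\ell}\Big\}.
\]
The inductive claim is that each coordinate of $\mathbf{z}^{(\ell)}$ is a finite sum of terms $c\,\sin(\langle\bomega',\mathbf{x}\rangle+\varphi)$ with $\bomega'\in\mathcal{S}_\ell$. A constant term counts as the frequency $\mathbf{0}\in\mathcal{S}_\ell$, written as $c\sin(0+\pi/2)$.

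For the base case $\ell=0$, each coordinate of $\gamma(\mathbf{x})$ is $\sin(\langle\bOmega_t,\mathbf{x}\rangle+\bphi_t)$. Its frequency is the unit vector $s=\mathbf{e}_t$, which has $\ell_1$-norm $1=K^0$, so the claim holds.

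For the inductive step, assume the claim for $\mathbf{z}^{(\ell-1)}$.
\begin{itemize}
\item The affine map $\mathbf{W}^{(\ell)}\mathbf{z}^{(\ell-1)}+\mathbf{b}^{(\ell)}$ only forms linear combinations and adds a constant. It therefore stays in the span of sinusoids with frequencies in $\mathcal{S}_{\ell-1}$.
\item Applying $\rho(z)=\sum_{k\le K}\alpha_k z^k$ produces powers $z^k$ with $k\le K$. Expanding such a power gives products of at most $K$ sinusoids.
\item The product-to-sum identity $\sin a\sin b=\tfrac12[\cos(a-b)-\cos(a+b)]$, together with the analogous identities for mixed sine and cosine, rewrites a product of $k$ sinusoids as a sum of sinusoids. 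Their frequencies have the form $\pm\bomega^{(1)}\pm\cdots\pm\bomega^{(k)}$, and each cosine is a sine with a phase shift of $\pi/2$.
\item If $\bomega^{(j)}=\sum_t s^{(j)}_t\bOmega_t$ with $\sum_t|s^{(j)}_t|\le K^{\ell-1}$, then the signed sum has integer coefficients $\sum_j\pm s^{(j)}_t$. By the triangle inequality their $\ell_1$-norm is at most $kK^{\ell-1}\le K^{\ell}$.
\end{itemize}
Hence every coordinate of $\mathbf{z}^{(\ell)}$ has its frequencies in $\mathcal{S}_\ell$, which closes the induction.

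There are $L-1$ activated layers, so every coordinate of $\mathbf{z}^{(L-1)}$ has frequencies in $\mathcal{S}_{L-1}$. The final layer is affine with no activation, so it creates no new frequencies. Collecting terms that share a frequency, and absorbing $\pm\bomega'$ into a single representative with an adjusted phase, gives the stated form with $\mathcal{H}(\bOmega)\subseteq\mathcal{S}_{L-1}$.

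The only step that needs real care is the bookkeeping in the product-to-sum expansion. We must check two things there:
\begin{itemize}
\item The signs $\pm$ keep the frequencies in the integer lattice generated by the rows $\bOmega_t$. They do, because $\mathbb{Z}$ is closed under negation.
\item The $\ell_1$ budget multiplies by $K$ per layer rather than adding. This holds because a product of $k$ factors uses the triangle inequality $k$ times.
\end{itemize}
The closing remark of the theorem, about non-polynomial analytic activations, is a comment rather than part of the claim. We would justify it only informally: truncate the Taylor or Chebyshev series at degree $K$ and apply the result above to the truncated network.
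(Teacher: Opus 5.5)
Your argument is correct and is essentially the standard proof. The paper does not prove this theorem itself; it only cites it from Y\"{u}ce et al. Its supplementary proof of the Chebyshev analogue (Theorem~\ref{theorem2}) has exactly your structure: a product/power lemma showing that a $k$-th power of an expansion whose indices have $\ell_1$ budget $C$ only produces budget at most $kC$, the bias treated as the zero frequency ($T_0$ there), and induction over the $L-1$ activated layers giving $K^{L-1}$. The only cosmetic difference is that you expand $z^k$ directly into products of $k$ sinusoids and apply the triangle inequality once, whereas the paper builds the power up one factor at a time through a recursively defined index set.
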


Theorem~\ref{theorem1} shows that an MLP can synthesize new frequencies as integer combinations of the initially sampled frequencies $\bOmega$, representing the signal as a linear combination of sinusoidal functions.
Even when the initial sampling $\bOmega$ is suboptimal, the MLP should theoretically be able to compose the frequencies required by the target signal, thereby achieving strong representational capacity. 
Furthermore, increasing the number of layers can further expand the range of synthesizable frequencies, potentially enhancing expressive power. 
In practice, however, Fourier features networks are highly sensitive to the choice of initial frequencies~\cite{benbarka2022seeing, hertz2021sape, ma2025robustifying}, and simply increasing the number of layers yields only marginal improvements (see Fig.~\ref{fig:psnr_time_dualaxis}). This reflects the inefficiency and optimization difficulty of relying solely on a deep MLP to implicitly synthesize and select the frequencies required by the target signal.

The above analyses motivate the design of an encoding mechanism that synthesizes frequencies according to the signal content, thus relieving the MLP of frequencies synthesis duties and improving overall efficiency.
A straightforward idea is to increase the number of Fourier features $M$ in Eq.~\eqref{eq:RFF_M} and assign learnable weights to each frequency, allowing the model to select task-relevant frequencies from a broader spectrum. However, this approach is inefficient, since the representable frequency set only grows linearly with $M$. 
A more effective alternative is to introduce multiplicative interactions among frequency components, allowing trigonometric product-to-sum identities to generate a broad set of combinatorial frequencies from a fixed set of bases. 
To this end, we propose \emph{Content-Aware Frequency Encoding (CAFE)}, which leverages learnable linear transformations to drive these frequency interactions, adaptively generating a richer spectrum from fixed Fourier bases.

\begin{figure}[htbp]
    \centering
    \setlength{\tabcolsep}{3pt} %
    \begin{tabular}{cc}
        \includegraphics[width=0.46\columnwidth]{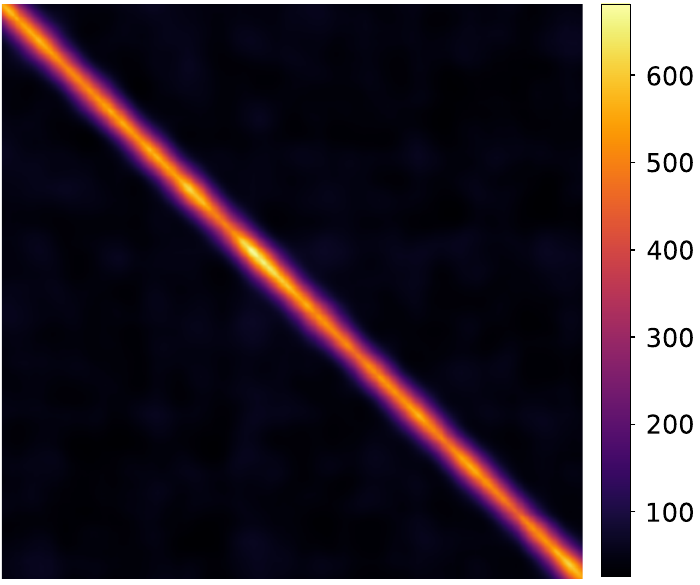} &
        \includegraphics[width=0.46\columnwidth]{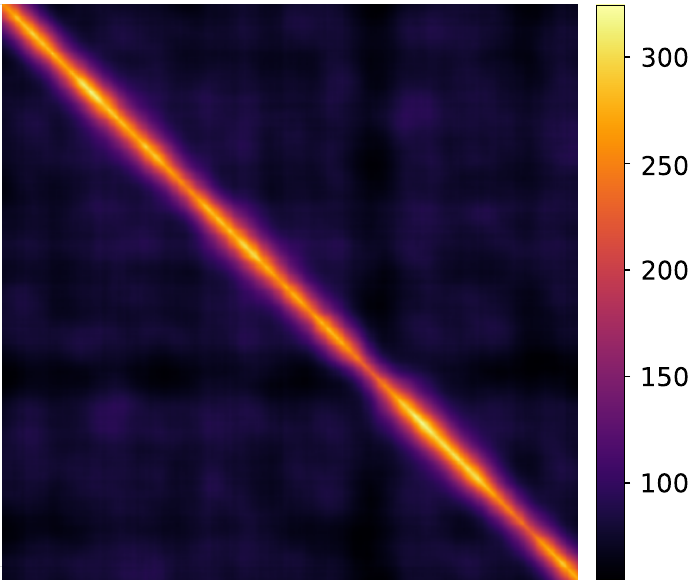} \\
          \hspace{-15pt} (a) CAFE & \hspace{-15pt} (b) RFF
    \end{tabular}
    \caption{NTK matrix for the CAFE and RFF. The network width and the number of sampled points were both set to 1024.}
    \label{fig:NTK}
\end{figure}

Specifically, given an input coordinate $\mathbf{x} \in \mathbb{R}^D$,
we feed it Fourier features $\Phi_{\text{FF}}(x)$ into several parallel linear layers:  
\begin{equation}
H_i(\mathbf{x} ) = \mathbf{W}_i \, \Phi_{\text{FF}}(\mathbf{x} ) + \mathbf{b}_i, \quad i=1,2,\dots,N,
\label{eq:H_i(x)}
\end{equation}
where $\mathbf{W}_i$ and $\mathbf{b}_i$ denote the weight matrix and bias vector of the $i$-th linear layer, respectively. Then, the outputs of all layers are fused via the Hadamard product:
\begin{equation}
\Psi(\mathbf{x}) = \bigodot_{i=1}^{N} H_i(\mathbf{x}),
\label{eq:high-order encoding}
\end{equation}
where $\bigodot$ denotes the Hadamard product.
The resulting $\Psi(\mathbf{x})$ forms the CAFE-encoded features, serving as input to the MLP for signal reconstruction. Moreover, Fig.~\ref{fig:NTK} shows that CAFE produces a better-conditioned NTK matrix, verifying its superiority.

\subsection{Analysis of CAFE}

To gain deeper insight, we further analyze the underlying mechanism behind the improvement brought by CAFE.
Let the input coordinate be $\mathbf{x} \in \mathbb{R}^{D}$, which is encoded by Fourier features as
\begin{equation}
    \mathbf{z} = [\sin(\theta_1), \dots, \sin(\theta_M), \cos(\theta_1), \dots, \cos(\theta_M)]^{\top},
\end{equation}
where $\theta_i = 2\pi\, \boldsymbol{\omega}_i^{\top} \mathbf{x}$.

For simplicity, we consider two parallel linear layers. 
For input $\mathbf{z}$, the $j$-th element of the first projection $(\mathbf{h}^{(1)})_j$ is computed as
\begin{equation}
    (\mathbf{h}^{(1)})_j =
    \sum_{i=1}^{M} w_{1,j,i}^{(s)} \sin(\theta_i)
    + \sum_{i=1}^{M} w_{1,j,i}^{(c)} \cos(\theta_i)
    + b_{1,j},
\end{equation}
where $\mathbf{W}_1 = [\mathbf{W}_1^{(s)}, \mathbf{W}_1^{(c)}] \in \mathbb{R}^{D_h \times 2M}$ 
and $\mathbf{b}_1 \in \mathbb{R}^{D_h}$ are learnable weights and bias, and $w_{1,j,i}^{(s)}$ ($w_{1,j,i}^{(c)}$) connects the $i$-th sine (cosine) features to the $j$-th neuron. 
The second layer $\mathbf{h}^{(2)}$ follows the same formulation.

The Hadamard product $\psi = \mathbf{h}^{(1)} \odot \mathbf{h}^{(2)}$ 
induces frequency mixing through multiplicative interactions between 
the sinusoidal components of $(\mathbf{h}^{(1)})_j$ and $(\mathbf{h}^{(2)})_j$. 
Applying the product-to-sum identities, the multiplicative interactions 
between $(\mathbf{h}^{(1)})_j$ and $(\mathbf{h}^{(2)})_j$ 
give rise to both sum and difference frequency components:
{\small
\begin{align*}
\frac{1}{2}\sum_{i,m}\Big[
    (C_a + C_b)\cos(\theta_i - \theta_m)
    + (C_b - C_a)\cos(\theta_i + \theta_m) \nonumber\\
    + (C_c - C_d)\sin(\theta_i - \theta_m)
    + (C_c + C_d)\sin(\theta_i + \theta_m)
\Big].
\end{align*}
}
The coefficients are defined as
\begin{equation}
\begin{aligned}
    C_a &= w_{1,j,i}^{(s)} w_{2,j,m}^{(s)}, \quad
    C_b = w_{1,j,i}^{(c)} w_{2,j,m}^{(c)}, \\
    C_c &= w_{1,j,i}^{(s)} w_{2,j,m}^{(c)}, \quad
    C_d = w_{1,j,i}^{(c)} w_{2,j,m}^{(s)}.
\end{aligned}
\end{equation}
The linear combinations of $C_a$, $C_b$, $C_c$, and $C_d$ enable the network to selectively enhance or suppress synthesized frequencies.
 For instance, if the target signal does not require the sum-frequency component 
$\bomega_i + \bomega_m$, the network can enforce
\(C_b - C_a = 0, \; C_c + C_d = 0,\)
thereby eliminating the corresponding terms.
A similar analysis applies to the interactions between the bias terms and the sinusoidal components, 
which correspond to frequency components at $\boldsymbol{\omega}_i$ or $\boldsymbol{\omega}_m$ themselves.
The following theorem formalizes the frequency composition of CAFE.
\begin{theorem}\label{theorem:cafe_spectrum}
Let the CAFE encoding $\Psi(\mathbf{x})$ be constructed from a base frequency set 
$\mathcal{F}_{\rm{base}} = \{\boldsymbol{\omega}_1, \dots, \boldsymbol{\omega}_M\}$ 
using $N$ parallel linear layers with bias,
then the admissible frequencies of $\Psi(\mathbf{x})$ form the set
{\small
\begin{align*}
\mathcal{F}_{\rm{CAFE}}^{+}
&= \bigcup_{I=1}^{N} 
\left\{ \sum_{i=1}^{I} \sigma_i \boldsymbol{\omega}_{k_i} 
\;\middle|\;
\boldsymbol{\omega}_{k_i} \in \mathcal{F}_{\rm{base}},\;
\sigma_i \in \{-1, +1\} \right\} \nonumber\\
&= \left\{ \sum_{i=1}^{N} \sigma_i \boldsymbol{\omega}_{k_i }\;\middle|\;
\boldsymbol{\omega}_{k_i} \in \mathcal{F}_{\rm{base}},\;
\sigma_i \in \{-1, 0, +1\} \right\}.
\label{eq:cafe_freq_set}
\end{align*}
}
\end{theorem}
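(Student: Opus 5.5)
We argue by induction on the number $N$ of parallel layers, working in complex-exponential form. Write each Fourier feature as $\sin\theta_i=\frac{1}{2i}(e^{\mathrm{i}\theta_i}-e^{-\mathrm{i}\theta_i})$ and $\cos\theta_i=\frac12(e^{\mathrm{i}\theta_i}+e^{-\mathrm{i}\theta_i})$, with $\theta_i=2\pi\boldsymbol{\omega}_i^\top\mathbf{x}$.

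Each coordinate $(H_n(\mathbf{x}))_j$ of Eq.~\eqref{eq:H_i(x)} is then a trigonometric polynomial. Its spectrum, meaning the set of $\boldsymbol{\xi}$ with $e^{2\pi\mathrm{i}\boldsymbol{\xi}^\top\mathbf{x}}$ appearing with a possibly nonzero coefficient, is contained in
\[
S_1=\{\mathbf{0}\}\cup\{\pm\boldsymbol{\omega}_k : 1\le k\le M\}=\{\sigma\boldsymbol{\omega}_k : \sigma\in\{-1,0,+1\}\}.
\]
The zero frequency comes from the bias $\mathbf{b}_n$. The Hadamard product in Eq.~\eqref{eq:high-order encoding} acts coordinatewise, so $(\Psi(\mathbf{x}))_j=\prod_{n=1}^N (H_n(\mathbf{x}))_j$. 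Since $e^{2\pi\mathrm{i}\boldsymbol{\xi}^\top\mathbf{x}}e^{2\pi\mathrm{i}\boldsymbol{\eta}^\top\mathbf{x}}=e^{2\pi\mathrm{i}(\boldsymbol{\xi}+\boldsymbol{\eta})^\top\mathbf{x}}$, the spectrum of a product lies in the Minkowski sum of the factors' spectra. By induction, the spectrum of the $N$-fold product lies in
\[
S_N=S_1+\cdots+S_1=\Big\{\sum_{i=1}^N\sigma_i\boldsymbol{\omega}_{k_i}\;:\;\sigma_i\in\{-1,0,1\}\Big\}.
\]
This is the second expression in the theorem. This is just the paper's product-to-sum computation for $N=2$, iterated. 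Regrouping conjugate pairs $\pm\boldsymbol{\xi}$ turns the representation back into real sines and cosines of $\langle\boldsymbol{\xi},\mathbf{x}\rangle$, which gives the form used in Theorem~\ref{theorem1}.

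Next we show the two set descriptions coincide. Take an element $\sum_{i=1}^N\sigma_i\boldsymbol{\omega}_{k_i}$ with $\sigma_i\in\{-1,0,1\}$. Let $I$ be the number of nonzero $\sigma_i$; after relabeling, it lies in the $I$-th set of the union.
\begin{itemize}
\item If $I\ge1$, the element lies in that set directly.
\item If $I=0$, the element is $\mathbf{0}$. It belongs to the union via $I=2$ as $\boldsymbol{\omega}_k-\boldsymbol{\omega}_k$ when $N\ge2$. For $N=1$ it must be accounted for separately as the bias (DC) term, and I would remark on this as the convention implicit in the "$+$" superscript.
\end{itemize}
Conversely, an element of the $I$-th set with $I\le N$ is obtained by padding with $N-I$ zero coefficients.

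The main subtlety is the meaning of "admissible." Containment (spectrum $\subseteq S_N$) is routine. To justify equality, I would show that every $\boldsymbol{\xi}\in S_N$ is attainable for some choice of weights. Given $\boldsymbol{\xi}=\sum_{i\le I}\sigma_i\boldsymbol{\omega}_{k_i}$, choose the following parameters for coordinate $j$:
\begin{itemize}
\item For layers $n\le I$: $\mathbf{W}_n$ selects only $\cos\theta_{k_n}$ and $\sin\theta_{k_n}$, with the coefficients of $e^{\pm\mathrm{i}\theta_{k_n}}$ tuned so that the $\sigma_n$ component is isolated, i.e.\ weight vector $(1,\sigma_n\mathrm{i})/2$ in exponential form.
\item For layers $n>I$: set $\mathbf{W}_n=0$ and bias $1$.
\end{itemize}
The product then equals $\mathrm{Re}$ or $\mathrm{Im}$ of a single pure exponential. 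This requires checking that no cancellation kills the target term. Since the construction yields exactly $\cos(2\pi\boldsymbol{\xi}^\top\mathbf{x})$ plus lower-order cross terms with distinct frequencies, I expect this to go through. Alternatively, the coefficients are polynomials in the weights that are not identically zero, so generic weights realize every frequency in $S_N$.
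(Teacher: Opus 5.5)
\textbf{Containment.} Your Minkowski-sum induction in exponential form is the paper's argument. The paper only expands the product for $N=2$ via product-to-sum, notes that the bias--sinusoid cross terms contribute $\boldsymbol{\omega}_i$ and $\boldsymbol{\omega}_m$ themselves, and then states the general theorem with no further proof. Your statement that the spectrum of $\prod_n (H_n(\mathbf{x}))_j$ lies in $S_1+\cdots+S_1$ is exactly that computation iterated, and it is correct.

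\textbf{The $N=1$ case.} You are right that the two displayed sets differ when $N=1$. The union omits $\mathbf{0}$ unless some $\boldsymbol{\omega}_k=\mathbf{0}$, while the bias does produce a DC term. This is a defect of the statement, not of your proof, and the $\{-1,0,+1\}$ description is the correct one.

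\textbf{Attainability.} This step goes beyond what the paper proves, and it does not work as written.
\begin{itemize}
\item The weights $\mathbf{W}_n$ are real, so you cannot place the coefficient $\sigma_n\mathrm{i}$ on $\sin\theta_{k_n}$ to isolate $e^{\mathrm{i}\sigma_n\theta_{k_n}}$. Any real combination $a\cos\theta+b\sin\theta$ contains both $e^{\pm\mathrm{i}\theta}$ with conjugate coefficients.
\item The network also cannot take $\mathrm{Re}$ of a product of complex factors afterwards, since $\mathrm{Re}\prod_n z_n\neq\prod_n \mathrm{Re}\,z_n$ in general.
\item Your fallback claim, that the remaining cross terms have frequencies distinct from $\boldsymbol{\xi}$, is false in general. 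With repeated indices $k_i$, or linear relations among the $\boldsymbol{\omega}_k$, several sign patterns can land on the same frequency, and ruling out cancellation is precisely the point at issue.
\end{itemize}

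\textbf{The fix is positivity.} Let each active layer $n\le I$ output $\cos\theta_{k_n}$ (weight $1$, zero bias), and let each inactive layer output the constant $1$ ($\mathbf{W}_n=0$, bias $1$). Then
\[
\prod_{n=1}^{I}\cos\theta_{k_n}=2^{-I}\sum_{\epsilon\in\{\pm1\}^{I}}\exp\Big(\mathrm{i}\sum_{n=1}^{I}\epsilon_n\theta_{k_n}\Big)
\]
has only nonnegative coefficients. The pattern $\epsilon=\sigma$ contributes, so the coefficient of $e^{2\pi\mathrm{i}\boldsymbol{\xi}^{\top}\mathbf{x}}$ is at least $2^{-I}$ and no cancellation can occur. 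By linear independence of distinct exponentials, $\boldsymbol{\xi}$ is genuinely present in $\Psi$.

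This same choice gives the nonvanishing witness your generic-weights remark needs. The coefficient at each $\boldsymbol{\xi}\in S_N$ is then a polynomial in the real weights that is not identically zero. Hence generic real weights realize all of $S_N$ simultaneously, which is what ``form the set'' requires.
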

According to Theorem~\ref{theorem:cafe_spectrum}, CAFE can theoretically synthesize $\mathcal{O}(M^N 3^{N-1})$ frequencies.
It further selects task-relevant ones through the learned weights.

\subsection{CAFE\texorpdfstring{+}{+} with Fourier-Chebyshev Features}\label{Hybrid Encoding}
While CAFE significantly enhances the representational capacity of networks, the frequencies it can synthesize still depend on randomly initialized Fourier features.
Since most real-world signals concentrate their energy in low-frequency regions~\cite{field1987relations}, such randomness may fail to adequately cover the essential low-frequency bases, potentially forcing the network to use excessive high-frequency bases to compensate for missing low-frequency structures.
Moreover, Fourier bases are not inherently efficient for modeling smooth low-frequency structures~\cite{boyd2001chebyshev} and may introduce noise in these regions~\cite{landgraf2022pins, yuce2022structured}.
To provide a more stable and expressive low-frequency representation, we introduce the Chebyshev features based on Chebyshev polynomials.

\begin{figure*}[htbp]
    \centering
    \includegraphics[width=0.96\linewidth]{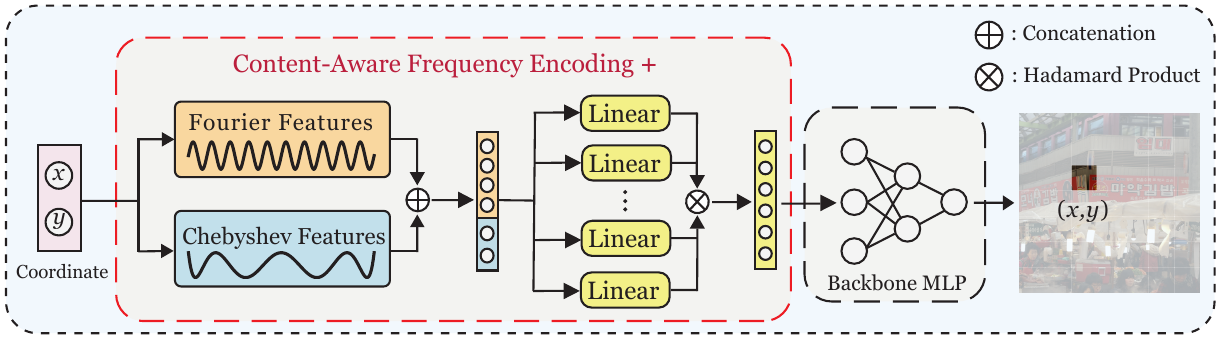}
    \vspace{-0.5em}
    \caption{Overall framework of CAFE+. CAFE+ maps the input coordinates separately to Chebyshev and Fourier features, which are then concatenated to form the inputs of multiple parallel linear layers. The outputs of these layers are combined via the Hadamard product to obtain the CAFE+ encoded features, which are subsequently fed into the backbone MLP to produce the corresponding values.}
    \label{fig:framework}
\end{figure*}

\begin{definition}[Chebyshev Features]\label{definition_CPE}
Given $\mathbf{x}\in\mathbb{R}^D$ and polynomial order $J$, the Chebyshev features mapping is defined as
\begin{equation}
\Phi_{\text{\rm CF}}(\mathbf{x})
=
\big[\,T_j(x_d)\,\big]_{d=1,\dots,D;\,j=0,\dots,J-1}
\;\in \mathbb{R}^{DJ},
\label{def:CPE}
\end{equation}
where $T_j$ denotes the $j$-th Chebyshev polynomial of the first kind, defined recursively as
\[
T_0(x)=1,\; T_1(x)=x,\;
T_{j+2}(x)=2xT_{j+1}(x)-T_{j}(x).
\]
\end{definition}

Chebyshev polynomials offer near-optimal approximation properties for smooth functions,
achieving the smallest maximum error among all polynomial bases of the same degree.
They also exhibit excellent numerical stability due to their orthogonality and bounded oscillation over $[-1, 1]$,
making them particularly suitable for representing low-frequency or smoothly varying structures~\cite{trefethen2013approximation}. These properties make Chebyshev features a natural and effective complement to Fourier features. More importantly, since Chebyshev polynomials also satisfy the product-to-sum identity
\(
T_p(x) T_q(x) = \tfrac{1}{2}\!\left[\,T_{p+q}(x) + T_{|p-q|}(x)\,\right],
\)
the CAFE framework and its analysis can be extended to the Chebyshev features. 
In other words, CAFE can adaptively select from higher-order basis functions synthesized within the Chebyshev domain. 
By incorporating Fourier-Chebyshev features, we obtain an enhanced version of CAFE, denoted as CAFE+, and the overall framework is shown in Fig.~\ref{fig:framework}. 

The complete formulation of CAFE+ is given as follows.
Given a coordinate $\mathbf{x} \in \mathbb{R}^d$, the CAFE with combined features embedding is as 
\begin{equation*}
\Psi(\mathbf{x} ) = \bigodot_{i=1}^{N}\{\mathbf{W}_i\big[\Phi_{\text{FF}}(\mathbf{x}), \; \Phi_{\text{CF}}(\mathbf{x}) \big] +\mathbf{b}_i\},
\label{eq:phi_combined}
\end{equation*}
which is fed into the MLP to produce the predicted value. 

Fig.~\ref{fig:neural_vision} presents the results of masking either the Chebyshev or Fourier features during inference. 
The results demonstrate that the two components play complementary roles in signal representation: 
Chebyshev features provide stable global structures, while Fourier features capture fine high-frequency details. Moreover, we also establish a theorem for the Chebyshev features analogous to Theorem~\ref{theorem1} 
to further complete our theoretical analysis.

\begin{theorem}\label{theorem2}
Let $f_{\boldsymbol{\theta}}: \mathbb{R} \to \mathbb{R}$ be an INR of the form of Eq.~\eqref{eq:INR} with  
$\rho(z) = \sum_{k=0}^{K} \alpha_k z^k$. The encoding is constructed using Chebyshev features as
\[
\gamma(x) = 
\begin{bmatrix}
T_{j_1}(x), T_{j_2}(x), \dots , T_{j_{M}}(x)
\end{bmatrix}^{\!\top}.
\]
Here the indices $j_1,j_2,\dots,j_M$ represent arbitrary non-negative orders of the Chebyshev polynomials.
This architecture can only represent
functions of the form
{
\small
\begin{equation*}
f_{\boldsymbol{\theta}}(x)
= \sum_{j \in \mathcal{H}(J)} c_j T_j(x),
\end{equation*}
}where $c_{j} \in \mathbb{R}$ are coefficients determined by the network parameters and the set of resulting orders $\mathcal{H}(J)$ is given by
{\small
\begin{equation*}
\mathcal{H}(J)
\subseteq
\Bigl\{
\,\bigl|\,\sum_{t=1}^{M} s_t j_t\,\bigr|
~\Bigm|~
s_t \in \mathbb{Z},~
\sum_{t=1}^{M} |s_t| \le K^{L-1}
\Bigr\}.
\end{equation*}
}
\end{theorem}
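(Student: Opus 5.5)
The plan is to reduce Theorem~\ref{theorem2} to the Fourier case of Theorem~\ref{theorem1} through the substitution $x=\cos\vartheta$. We use the identity $T_j(\cos\vartheta)=\cos(j\vartheta)$, and we will carry out the argument for $x\in[-1,1]$. Once a polynomial identity holds there, it holds on all of $\mathbb{R}$.

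Under this substitution the encoding becomes
\[
\gamma(\cos\vartheta)=[\cos(j_1\vartheta),\dots,\cos(j_M\vartheta)]^\top=\sin(\bOmega\vartheta+\bphi),
\]
with frequency matrix $\bOmega=[j_1,\dots,j_M]^\top\in\mathbb{R}^{M\times 1}$ and all phases equal to $\pi/2$. This is exactly the setting of Theorem~\ref{theorem1} with $D=1$. Hence $g(\vartheta):=f_{\boldsymbol\theta}(\cos\vartheta)$ is a finite sum of terms $c\,\sin(\omega'\vartheta+\phi)$ with
\[
\omega'=\sum_{t=1}^{M}s_t j_t,\qquad s_t\in\mathbb{Z},\qquad \sum_t|s_t|\le K^{L-1}.
\]

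To make the argument self-contained, we could instead rerun the induction of Theorem~\ref{theorem1} directly. A preactivation at layer $\ell$ is a linear combination of cosines of admissible frequencies. Raising it to the power $k\le K$ and expanding with $\cos a\cos b=\tfrac12[\cos(a+b)+\cos(a-b)]$ multiplies the budget $\sum|s_t|$ by at most $K$. After $L-1$ activations the budget is therefore $K^{L-1}$, and the final linear layer does not enlarge the frequency set.

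The step that needs care, and the only one that departs from Theorem~\ref{theorem1}, is showing that only cosines appear and converting back to Chebyshev polynomials. The encoding is even in $\vartheta$. Every layer applies affine maps and polynomials entrywise to an even function, so the output remains even. An even trigonometric polynomial is a combination of $\cos(\omega'\vartheta)$ alone, since any sine terms must cancel.

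Using $\cos(\omega'\vartheta)=\cos(|\omega'|\vartheta)$, each term equals $T_{|\omega'|}(\cos\vartheta)$. Note that $|\omega'|$ is a non-negative integer because the $j_t$ and $s_t$ are integers. Collecting terms with the same order gives
\[
f_{\boldsymbol\theta}(x)=\sum_{j\in\mathcal{H}(J)}c_jT_j(x)\quad\text{for } x\in[-1,1],
\]
with $\mathcal{H}(J)$ as stated. Both sides are polynomials in $x$, so the identity extends to all of $\mathbb{R}$.

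Alternatively, the whole proof can be run directly in $x$ with the identity $T_pT_q=\tfrac12[T_{p+q}+T_{|p-q|}]$. Here the absolute values in the index set arise because $T_{-n}=T_n$.
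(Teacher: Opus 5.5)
Your proof is correct, but it takes a different route from the paper.

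The paper works entirely in the variable $x$. It proves a product lemma from $T_pT_q=\tfrac12[T_{p+q}+T_{|p-q|}]$. It then proves a power lemma showing that $(\sum_{j\in J}\beta_jT_j)^k$ only contains orders $|\sum_j c_j j|$ with $\sum_j|c_j|\le k$. Finally it inducts over the layers, multiplying the budget by $K$ at each activation. In effect it re-derives the argument behind Theorem~\ref{theorem1} in the Chebyshev basis.

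You instead note that under $x=\cos\vartheta$ the Chebyshev encoding is literally a Fourier encoding with integer frequencies $j_t$ and phases $\pi/2$. Theorem~\ref{theorem1} then applies as a black box. The price is two extra steps the paper does not need:
\begin{itemize}
\item The parity argument that removes the $\sin(\omega'\vartheta)$ components and the phases. It is cleanest as $g(\vartheta)=\tfrac12[g(\vartheta)+g(-\vartheta)]=\sum_{\omega'}c_{\omega'}\sin\phi_{\omega'}\cos(\omega'\vartheta)$, which keeps every frequency in the admissible set and needs no linear-independence argument.
\item The extension from $[-1,1]$ to $\mathbb{R}$ via the polynomial identity, which you handle correctly.
\end{itemize}

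The paper's route is self-contained and holds on all of $\mathbb{R}$ at once, because the Chebyshev product formula is a polynomial identity. Yours is shorter and avoids re-justifying how budgets compose across layers. (The paper applies its power lemma with base set $\Omega_{\ell-1}$ rather than $J$ and states the resulting $k\cdot C$ bound without detail.) Yours also makes explicit that Theorem~\ref{theorem2} is Theorem~\ref{theorem1} in the angle variable $\vartheta=\arccos x$.

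Your closing remark about running the induction with $T_pT_q$ directly is exactly the paper's proof. In your self-contained cosine induction the parity step is superfluous, since products of cosines are already sums of cosines.
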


Theorem~\ref{theorem2} proves that a backbone MLP operating on Chebyshev features
maintains this representation, ensuring that the network can effectively use Chebyshev polynomials to fit the low frequency of signals. The proof is provided in the supplementary material.
\begin{figure}[tbp]
    \centering
    \begin{subfigure}[b]{0.32\columnwidth}
        \includegraphics[width=\linewidth]{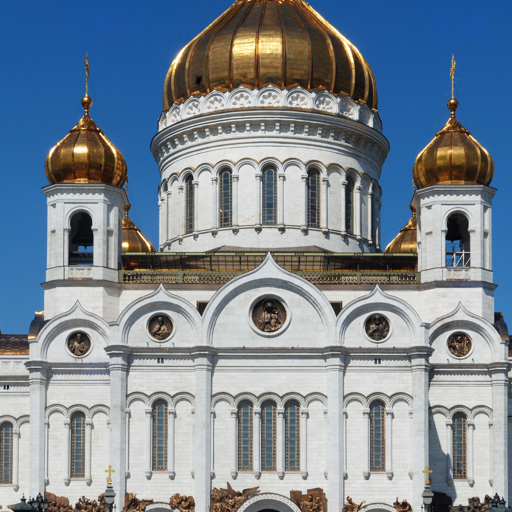}
        \caption{GT}
        \label{fig:vision_gt}
    \end{subfigure}
    \begin{subfigure}[b]{0.32\columnwidth}
        \includegraphics[width=\linewidth]{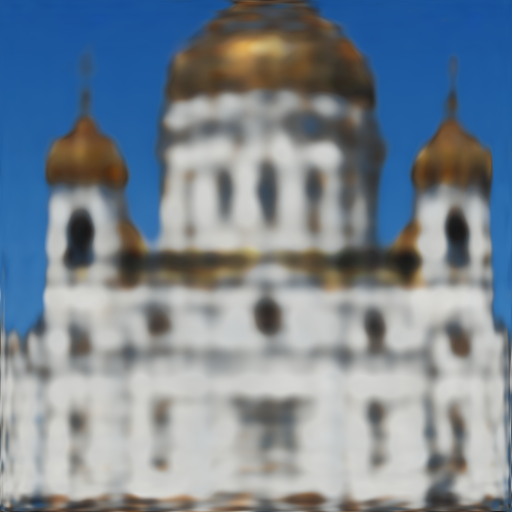}
        \caption{CF-only}
        \label{fig:vision_ce}
    \end{subfigure}
    \begin{subfigure}[b]{0.32\columnwidth}
        \includegraphics[width=\linewidth]{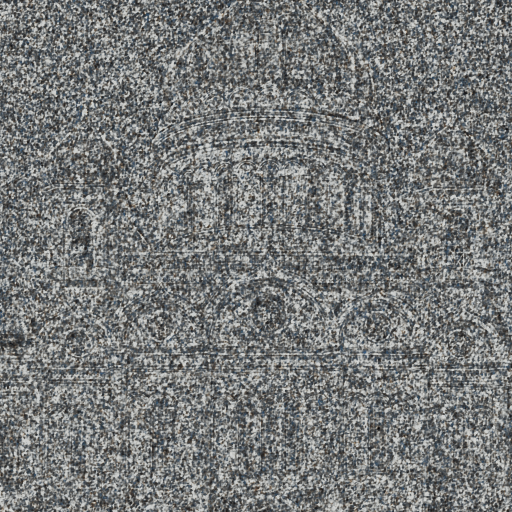}
        \caption{FF-only}
        \label{fig:vision_rff}
    \end{subfigure}
    \vspace{-0.5em}
    \caption{
Illustration of the complementary roles of Chebyshev and Fourier features. The network is first trained with the CAFE+, 
and during inference, we deactivate the neurons corresponding to either 
$\Phi_{\text{CF}}(\mathbf{x})$ or $\Phi_{\text{FF}}(\mathbf{x})$ by setting them to zero 
to isolate their individual effects. 
\textit{CF-only} corresponds to using $[\,\mathbf{0},\, \Phi_{\text{CF}}(\mathbf{x})\,]$
while \textit{FF-only} corresponds to $[\,\Phi_{\text{FF}}(\mathbf{x}),\, \mathbf{0}\,]$. }
\label{fig:neural_vision}
\end{figure}

\section{Experiments}

\begin{figure*}[htbp]
    \resizebox{\textwidth}{!}{
    \centering
    \begin{tabular}{ccccccc} 
        \includegraphics[width=0.16\textwidth]{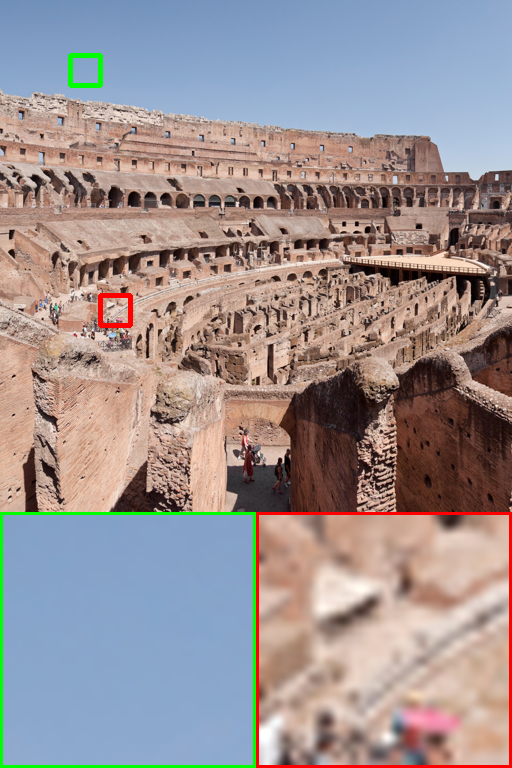} &
        \includegraphics[width=0.16\textwidth]{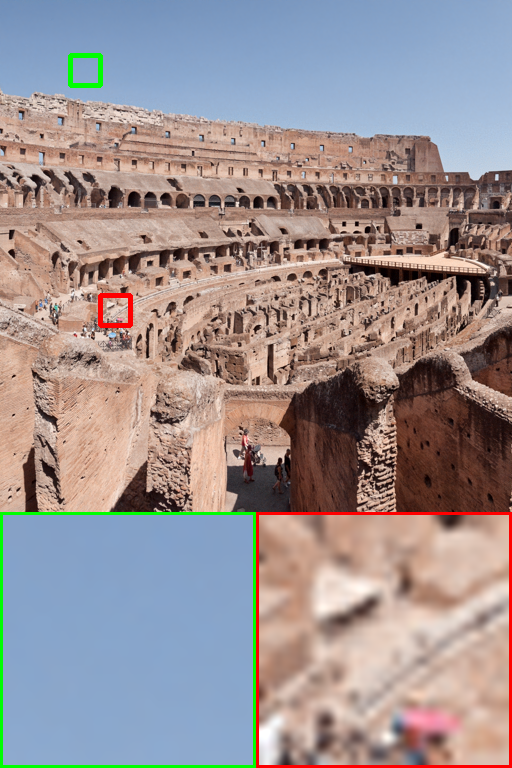} &
        \includegraphics[width=0.16\textwidth]{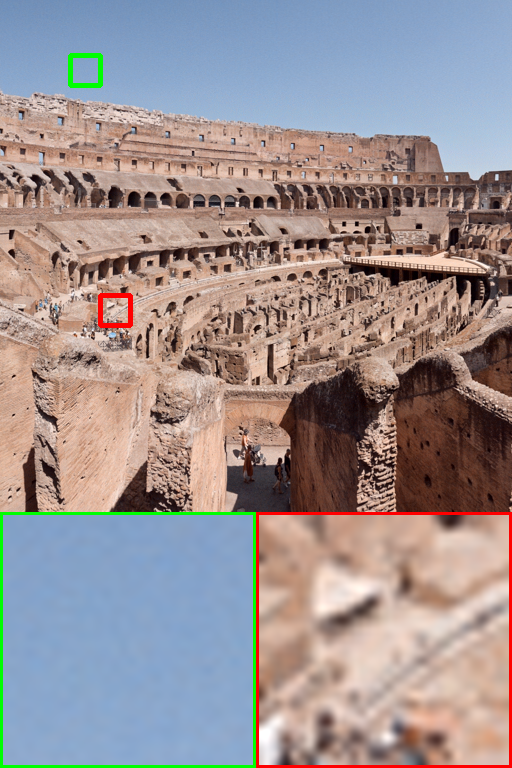} &
        \includegraphics[width=0.16\textwidth]{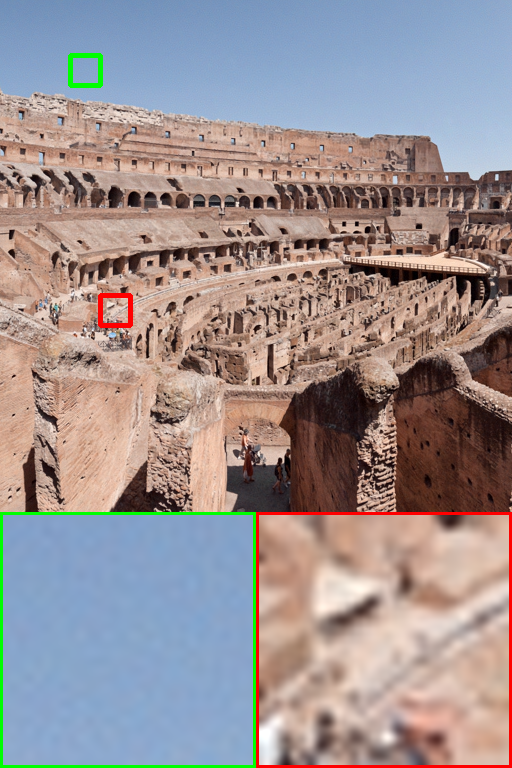} &
        \includegraphics[width=0.16\textwidth]{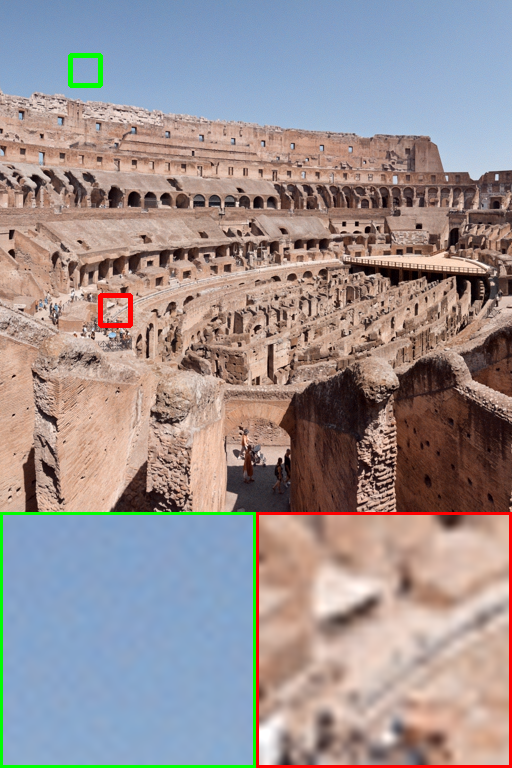} &
        \includegraphics[width=0.16\textwidth]{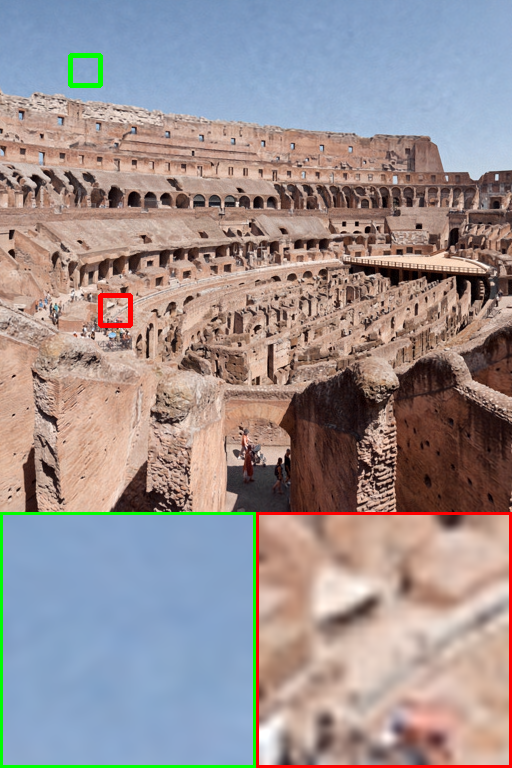} &
        \includegraphics[width=0.16\textwidth]{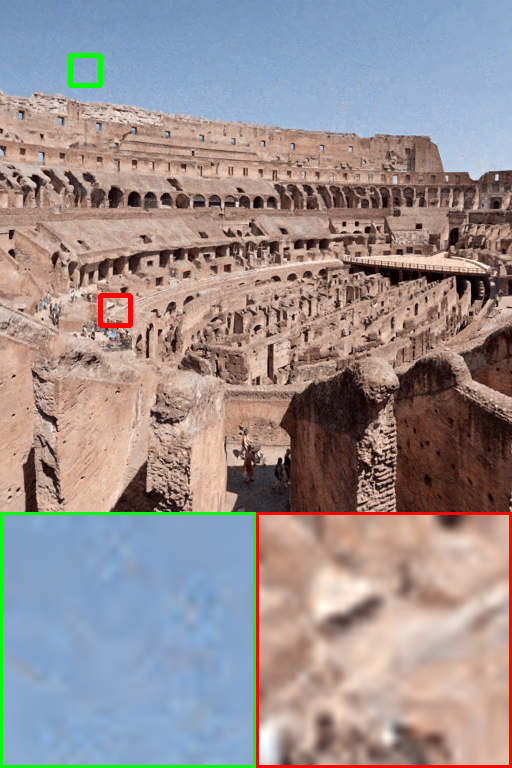} 
        \\
        GT(PSNR) & \textbf{Ours(42.67)} & SL$^2$A(40.85) & FINER(39.70) & SCONE(40.04) & SIREN(36.94) & WIRE(30.67) \\
    \end{tabular}
    }
    \caption{Visualization results of the 2D image fitting task on the D2K4 dataset.}
    \label{fig:2d_img_fiting_vis}
    \vspace{-1em}
\end{figure*}

In this section, we demonstrate the effectiveness of CAFE+ across multiple tasks. To this end, we evaluate our method on the fitting task using several representative approaches, including SIREN~\cite{sitzmann2020implicit}, WIRE~\cite{saragadam2023wire}, FINER~\cite{liu2024finer}, SCONE~\cite{li2024learning}, and SL$^2$A~\cite{rezaeian2025sl2a}. Since SCONE and SL$^2$A have not released their NeRF~\cite{mildenhall2021nerf} implementations, we additionally include GAUSS~\cite{ramasinghe2022beyond} for comparison. All baseline implementations are based on the official code released by the authors. For the fitting task, we employ RFF with a fixed scale parameter of 30, while for NeRF experiments, we use PE. 
All methods are optimized using Adam~\cite{kingma2014adam} with appropriately selected learning rates and scheduling strategies depending on the task. All experiments are conducted on a workstation equipped with an Intel Core i9-12900K CPU, 62 GB RAM, and an NVIDIA RTX 4090 GPU. Additional experiments, implementation details, and visual results are provided in the supplementary material.

\subsection{Main results}
\noindent 
{\bf 2D Image Fitting. }
For the task of 2D image fitting, the goal is to learn a continuous 2D function 
\( f: \mathbb{R}^2 \rightarrow \mathbb{R}^3 \) that maps pixel coordinates to RGB color values.  
In our experiments, we select 8 natural images from the DIV2K dataset~\cite{agustsson2017ntire} provided by the FINER codebase\footnote{\url{https://github.com/liuzhen0212/FINER}}{}, 
and denote them as D2K0--D2K7. 
Both our method and the baselines are trained for 6,000 iterations. We also report a larger variant with training time comparable to the fastest baseline, denoted as Ours (L).
The reconstruction quality is quantitatively evaluated using the PSNR metric~\cite{wang2004image}. In Fig.~\ref{fig:2d_img_fiting_vis}, the fitting results are visualized with magnified views of both low-frequency and high-frequency regions. Compared with the baselines, our method captures high-frequency details more effectively while significantly suppressing noise in low-frequency regions. Quantitative results on the remaining images, as reported in Table~\ref{tab:2d_fitting_comparison}, show that our method achieves the best performance.

\begin{table}[tbp]
\centering
\caption{Quantitative comparison of different methods on the 2D image fitting task in terms of PSNR (dB), parameter count, and training time.}
\scriptsize
\setlength{\tabcolsep}{3pt}
\resizebox{\linewidth}{!}{
\begin{tabular}{lccccccc|cc}
\toprule
\multicolumn{8}{c|}{PSNR (dB)} & Param & Time(s) \\
\midrule
Method & D2K0 & D2K1 & D2K2 & D2K3 & D2K5 & D2K6 & D2K7 &  &  \\
\midrule
SIREN 
& 33.48 
& 40.08 
& 36.68 
& 40.39 
& 34.41 
& 38.78 
& 37.58 
& 0.20M 
& \underline{148.16} \\

WIRE  
& 27.99 
& 34.44 
& 31.12 
& 36.06 
& 28.80 
& 32.50 
& 33.50 
& 0.07M
& 373.60  \\

SCONE 
& 35.32 
& 41.88 
& 38.56 
& \underline{42.27} 
& \underline{36.87} 
& \textbf{41.78} 
& 38.92 
& 0.19M 
& 380.49  \\

FINER 
& 35.93 
& \underline{42.33} 
& 39.44 
& 41.99 
& 36.78 
& 41.08 
& 39.73 
& 0.20M 
& 203.51 \\

SL$^2$A 
& \underline{36.22} 
& 41.91 
& \underline{39.87} 
& 41.87 
& \textbf{37.03} 
& \underline{41.11} 
& \underline{40.23} 
& 0.46M 
& 270.50 \\

Ours
& \textbf{36.92} 
& \textbf{42.57} 
& \textbf{42.13} 
& \textbf{42.92}
& 36.64 
& 41.00 
& \textbf{42.61} 
& 0.22M 
& \textbf{107.83} \\
\midrule
Ours (L)  
& 39.47 
& 44.34 
& 44.18 
& 44.69 
& 39.15 
& 42.63 
& 45.01 
& 0.33M 
& 149.78 \\
\bottomrule
\end{tabular}
}
\label{tab:2d_fitting_comparison}
\end{table}

\noindent 
{\bf 3D Shape Representation. }
We investigate the task of 3D shape modeling~\cite{li2023neuralangelo} based on INRs. 
The goal is to learn a continuous mapping function $f: \mathbb{R}^3 \rightarrow \mathbb{R}^1$, where $f(\mathbf{x})$ outputs the signed distance value for a given 3D point $\mathbf{x}$. 
We select five shapes from a public dataset\footnote{\url{https://graphics.stanford.edu/data/3Dscanrep/}}, including Thai Statue, Lucy, Armadillo, Dragon, and Buddha. 
For training, all voxels are randomly shuffled in each iteration and processed in mini-batches of up to 200,000 points, for a total of 200 iterations. 
Intersection-over-union (IoU) metrics~\cite{mescheder2019occupancy} are computed to quantitatively compare different methods. We visualize the reconstructed 3D shape of the Thai Statue for qualitative analysis (Fig.~\ref{fig:3D_shape}) and summarize the quantitative results in Table~\ref{tab:3d_shape_comparison}, showing strong performance across all compared approaches.

\begin{table}[tbp]
\centering
\caption{Quantitative comparison of different methods on the 3D shape representation task in terms of IoU (\%) and training time.}
\setlength{\tabcolsep}{5pt}
\resizebox{\linewidth}{!}{
\begin{tabular}{lccccc|c}
\toprule
\multicolumn{6}{c|}{IoU (\%)} & Time (s) \\
\midrule
Method & Thai statue & Lucy & Armadillo & Dragon & Buddha &  \\
\midrule
SIREN  & 0.9797 & 0.9866 & 0.9933 & 0.9940 & 0.9934 & \textbf{826} \\
WIRE   & 0.9887 & 0.9930 & 0.9861 & 0.9934 & 0.9945 & 1,824 \\
SCONE  & 0.9875 & 0.9905 & 0.9930 & 0.9918 & 0.9924 & 1,662 \\
FINER  & 0.9893 & 0.9929 & 0.9950 & 0.9959 & 0.9958 & 1,056 \\
SL$^2$A  & \underline{0.9987} & \underline{0.9992} & \underline{0.9993} & \underline{0.9991} & \underline{0.9994} & 1,282 \\
Ours   & \textbf{0.9992} & \textbf{0.9995} & \textbf{0.9996} & \textbf{0.9994} & \textbf{0.9995} & \underline{860} \\
\bottomrule
\end{tabular}
}
\label{tab:3d_shape_comparison}
\vspace{-1em}
\end{table}

{
\setlength{\tabcolsep}{1pt}
\begin{figure*}[htbp]
    \centering
    \resizebox{\linewidth}{!}{
    \begin{tabular}{ccccccc} %
        \includegraphics[width=0.16\textwidth]{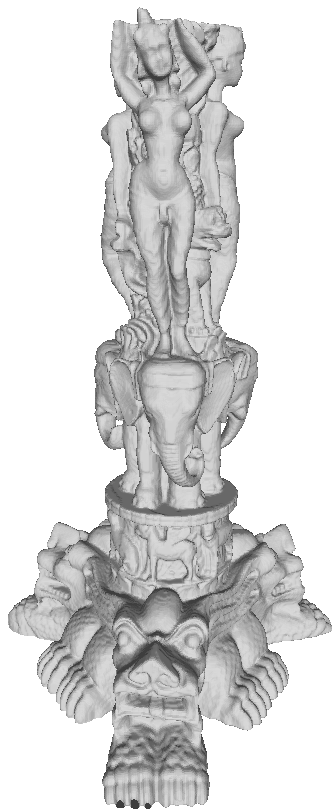} &
        \includegraphics[width=0.16\textwidth]{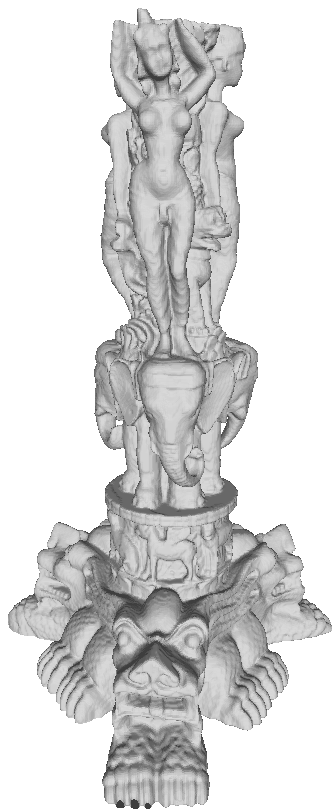} &
        \includegraphics[width=0.16\textwidth]{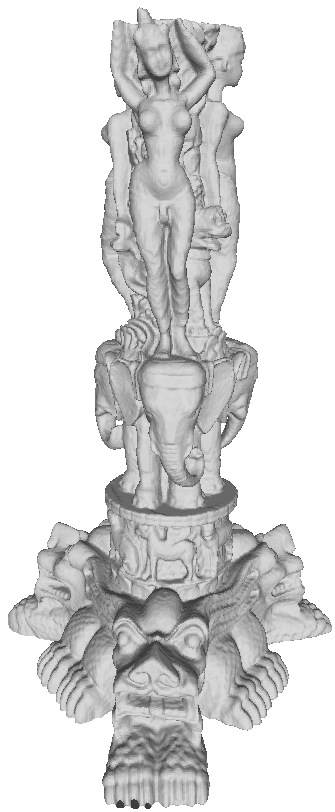} &
        \includegraphics[width=0.16\textwidth]{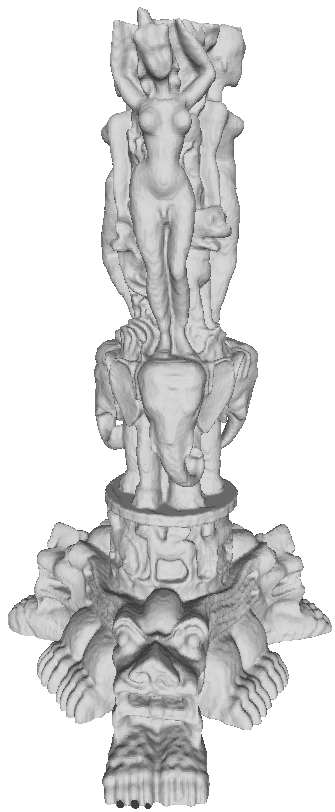} &
        \includegraphics[width=0.16\textwidth]{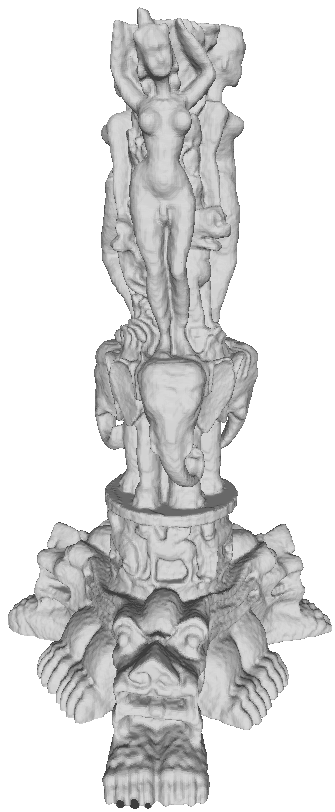} &
        \includegraphics[width=0.16\textwidth]{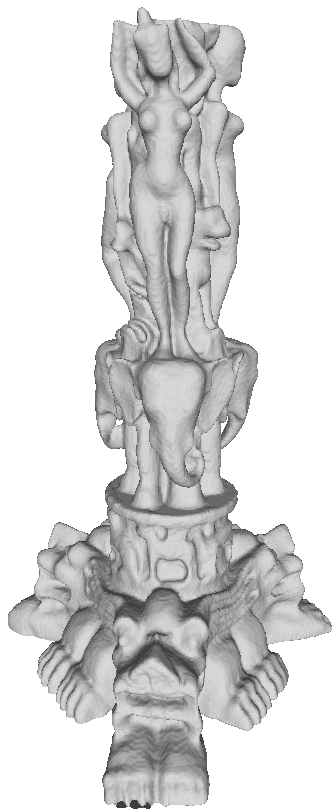} &
        \includegraphics[width=0.16\textwidth]{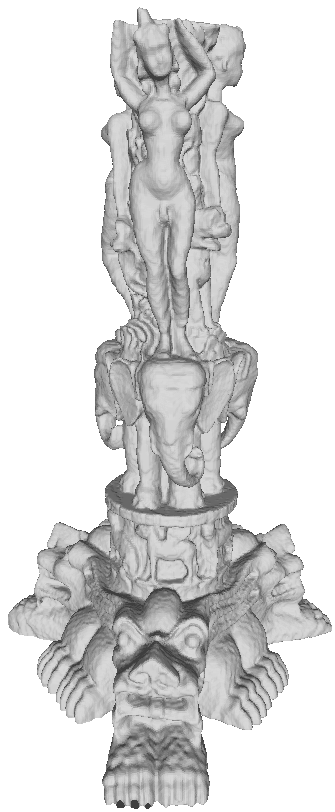} \\
        GT(IOU) & \textbf{Ours(0.9992)} & SL$^2$A(0.9987) & FINER(0.9893) & SCONE(0.9875) & SIREN(0.9797) & WIRE(0.9887) \\
    \end{tabular}
    }
    \caption{Visualization of the 3D shape representation task.
}
\label{fig:3D_shape}
\vspace{-1em}
\end{figure*}
}

\noindent 
{\bf Learning Neural Radiance Fields. }
In novel view synthesis using INRs, each query consists of a 3D spatial coordinate $\mathbf{x} \in \mathbb{R}^3$ and a 2D viewing direction $\mathbf{d} \in \mathbb{R}^2$, which together form a 5D input vector. The network learns a mapping $f: \mathbb{R}^5 \rightarrow \mathbb{R}^4$ that outputs the corresponding RGB color $\mathbf{c} \in \mathbb{R}^3$ and density $\sigma \in \mathbb{R}$ at each point. The final image is obtained by integrating these predicted values along camera rays via differentiable volume rendering~\cite{mildenhall2021nerf}. All methods follow the experimental settings of FINER~\cite{liu2024finer}, using only 25 images for training and employing mixed-precision training.
 We evaluate our framework on the Blender dataset~\cite{mildenhall2021nerf}, specifically on the \textit{Ship}, \textit{Lego}, \textit{Hotdog} and \textit{Drums} scenes. 
Quantitative results are reported in Table~\ref{tab:5d_nerf}, where our method achieves the best performance on three out of four datasets and delivers comparable results on the Drums scene.
We visualize the reconstructed results on the Lego scene in Fig.~\ref{vis:nerf}, where our method shows clear advantages in preserving high-frequency details compared to prior approaches.

\begin{table}[tbp]
\centering
\small
\caption{Quantitative comparison of different methods on the NeRF task in terms of PSNR (dB) and training time.}
\scriptsize
\setlength{\tabcolsep}{5pt}
\resizebox{\linewidth}{!}{
\begin{tabular}{lcccc|c}
\toprule
\multicolumn{5}{c|}{PSNR (dB)} & Time (s) \\
\midrule
Method & Ship & Drums & Lego & Hotdog &  \\
\midrule
Gauss  & 21.24 & 22.46 & 26.40 & 31.28 & 685.2 \\
SIREN  & 21.79 & 25.55 & 29.94 & \underline{33.91} & \underline{403.5} \\
WIRE   & 21.33 & 24.57 & 29.22 & 32.78 & 973.8 \\
FINER  & \underline{22.02} & \textbf{25.66} & \underline{30.78} & 33.80 & 437.0 \\
Ours   & \textbf{23.12} & \underline{25.59} & \textbf{31.86} & \textbf{34.65} & \textbf{363.2} \\
\bottomrule
\end{tabular}
}
\label{tab:5d_nerf}
\end{table}

\begin{table}[t]
\centering
\small
\caption{Ablation study on the effect of CAFE and Chebyshev features components. 
PSNR (dB) is reported on five 2D test images.}
\label{tab:ablation_hybrid_and_high-order}
\setlength{\tabcolsep}{4pt} %
\begin{tabular}{cc|S[table-format=2.2] S[table-format=2.2] S[table-format=2.2] S[table-format=2.2] S[table-format=2.2]}
\toprule
\textbf{CAFE} & \textbf{Chebyshev} & {D2K0} & {D2K1} & {D2K2} & {D2K3} & {D2K4} \\
\midrule
\ding{55}  & \ding{55}  & 26.18 & 30.19 & 30.98 & 29.46 & 30.92 \\
\ding{55}  & \ding{51}  & 33.00 & 37.23 & 35.60 & 36.97 & 35.91 \\
\ding{51}  & \ding{55}  & 34.87 & 37.36 & 40.42 & 40.15 & 40.52 \\
\ding{51}  & \ding{51}  & \textbf{39.47} & \textbf{44.34} & \textbf{44.18} & \textbf{44.69} & \textbf{44.97} \\
\bottomrule
\end{tabular}
\end{table}

\subsection{Ablation study}

\noindent 
{\bf Effect of CAFE and Chebyshev Features. }
We conduct comparative experiments to analyze the effectiveness of CAFE and the Chebyshev features, with all settings trained under identical parameter counts and configurations.
In Table~\ref{tab:ablation_hybrid_and_high-order}, the standard RFF baseline corresponds to the absence of both components, while the configuration using only Chebyshev features feeds Fourier-Chebyshev encodings into the MLP.
The results clearly demonstrate the crucial roles of both CAFE and the Chebyshev features in enhancing representation quality.

\begin{figure}[t]
    \centering
    \begin{tikzpicture}
        \begin{axis}[
            xlabel={Number of Parallel Linear Layers},
            ylabel={PSNR (dB)},
            ymin=40, ymax=52,
            xtick={2,3,4,5,6,7,8},
            ytick={40,42,44,46,48,50,52},
            axis y line*=left,
            axis x line*=bottom,
            width=0.65\linewidth,
            height=0.42\linewidth,
            scale only axis,
            grid=major,
            grid style={dashed, gray!15},
            tick label style={font=\footnotesize},
            label style={font=\footnotesize},
            minor tick num=1,
            axis line style={semithick, blue!60!black},
            legend style={
                at={(0.5,1.05)},   
                anchor=south,
                legend columns=2,
                column sep=0.5em,    
                font=\footnotesize,
                draw=none,
                fill=white,
                inner sep=3pt,
                rounded corners=3pt
            }
        ]
            \addplot[
                color=blue!70!black, semithick, 
                mark=*, mark size=2pt, mark options={fill=blue!70!black, opacity=0.8}
            ] coordinates {
                (2,41.44) (3,44.349) (4,47.09) (5,49.38) (6,51.32) (7,51.42) (8,51.37)
            };
            \addlegendentry{PSNR (dB)}

            \addlegendimage{
                color=orange!70!black, semithick, 
                mark=square*, mark size=2pt, mark options={fill=orange!70!black, opacity=0.8}
            }
            \addlegendentry{Time (s)}
        \end{axis}

        \begin{axis}[
            ylabel={Time (s)},
            ymin=120, ymax=320,
            ytick={120,160,200,240,280,320},
            axis y line*=right,
            axis x line=none,
            width=0.65\linewidth,
            height=0.42\linewidth,
            scale only axis,
            tick label style={font=\footnotesize},
            label style={font=\footnotesize},
            axis line style={semithick, orange!70!black},
        ]
            \addplot[
                color=orange!70!black, semithick, 
                mark=square*, mark size=2pt, mark options={fill=orange!70!black, opacity=0.8}
            ] coordinates {
                (2,127.61) (3,149.78) (4,185.51) (5,214.53) (6,244.05) (7,273.89) (8,302.57)
            };
        \end{axis}
    \end{tikzpicture}
    \caption{Effect of the number of parallel linear layers on PSNR and training time for the 2D image fitting task.}
    \label{fig:order_psnr_time}
\end{figure}
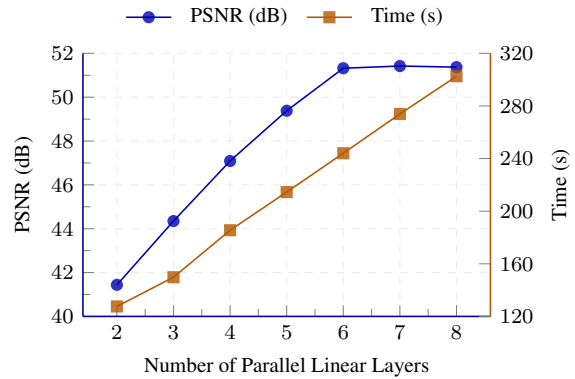

{
\setlength{\tabcolsep}{1pt}
\begin{figure*}[htbp]
    \centering
    \resizebox{\linewidth}{!}{
    \begin{tabular}{cccccc}

        \includegraphics[width=0.15\textwidth]{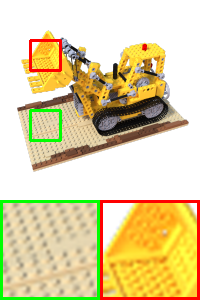} &
        \includegraphics[width=0.15\textwidth]{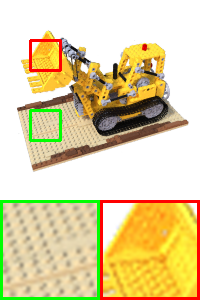} &
        \includegraphics[width=0.15\textwidth]{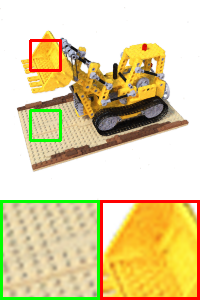} &
        \includegraphics[width=0.15\textwidth]{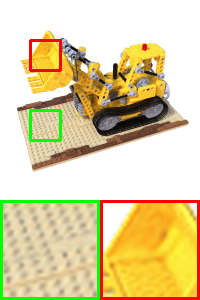} &
        \includegraphics[width=0.15\textwidth]{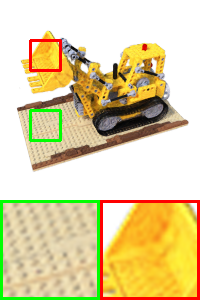} &
        \includegraphics[width=0.15\textwidth]{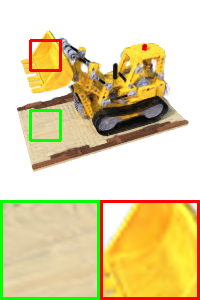} \\
        GT(PSNR) & \textbf{Ours(31.86)} & FINER(30.78) & SIREN(29.94) & WIRE(29.22) & Gauss(26.40) \\
    \end{tabular}
    }
    
    \caption{Visualization of the neural radiance fields task.
    \label{vis:nerf}
}
\end{figure*}
}


\noindent 
{\bf Effect of the Number of Parallel Linear Layers. }
We conduct experiments on the DIV1 dataset to investigate the relationship between the number of parallel linear layers and the fitting capability (see Fig.~\ref{fig:order_psnr_time}). The results show that as the number of layers increases, the performance of our method steadily improves, and gradually saturates when the layer count becomes sufficiently large.
 This phenomenon can be attributed to the fact that increasing the number of layers introduces a richer set of basis functions, thereby enhancing the fitting performance. However, once the provided basis functions sufficiently cover the required frequency range, further increasing the number of layers yields little additional gain. Moreover, the computational cost grows approximately linearly with order, indicating that the CAFE+ introduces only a modest and predictable overhead.

\noindent 
{\bf Hyperparameter Sensitivity Analysis. }
In Fig.~\ref{fig:psnr_lines}, we analyze the influence of two key hyperparameters in CAFE+: the Fourier features scale $s$ in the RFF, and the polynomial order $J$ of the Chebyshev features in Definition~\ref{definition_CPE}.
For the scale analysis, we fix the polynomial order at $J=32$. Our framework achieves strong performance when the scale lies between 20 and 50, and even when the scale is set to slightly lower or higher values, the performance remains within an acceptable range, demonstrating notable robustness.
For the order analysis, we fix the scale at $s=30$ and vary $J$ from 0 to 64. The results show that once $J$ exceeds 16, the model already provides a stable set of low-frequency bases and exhibits strong robustness to further increases in order.

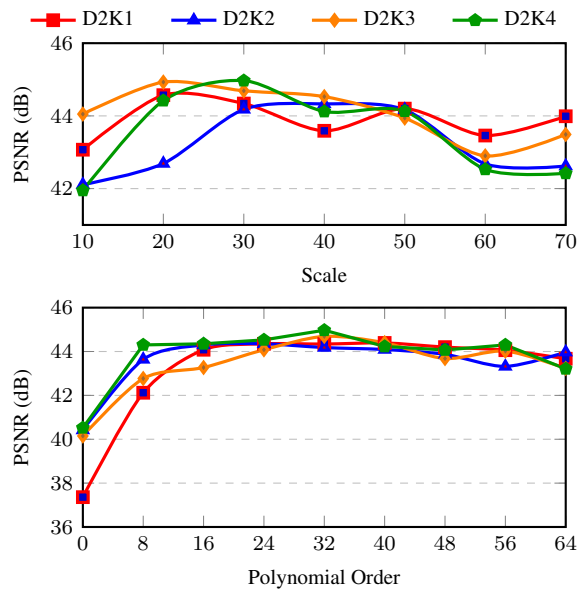
\begin{figure}[!t]
\centering
\begin{tikzpicture}
\begin{axis}[
    hide axis,                    %
    xmin=0, xmax=1, ymin=0, ymax=1,
    legend columns=4,             %
    legend style={
        at={(0.5,1.1)},           %
        anchor=south,             %
        font=\footnotesize,
        draw=none,
        /tikz/every even column/.append style={column sep=0.4cm
        }
    }
]

\addlegendimage{color=red, mark=square*, line width=1.2pt}
\addlegendentry{D2K1}
\addlegendimage{color=blue, mark=triangle*, line width=1.2pt}
\addlegendentry{D2K2}
\addlegendimage{color=orange, mark=diamond*, line width=1.2pt}
\addlegendentry{D2K3}
\addlegendimage{color=green!60!black, mark=pentagon*, line width=1.2pt}
\addlegendentry{D2K4}

\end{axis}
\end{tikzpicture}

\vspace{-0.2cm} 
\begin{tikzpicture}
 
\begin{axis}[
    width=0.96\linewidth,
    height=4.0cm,
    tick label style={font=\footnotesize},
    label style={font=\footnotesize},
    xlabel={Scale},
    ylabel={PSNR (dB)},
    xmin=10, xmax=70,
    ymin=41, ymax=46,
    xtick={10,20,30,40,50,60,70},
    ymajorgrids=true,
    grid style=dashed,
    thick
]

\addplot+[color=red, mark=square*, line width=1.2pt, smooth] coordinates {
    (10,43.07) (20,44.57) (30,44.34) (40,43.59) (50,44.20) (60,43.46) (70,43.99)
};

\addplot+[color=blue, mark=triangle*, line width=1.2pt, smooth] coordinates {
    (10,42.11) (20,42.69) (30,44.18) (40,44.33) (50,44.17) (60,42.68) (70,42.62)
};

\addplot+[color=orange, mark=diamond*, line width=1.2pt, smooth] coordinates {
    (10,44.05) (20,44.93) (30,44.69) (40,44.53) (50,43.94) (60,42.90) (70,43.49)
};

\addplot+[color=green!60!black, mark=pentagon*, line width=1.2pt, smooth] coordinates {
    (10,41.95) (20,44.42) (30,44.97) (40,44.12) (50,44.13) (60,42.53) (70,42.42)
};

\end{axis}

\begin{axis}[
    at={(0,-1.1cm)},  
    anchor=north west,
    width=0.96\linewidth,
    height=4.5cm,
    tick label style={font=\footnotesize},
    label style={font=\footnotesize},
    xlabel={Polynomial Order},
    ylabel={PSNR (dB)},
    xmin=0, xmax=64,
    ymin=36, ymax=46,
    xtick={0,8,16,24,32,40,48,56,64},
    ymajorgrids=true,
    grid style=dashed,
    thick
]

\addplot+[color=red, mark=square*, line width=1.2pt, smooth] coordinates {
    (0,37.36) (8,42.12) (16,44.08) (24,44.35) (32,44.34)
    (40,44.39) (48,44.20) (56,44.06) (64,43.67)
};

\addplot+[color=blue, mark=triangle*, line width=1.2pt, smooth] coordinates {
    (0,40.42) (8,43.64) (16,44.30) (24,44.35) (32,44.18)
    (40,44.09) (48,43.87) (56,43.32) (64,43.97)
};

\addplot+[color=orange, mark=diamond*, line width=1.2pt, smooth] coordinates {
    (0,40.15) (8,42.77) (16,43.27) (24,44.08) (32,44.69)
    (40,44.4) (48,43.68) (56,44.02) (64,43.23)
};

\addplot+[color=green!60!black, mark=pentagon*, line width=1.2pt] coordinates {
    (0,40.52) (8,44.30) (16,44.35) (24,44.53) (32,44.97)
    (40,44.23) (48,44.07) (56,44.3) (64,43.21)
};

\end{axis}

\end{tikzpicture}

\caption{PSNR variations across different Fourier features scales (top) and polynomial orders in the Chebyshev features (bottom).}
\label{fig:psnr_lines}
\end{figure}

\noindent 
{\bf Effect of Backbone MLP Depth. }
We analyze the effect of the backbone MLP depth on fitting performance. Table~\ref{tab:mlp_depth} presents the corresponding quantitative results. Notably, removing the MLP entirely leads to a substantial performance drop, highlighting the indispensable role of the backbone MLP in our framework. While a two-layer MLP outperforms a single-layer one, further increasing the depth provides only marginal gains.
This observation indicates that our encoding effectively externalizes most of the frequency composition process, thereby relieving the MLP from the burden of synthesizing target frequencies. 

\begin{table}[t]
\centering
\caption{Ablation study on the effect of backbone MLP depth. 
PSNR (dB) is reported for three test images.}
\label{tab:mlp_depth}
\small
\setlength{\tabcolsep}{8pt}
\begin{tabular}{c|ccc|c}
\toprule
MLP Layers & D2K5 & D2K6 & D2K7 & Avg. \\
\midrule
0 & 31.23 & 36.78 & 38.66 & 35.56 \\
1 & 37.78 & 42.31 & 42.99 & 41.03 \\
2 & 39.15 & 42.63 & 45.01 & 42.26 \\
3 & 39.36 & 42.49 & 44.67 & 42.17 \\
4 & 39.60 & 42.56 & 45.40 & 42.52 \\
\bottomrule
\end{tabular}

\end{table}

\section{Conclusion}

In this paper, we address the limitations of fixed stochastic Fourier bases and the inefficient implicit frequency composition in MLPs.
We propose CAFE, a novel encoding framework that performs content-aware frequency synthesis, greatly relieving the MLP from the burden of inefficient frequency generation.
Furthermore, we introduce Chebyshev features into CAFE to better model low-frequency components and enhance overall stability.
Extensive experiments demonstrate that our framework consistently outperforms existing methods.
In the future, we will explore integrating advanced activation functions with our framework and extend CAFE+ to a wider range of applications.

{
    \small
    \bibliographystyle{ieeenat_fullname}
    \bibliography{ref}

@inproceedings{chen2021learning,
  title={Learning continuous image representation with local implicit image function},
  author={Chen, Yinbo and Liu, Sifei and Wang, Xiaolong},
  booktitle={Proceedings of the Computer Vision and Pattern Recognition Conference},
  pages={8628--8638},
  year={2021}
}

@inproceedings{jiang2025hiif,
  title={HIIF: Hierarchical encoding based implicit image function for continuous super-resolution},
  author={Jiang, Yuxuan and Kwan, Ho Man and Peng, Tianhao and Gao, Ge and Zhang, Fan and Zhu, Xiaoqing and Sole, Joel and Bull, David},
  booktitle={Proceedings of the Computer Vision and Pattern Recognition Conference},
  pages={2289--2299},
  year={2025}
}

@inproceedings{strumpler2022implicit,
  title={Implicit neural representations for image compression},
  author={Str{\"u}mpler, Yannick and Postels, Janis and Yang, Ren and Gool, Luc Van and Tombari, Federico},
  booktitle={European Conference on Computer Vision},
  pages={74--91},
  year={2022},
  organization={Springer}
}

@inproceedings{han2025towards,
  title={Towards Lossless Implicit Neural Representation via Bit Plane Decomposition},
  author={Han, Woo Kyoung and Lee, Byeonghun and Cho, Hyunmin and Im, Sunghoon and Jin, Kyong Hwan},
  booktitle={Proceedings of the Computer Vision and Pattern Recognition Conference},
  pages={2269--2278},
  year={2025}
}

@inproceedings{jayasundara2025sinr,
  title={SINR: Sparsity Driven Compressed Implicit Neural Representations},
  author={Jayasundara, Dhananjaya and Rajagopalan, Sudarshan and Ranasinghe, Yasiru and Tran, Trac D and Patel, Vishal M},
  booktitle={Proceedings of the Computer Vision and Pattern Recognition Conference},
  pages={3061--3070},
  year={2025}
}

@ARTICLE{9606601,
  author={Sun, Yu and Liu, Jiaming and Xie, Mingyang and Wohlberg, Brendt and Kamilov, Ulugbek S.},
  journal={IEEE Transactions on Computational Imaging}, 
  title={CoIL: Coordinate-Based Internal Learning for Tomographic Imaging}, 
  year={2021},
  volume={7},
  number={},
  pages={1400-1412},
  doi={10.1109/TCI.2021.3125564}}

@inproceedings{najaf2024towards,
  title={Towards a sampling theory for implicit neural representations},
  author={Najaf, Mahrokh and Ongie, Gregory},
  booktitle={2024 58th Asilomar Conference on Signals, Systems, and Computers},
  pages={105--110},
  year={2024},
  organization={IEEE}
}

@article{mildenhall2021nerf,
  title={Nerf: Representing scenes as neural radiance fields for view synthesis},
  author={Mildenhall, Ben and Srinivasan, Pratul P and Tancik, Matthew and Barron, Jonathan T and Ramamoorthi, Ravi and Ng, Ren},
  journal={Communications of the ACM},
  volume={65},
  number={1},
  pages={99--106},
  year={2021},
  publisher={ACM New York, NY, USA}
}

@inproceedings{rahaman2019spectral,
  title={On the spectral bias of neural networks},
  author={Rahaman, Nasim and Baratin, Aristide and Arpit, Devansh and Draxler, Felix and Lin, Min and Hamprecht, Fred and Bengio, Yoshua and Courville, Aaron},
  booktitle={International Conference on Machine Learning},
  pages={5301--5310},
  year={2019},
}

@inproceedings{yuce2022structured,
  title={A structured dictionary perspective on implicit neural representations},
  author={Y{\"u}ce, Gizem and Ortiz-Jim{\'e}nez, Guillermo and Besbinar, Beril and Frossard, Pascal},
  booktitle={Proceedings of the IEEE/CVF Conference on Computer Vision and Pattern Recognition},
  pages={19228--19238},
  year={2022}
}

@article{sitzmann2020implicit,
  title={Implicit neural representations with periodic activation functions},
  author={Sitzmann, Vincent and Martel, Julien and Bergman, Alexander and Lindell, David and Wetzstein, Gordon},
  journal={Advances in Neural Information Processing Systems},
  volume={33},
  pages={7462--7473},
  year={2020}
}

@inproceedings{ramasinghe2022beyond,
  title={Beyond periodicity: Towards a unifying framework for activations in coordinate-mlps},
  author={Ramasinghe, Sameera and Lucey, Simon},
  booktitle={European Conference on Computer Vision},
  pages={142--158},
  year={2022},
  organization={Springer}
}

@inproceedings{saragadam2023wire,
  title={Wire: Wavelet implicit neural representations},
  author={Saragadam, Vishwanath and LeJeune, Daniel and Tan, Jasper and Balakrishnan, Guha and Veeraraghavan, Ashok and Baraniuk, Richard G},
  booktitle={Proceedings of the IEEE/CVF Conference on Computer Vision and Pattern Recognition},
  pages={18507--18516},
  year={2023}
}

@inproceedings{liu2024finer,
  title={Finer: Flexible spectral-bias tuning in implicit neural representation by variable-periodic activation functions},
  author={Liu, Zhen and Zhu, Hao and Zhang, Qi and Fu, Jingde and Deng, Weibing and Ma, Zhan and Guo, Yanwen and Cao, Xun},
  booktitle={Proceedings of the IEEE/CVF Conference on Computer Vision and Pattern Recognition},
  pages={2713--2722},
  year={2024}
}

@inproceedings{jayasundara2025pin,
  title={PIN: Prolate spheroidal wave function-based implicit neural representations},
  author={Jayasundara, Dhananjaya and Zhao, Heng and Labate, Demetrio and Patel, Vishal M},
  booktitle={The Thirteenth International Conference on Learning Representations},
  year={2025}
}

@article{tancik2020fourier,
  title={Fourier features let networks learn high frequency functions in low dimensional domains},
  author={Tancik, Matthew and Srinivasan, Pratul and Mildenhall, Ben and Fridovich-Keil, Sara and Raghavan, Nithin and Singhal, Utkarsh and Ramamoorthi, Ravi and Barron, Jonathan and Ng, Ren},
  journal={Advances in Neural Information Processing Systems},
  volume={33},
  pages={7537--7547},
  year={2020}
}

@inproceedings{lindell2022bacon,
  title={Bacon: Band-limited coordinate networks for multiscale scene representation},
  author={Lindell, David B and Van Veen, Dave and Park, Jeong Joon and Wetzstein, Gordon},
  booktitle={Proceedings of the IEEE/CVF Conference on Computer Vision and Pattern Recognition},
  pages={16252--16262},
  year={2022}
}

@article{martel2021acorn,
  title={Acorn: Adaptive coordinate networks for neural scene representation},
  author={Martel, Julien NP and Lindell, David B and Lin, Connor Z and Chan, Eric R and Monteiro, Marco and Wetzstein, Gordon},
  journal={arXiv preprint arXiv:2105.02788},
  year={2021}
}

@inproceedings{li2024learning,
  title={Learning spatially collaged fourier bases for implicit neural representation},
  author={Li, Jason Chun Lok and Liu, Chang and Huang, Binxiao and Wong, Ngai},
  booktitle={Proceedings of the AAAI Conference on Artificial Intelligence},
  volume={38},
  number={12},
  pages={13492--13499},
  year={2024}
}

@article{ma2025robustifying,
  title={Robustifying Fourier Features Embeddings for Implicit Neural Representations},
  author={Ma, Mingze and Zhu, Qingtian and Zhan, Yifan and Yin, Zhengwei and Wang, Hongjun and Zheng, Yinqiang},
  journal={arXiv preprint arXiv:2502.05482},
  year={2025}
}

@inproceedings{choy20163d,
  title={3d-r2n2: A unified approach for single and multi-view 3d object reconstruction},
  author={Choy, Christopher B and Xu, Danfei and Gwak, JunYoung and Chen, Kevin and Savarese, Silvio},
  booktitle={European Conference on Computer Vision},
  pages={628--644},
  year={2016},
  organization={Springer}
}

@inproceedings{riegler2017octnet,
  title={Octnet: Learning deep 3d representations at high resolutions},
  author={Riegler, Gernot and Osman Ulusoy, Ali and Geiger, Andreas},
  booktitle={Proceedings of the IEEE Conference on Computer Vision and Pattern Recognition},
  pages={3577--3586},
  year={2017}
}

@inproceedings{cao2023ciaosr,
  title={Ciaosr: Continuous implicit attention-in-attention network for arbitrary-scale image super-resolution},
  author={Cao, Jiezhang and Wang, Qin and Xian, Yongqin and Li, Yawei and Ni, Bingbing and Pi, Zhiming and Zhang, Kai and Zhang, Yulun and Timofte, Radu and Van Gool, Luc},
  booktitle={Proceedings of the IEEE/CVF Conference on Computer Vision and Pattern Recognition},
  pages={1796--1807},
  year={2023}
}

@inproceedings{li2023neuralangelo,
  title={Neuralangelo: High-fidelity neural surface reconstruction},
  author={Li, Zhaoshuo and M{\"u}ller, Thomas and Evans, Alex and Taylor, Russell H and Unberath, Mathias and Liu, Ming-Yu and Lin, Chen-Hsuan},
  booktitle={Proceedings of the IEEE/CVF Conference on Computer Vision and Pattern Recognition},
  pages={8456--8465},
  year={2023}
}

@inproceedings{kingma2014adam,
  title={Adam: A method for stochastic optimization},
  author={Kingma, Diederik P},
  booktitle ={International Conference on Learning Representations},
  year={2015}
}

@inproceedings{nair2010rectified,
  title={Rectified linear units improve restricted boltzmann machines},
  author={Nair, Vinod and Hinton, Geoffrey E},
  booktitle={Proceedings of the 27th International Conference on Machine Learning (ICML-10)},
  pages={807--814},
  year={2010}
}

@article{muller2022instant,
  title={Instant neural graphics primitives with a multiresolution hash encoding},
  author={M{\"u}ller, Thomas and Evans, Alex and Schied, Christoph and Keller, Alexander},
  journal={ACM Transactions On Graphics (TOG)},
  volume={41},
  number={4},
  pages={1--15},
  year={2022},
  publisher={ACM New York, NY, USA}
}

@inproceedings{zhao2025adaptive,
  title={Adaptive Wavelet-Positional Encoding for High-Frequency Information Learning in Implicit Neural Representation},
  author={Zhao, Hongxu and Gao, Zelin and Wang, Yue and Xiong, Rong and Zhang, Yu},
  booktitle={Proceedings of the AAAI Conference on Artificial Intelligence},
  volume={39},
  number={10},
  pages={10430--10438},
  year={2025}
}

@inproceedings{agustsson2017ntire,
  title={Ntire 2017 challenge on single image super-resolution: Dataset and study},
  author={Agustsson, Eirikur and Timofte, Radu},
  booktitle={Proceedings of the IEEE Conference on Computer vision and Pattern Recognition Workshops},
  pages={126--135},
  year={2017}
}

@article{serrano2024hosc,
  title={HOSC: A periodic activation function for preserving sharp features in implicit neural representations},
  author={Serrano, Danzel and Szymkowiak, Jakub and Musialski, Przemyslaw},
  journal={arXiv preprint arXiv:2401.10967},
  year={2024}
}

@article{thennakoon2025bandrc,
  title={BandRC: Band Shifted Raised Cosine Activated Implicit Neural Representations},
  author={Thennakoon, Pandula and Ranasinghe, Avishka and De Silva, Mario and Epakanda, Buwaneka and Godaliyadda, Roshan and Ekanayake, Parakrama and Herath, Vijitha},
  journal={arXiv preprint arXiv:2505.11640},
  year={2025}
}

@inproceedings{jacot2018neural,
  title={Neural Tangent Kernel: Convergence and Generalization in Neural Networks},
  author={Jacot, Arthur and Gabriel, Franck and Hongler, Cl{\'e}ment},
  booktitle={Advances in Neural Information Processing Systems},
  year={2018}
}

@inproceedings{jayasundara2025mire,
  title={MIRE: Matched Implicit Neural Representations},
  author={Jayasundara, Dhananjaya and Zhao, Heng and Labate, Demetrio and Patel, Vishal M},
  booktitle={Proceedings of the Computer Vision and Pattern Recognition Conference},
  pages={8279--8288},
  year={2025}
}

@book{boyd2001chebyshev,
  title={Chebyshev and Fourier Spectral Methods},
  author={Boyd, John P},
  year={2001},
  publisher={Courier Corporation}
}

@techreport{trefethen2013approximation,
  title={Approximation theory and approximation practice},
  author={Trefethen, Lloyd N},
  year={2013},
  institution={SIAM}
}

@article{zhu2024disorder,
  title={Disorder-invariant implicit neural representation},
  author={Zhu, Hao and Xie, Shaowen and Liu, Zhen and Liu, Fengyi and Zhang, Qi and Zhou, You and Lin, Yi and Ma, Zhan and Cao, Xun},
  journal={IEEE Transactions on Pattern Analysis and Machine Intelligence},
  volume={46},
  number={8},
  pages={5463--5478},
  year={2024},
  publisher={IEEE}
}

@article{xu2019frequency,
  title={Frequency principle: Fourier analysis sheds light on deep neural networks},
  author={Xu, Zhi-Qin John and Zhang, Yaoyu and Luo, Tao and Xiao, Yanyang and Ma, Zheng},
  journal={arXiv preprint arXiv:1901.06523},
  year={2019}
}

@article{wang2004image,
  title={Image quality assessment: from error visibility to structural similarity},
  author={Wang, Zhou and Bovik, Alan C and Sheikh, Hamid R and Simoncelli, Eero P},
  journal={IEEE Transactions on Image Processing},
  volume={13},
  number={4},
  pages={600--612},
  year={2004},
  publisher={IEEE}
}

@article{hertz2021sape,
  title={Sape: Spatially-adaptive progressive encoding for neural optimization},
  author={Hertz, Amir and Perel, Or and Giryes, Raja and Sorkine-Hornung, Olga and Cohen-Or, Daniel},
  journal={Advances in Neural Information Processing Systems},
  volume={34},
  pages={8820--8832},
  year={2021}
}

@article{field1987relations,
  title={Relations between the statistics of natural images and the response properties of cortical cells},
  author={Field, David J},
  journal={Journal of the Optical Society of America A},
  volume={4},
  number={12},
  pages={2379--2394},
  year={1987},
  publisher={OSA}
}

@inproceedings{landgraf2022pins,
  title     = {PINs: Progressive Implicit Networks for Multi-Scale Neural Representations},
  author    = {Landgraf, Zoe and Sorkine-Hornung, Alexander and Cabral, Ricardo Silveira},
  booktitle = {Proceedings of the 39th International Conference on Machine Learning (ICML)},
  year      = {2022},
  pages     = {11969--11984},
  publisher = {PMLR},
  volume    = {162},
}

@inproceedings{benbarka2022seeing,
  title={Seeing implicit neural representations as fourier series},
  author={Benbarka, Nuri and H{\"o}fer, Timon and Zell, Andreas and others},
  booktitle={Proceedings of the IEEE/CVF Winter Conference on Applications of Computer Vision},
  pages={2041--2050},
  year={2022}
}

@book{powell1981approximation,
  title={Approximation theory and methods},
  author={Powell, Michael James David},
  year={1981},
  publisher={Cambridge university press}
}

@inproceedings{mescheder2019occupancy,
  title={Occupancy networks: Learning 3d reconstruction in function space},
  author={Mescheder, Lars and Oechsle, Michael and Niemeyer, Michael and Nowozin, Sebastian and Geiger, Andreas},
  booktitle={Proceedings of the IEEE/CVF Conference on Computer Vision and Pattern Recognition},
  pages={4460--4470},
  year={2019}
}

@inproceedings{rezaeian2025sl2a,
  title={SL2A-INR: Single-Layer Learnable Activation for Implicit Neural Representation},
  author={Rezaeian, Reza and Heidari, Moein and Azad, Reza and Merhof, Dorit and Soltanian-Zadeh, Hamid and Hacihaliloglu, Ilker},
  booktitle={Proceedings of the IEEE/CVF International Conference on Computer Vision},
  pages={26065--26074},
  year={2025}
}

@inproceedings{
  liu2025kan,
  title={{KAN}: Kolmogorov--Arnold Networks},
  author={Liu, Ziming and Wang, Yixuan and Vaidya, Sachin and Ruehle, Fabian and Halverson, James and Soljacic, Marin and Hou, Thomas Y. and Tegmark, Max},
  booktitle={The Thirteenth International Conference on Learning Representations},
  year={2025}

}

@inproceedings{fathony2020multiplicative,
  title={Multiplicative filter networks},
  author={Fathony, Rizal and Sahu, Anit Kumar and Willmott, Devin and Kolter, J Zico},
  booktitle={International Conference on Learning Representations},
  year={2020}
}
}

\clearpage
\setcounter{page}{1}
\maketitlesupplementary
\appendix
We provide the proof of Theorem~\ref{theorem2} in Section~\ref{theorem3_proof}.
The parameter settings for the main experiments in the paper are described in Section~\ref{Experimental Settings}.
Additional experiments are presented in Section~\ref{Additional Experiments}, and further ablation studies are included in Section~\ref{Additional Ablation}.
\section{Proofs of Theorems}\label{theorem3_proof}


Before presenting the theorem, we first restate the framework of INRs. INRs are typically defined as a coordinate mapping function $\gamma: \mathbb{R}^D \to \mathbb{R}^M$ followed by an MLP. 
The MLP is parameterized by weight matrices $W^{(\ell)} \in \mathbb{R}^{F_{\ell-1} \times F_\ell}$, bias vectors $b^{(\ell)} \in \mathbb{R}^{F_\ell}$, and element-wise activation functions $\rho^{(\ell)}: \mathbb{R} \to \mathbb{R}$, applied at each layer $\ell = 1, \ldots, L-1$. 
Let $z^{(\ell)}$ denote the post-activation output of layer $\ell$. The forward computation is expressed as:
\begin{equation}
\begin{split}
z^{(0)} &= \gamma(x), \\
z^{(\ell)} &= \rho^{(\ell)}\!\left( W^{(\ell)} z^{(\ell-1)} + b^{(\ell)} \right), \quad \ell = 1, \ldots, L-1, \\
f_{\boldsymbol{\theta}}(x) &= W^{(L)} z^{(L-1)} + b^{(L)}.
\end{split}
\label{eq:inr_architecture}
\end{equation}


\begin{theorem_unnumbered}
Let $f_{\boldsymbol{\theta}}: \mathbb{R} \to \mathbb{R}$ be an INR of the form of Eq.~\eqref{eq:inr_architecture} with depth $L$, where the activation functions are polynomials of degree at most $K$, i.e., $\rho(z) = \sum_{k=0}^{K} \alpha_k z^k$. 
Let $J = \{j_1, \dots, j_{M}\} \subset \mathbb{N}$ be a set of $M$ arbitrary non-negative integer orders.
The input mapping $\gamma(x)$ embeds the coordinate $x$ into a vector of Chebyshev polynomials:
\[
z^{(0)} = \gamma(x) = 
\begin{bmatrix}
T_{j_1}(x), T_{j_2}(x), \dots , T_{j_{M}}(x)
\end{bmatrix}^{\!\top}.
\]
Specifically, the functional space of this architecture is constrained to the form:
\begin{equation*}
f_{\boldsymbol{\theta}}(x) = \sum_{m \in \mathcal{H}(J)} c_m T_m(x),
\end{equation*}
where $c_m \in \mathbb{R}$ are coefficients determined by the network parameters, and the set of reachable spectral orders $\mathcal{H}(J)$ is a subset of the integer combination set:

\begin{align*}
\mathcal{H}(J)
\subseteq
\tilde{\mathcal{H}}(J)
:=
\Bigl\{
    h = \lvert\,\sum_{t=1}^{M} s_t j_t\,\rvert
    ~\Bigm|\,
    s_t \in \mathbb{Z},\\
    \sum_{t=1}^{M} |s_t|
    \le K^{L-1}
\Bigr\}.
\end{align*}

\end{theorem_unnumbered}


Since Chebyshev polynomials share the trigonometric properties of cosine functions, they satisfy the product-to-sum identity:
\begin{equation}
T_p(x) T_q(x) = \tfrac{1}{2}\bigl[\,T_{p+q}(x) + T_{|p-q|}(x)\,\bigr].
\label{eq:chebyshev_product}
\end{equation}
Leveraging this property, we extend the theoretical framework to Chebyshev polynomials via the following lemmas.

\begin{lemma}[Chebyshev Product]
Let $\{T_p(x)\}_{p \in P}$ and $\{T_q(x)\}_{q \in Q}$ be two collections of first-kind Chebyshev polynomials indexed by sets $P, Q \subset \mathbb{N}$. 
Let $\{\beta_p^{(1)}\}_{p \in P}$ and $\{\beta_q^{(2)}\}_{q \in Q}$ be their corresponding scalar coefficients. Then,
\begin{equation}
\left( \sum_{p \in P} \beta_p^{(1)} T_p(x) \right)
\left( \sum_{q \in Q} \beta_q^{(2)} T_q(x) \right)
= \sum_{h \in \mathcal{S}} \hat{\beta}_h T_h(x),
\label{eq:sup_1}
\end{equation}
where the resulting index set $\mathcal{S}$ is defined as:
\begin{equation}
\mathcal{S} = \bigl\{\, h \mid h = p+q \text{ or } h = |p-q|,~ p \in P,~ q \in Q \,\bigr\},
\label{eq:sup_2}
\end{equation}
and $\{\hat{\beta}_h \mid h \in \mathcal{S}\} \subset \mathbb{R}$ are the combined coefficients.
\end{lemma}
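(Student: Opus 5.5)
The plan is to expand the product by bilinearity and then apply the product-to-sum identity \eqref{eq:chebyshev_product} to each pair of indices separately. Multiplying out the two finite sums gives
\[
\Bigl(\sum_{p\in P}\beta_p^{(1)}T_p(x)\Bigr)\Bigl(\sum_{q\in Q}\beta_q^{(2)}T_q(x)\Bigr)=\sum_{p\in P}\sum_{q\in Q}\beta_p^{(1)}\beta_q^{(2)}\,T_p(x)T_q(x).
\]
This uses only the distributive law in the ring of polynomials. I will assume $P$ and $Q$ are finite, as they are in every use of the lemma (they come from finitely many features). If infinite index sets were intended, absolute convergence of the coefficient series would be needed to justify rearranging the terms.

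Next I will verify \eqref{eq:chebyshev_product} itself rather than just cite it, since the rest of the argument rests on it. On $[-1,1]$ write $x=\cos\vartheta$, so that $T_n(x)=\cos(n\vartheta)$. The trigonometric identity $\cos a\cos b=\tfrac12[\cos(a+b)+\cos(a-b)]$ then gives
\[
T_pT_q=\tfrac12\bigl[T_{p+q}+T_{|p-q|}\bigr],
\]
because cosine is even, so $\cos((p-q)\vartheta)=T_{|p-q|}(x)$. Both sides are polynomials that agree on the infinite set $[-1,1]$. Two polynomials agreeing on infinitely many points are identical, so the identity holds for all $x\in\mathbb{R}$.

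Substituting this into the double sum yields
\[
\sum_{p,q}\tfrac12\beta_p^{(1)}\beta_q^{(2)}\bigl(T_{p+q}+T_{|p-q|}\bigr).
\]
I then group the terms by the resulting index $h$. This defines
\[
\hat\beta_h=\tfrac12\sum_{(p,q):\,p+q=h}\beta_p^{(1)}\beta_q^{(2)}+\tfrac12\sum_{(p,q):\,|p-q|=h}\beta_p^{(1)}\beta_q^{(2)}.
\]
By construction, every index that appears lies in $\mathcal{S}$ from \eqref{eq:sup_2}, and each $\hat\beta_h$ is a finite real sum.

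The only delicate point is bookkeeping: the same $h$ may arise from several pairs, for instance both as a sum and as a difference, and coefficients may cancel. Since the lemma only asserts that the product is expanded over $\mathcal{S}$, with coefficients that may be zero, collecting duplicates by addition settles it. There is no real obstacle beyond the polynomial-identity extension noted above.
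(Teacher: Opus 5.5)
Your proposal is correct and follows the paper's proof essentially verbatim: expand the product bilinearly, apply the product-to-sum identity $T_pT_q=\tfrac12[T_{p+q}+T_{|p-q|}]$ termwise, and group the terms by order $h\in\mathcal{S}$. Your extra verification of that identity (via $x=\cos\vartheta$ and extension of the polynomial identity) and your remark that $P,Q$ should be finite are details the paper simply takes for granted.
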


\begin{proof}
Expanding the product of the two sums and applying the identity in Eq.~\eqref{eq:chebyshev_product}:
\begin{align}
\begin{split}
&\left( \sum_{p \in P} \beta_p^{(1)} T_p(x) \right)
\left( \sum_{q \in Q} \beta_q^{(2)} T_q(x) \right)\\
&= \sum_{p \in P}\sum_{q \in Q} \beta_p^{(1)} \beta_q^{(2)} T_p(x)T_q(x) \\
&= \sum_{p \in P}\sum_{q \in Q} 
\frac{1}{2}\beta_p^{(1)}\beta_q^{(2)}
\bigl[\,T_{p+q}(x)+T_{|p-q|}(x)\,\bigr] \\
&= \sum_{h \in \mathcal{S}} \hat{\beta}_h T_h(x).
\end{split}
\end{align}
The final step follows by grouping terms associated with the same Chebyshev order $h \in \mathcal{S}$.
\end{proof}

\begin{lemma}[Chebyshev Power]
Let $\{T_j(x)\}_{j \in J}$ be a collection of first-kind Chebyshev polynomials indexed by $J \subset \mathbb{N}$, with coefficients $\{\beta_j\}_{j \in J} \subset \mathbb{R}$. 
For any integer $k \ge 1$,
\begin{equation}
\left( \sum_{j \in J} \beta_j T_j(x) \right)^k 
= \sum_{h \in \mathcal{S}_k} \tilde{\beta}_h T_h(x),
\label{eq:sup_3}
\end{equation}
where the order set $\mathcal{S}_k$ is recursively defined as:
\begin{align*}
\mathcal{S}_1 &= J, \\
\mathcal{S}_{k+1} &= 
\begin{aligned}[t]
    &\mathcal{D}(\mathcal{S}_k, J)
    := \bigl\{\, h' \mid h' = h + j ~\text{or}\\
    &\quad ~ h' = |h - j|,~ h \in \mathcal{S}_k,~ j \in J \,\bigr\}.
\end{aligned}
\end{align*}
Moreover, $\mathcal{S}_k$ is a subset of the ``order combination'' set $\tilde{\mathcal{S}}_k(J)$:
\begin{equation}
\mathcal{S}_k \subseteq \tilde{\mathcal{S}}_k(J)
:= \Bigl\{ h = \Bigl| \sum_{j \in J} c_j j \Bigr| 
~\Bigm|~ c_j \in \mathbb{Z},~ \sum_{j \in J} |c_j| \le k \Bigr\}.
\label{eq:sup_4}
\end{equation}
\label{lemma2}
\end{lemma}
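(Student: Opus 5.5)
The plan is induction on $k$, using the Chebyshev Product lemma for the expansion claim and then a separate induction for the inclusion $\mathcal{S}_k \subseteq \tilde{\mathcal{S}}_k(J)$.

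\emph{Expansion.} For $k=1$ the statement is immediate, since $\mathcal{S}_1 = J$ and $\tilde{\beta}_j = \beta_j$. For the inductive step, assume $\bigl(\sum_{j\in J}\beta_j T_j(x)\bigr)^k = \sum_{h\in\mathcal{S}_k}\tilde{\beta}_h T_h(x)$. Write the $(k+1)$-th power as the product of this expansion with $\sum_{j\in J}\beta_j T_j(x)$. Apply the Chebyshev Product lemma with $P=\mathcal{S}_k$, $Q=J$, $\beta^{(1)}=\tilde{\beta}$ and $\beta^{(2)}=\beta$. The resulting index set is exactly $\{h+j,\ |h-j| : h\in\mathcal{S}_k,\ j\in J\} = \mathcal{D}(\mathcal{S}_k,J) = \mathcal{S}_{k+1}$. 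Regrouping coefficients of equal order gives the new $\tilde{\beta}_h$.

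\emph{Inclusion.} We prove $\mathcal{S}_k\subseteq\tilde{\mathcal{S}}_k(J)$, again by induction. The base case holds because each $j\in J$ equals $|1\cdot j|$, with $c_j=1$ and all other coefficients zero, so $\sum|c_j|=1$. For the step, take $h'\in\mathcal{S}_{k+1}$, so $h' = h+j_0$ or $h' = |h-j_0|$ with $h\in\mathcal{S}_k$ and $j_0\in J$. By hypothesis $h = \bigl|\sum_j c_j j\bigr|$ with $\sum|c_j|\le k$. Let $\epsilon\in\{\pm1\}$ be the sign with $h = \epsilon\sum_j c_j j$, and set $c'_j = \epsilon c_j$. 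Then $h = \sum_j c'_j j$ holds exactly and $\sum_j|c'_j| = \sum_j|c_j|$ is unchanged. Now:
\begin{itemize}
\item $h+j_0 = \sum_j c'_j j + j_0$, obtained by incrementing $c'_{j_0}$ by $1$;
\item $h-j_0 = \sum_j c'_j j - j_0$, obtained by decrementing $c'_{j_0}$ by $1$, so $|h-j_0|$ is the absolute value of an integer combination.
\end{itemize}
In both cases the triangle inequality shows the $\ell^1$ norm of the coefficient vector increases by at most $1$, so it stays $\le k+1$. Since $h+j_0\ge 0$, it equals its own absolute value. Hence $h'\in\tilde{\mathcal{S}}_{k+1}(J)$.

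The only delicate point is this sign-normalization step, which removes the absolute value before adding or subtracting $j_0$. It is needed because Eq.~\eqref{eq:sup_4} is stated with an absolute value, so one cannot simply append $\pm j_0$ inside it. The case $k=0$ is excluded; if it were included it would be covered by $T_0$ with all $c_j=0$. Everything else is bookkeeping.
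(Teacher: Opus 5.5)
Your proof is correct and follows the paper's argument. The expansion uses induction on $k$ with the Chebyshev Product lemma applied to $P=\mathcal{S}_k$, $Q=J$. The inclusion uses a second induction that absorbs the sign of $\sum_j c_j j$ into the coefficients before adding or subtracting $j_0$; your explicit choice of $\epsilon$ makes this step, which the paper writes as $c'_j=\pm c_j\pm 1$, slightly more transparent, but the route is the same.
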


\begin{proof}
We proceed by induction.

\noindent\textbf{1. Proof of Eq.~\eqref{eq:sup_3} (Recursive Structure)} \\
\textit{Base Case ($k=1$):} Trivially, $\mathcal{S}_1 = J$. \\
\textit{Inductive Step:} Assume Eq.~\eqref{eq:sup_3} holds for some $k \ge 1$. Then,
\begin{equation}
	\begin{aligned}
		\left( \sum_{j \in J} \beta_j T_j(x) \right)^{k+1}
		&= \left( \sum_{j \in J} \beta_j T_j(x) \right)^k
		\left( \sum_{j \in J} \beta_j T_j(x) \right) \\
		&= \left( \sum_{h \in \mathcal{S}_k} \tilde{\beta}_{h} T_{h}(x) \right)
		\left( \sum_{j \in J} \beta_j T_j(x) \right).
	\end{aligned}
	\label{eq:sup_5}
\end{equation}

Applying the Chebyshev Product Lemma to Eq.~\eqref{eq:sup_5}, the resulting indices $h'$ belong to the set formed by the sum and difference of indices in $\mathcal{S}_k$ and $J$. Thus, $\mathcal{S}_{k+1} = \mathcal{D}(\mathcal{S}_k, J)$, and Eq.~\eqref{eq:sup_3} holds for $k+1$.

\medskip
\noindent
\textbf{2. Proof of $\mathcal{S}_k \subseteq \tilde{\mathcal{S}}_k(J)$} \\
\textit{Base Case ($k=1$):} Any $j \in J$ can be written as $|\sum_{i \in J} c_i i|$ with $c_j=1$ and others zero, satisfying $\sum |c_i| = 1$. Thus $\mathcal{S}_1 \subseteq \tilde{\mathcal{S}}_1(J)$.

\textit{Inductive Step:} Assume $\mathcal{S}_k \subseteq \tilde{\mathcal{S}}_k(J)$. 
Consider $h' \in \mathcal{S}_{k+1}$. By definition, $h' = |h \pm j|$ for some $h \in \mathcal{S}_k$ and $j \in J$.
By the inductive hypothesis, $h = |\sum_{i \in J} c_i i|$ with $\sum |c_i| \le k$.
Using $T_{-n}(x) = T_n(x)$, we can absorb signs into the coefficients:
\begin{equation}
h' = \Bigl| \Bigl(\pm \sum_{i \in J} c_i i \Bigr) \pm j \Bigr| = \Bigl| \sum_{i \in J} c'_i i \Bigr|,
\end{equation}
where $c'_j = \pm c_j \pm 1$ and $c'_i = \pm c_i$ for $i \neq j$. The complexity satisfies $\sum |c'_i| \le \sum |c_i| + 1 \le k + 1$.
Therefore, $h' \in \tilde{\mathcal{S}}_{k+1}(J)$.
\end{proof}


With the above lemmas established, we now proceed to prove the theorem.

\begin{proof}[Proof of Theorem]
The proof proceeds by induction on the layer index $\ell$ to track the propagation of the order complexity.

\noindent\textbf{Base Case ($\ell=0$).}
The initial representation is $z^{(0)} = [T_{j_1}(x), \dots, T_{j_{M}}(x)]^\top$. 
Each component corresponds to a Chebyshev polynomial of order $m \in J = \{j_1, \dots, j_M\}$. 
Evaluating the coefficient complexity defined in $\tilde{\mathcal{H}}(J)$, for the $t$-th component $T_{j_t}(x)$, we can define the integer weights as $s_t=1$ and $s_{t'}=0$ for $t' \neq t$. 
The resulting spectral complexity is $\sum_{i=1}^{M} |s_i| = 1 = K^0$. 
Thus, the statement holds for $\ell=0$.

\medskip
\noindent\textbf{Inductive Hypothesis.}
Assume that for the hidden layer $\ell-1$ (where $1 \le \ell < L$), every component of $z^{(\ell-1)}$ is a linear combination of Chebyshev polynomials $T_{m}(x)$, where each order $m$ belongs to the set:
\begin{equation*}
\Omega_{\ell-1} = \Bigl\{ \Bigl| \sum_{t=1}^{M} s_t j_t \Bigr| ~\Bigm|~ \sum_{t=1}^{M} |s_t| \le K^{\ell-1} \Bigr\}.
\end{equation*}

\medskip
\noindent\textbf{Inductive Step.}
Consider the representation at layer $\ell$:
\[
z^{(\ell)} = \rho^{(\ell)}\!\left( W^{(\ell)} z^{(\ell-1)} + b^{(\ell)} \right).
\]
Let $v = W^{(\ell)} z^{(\ell-1)} + b^{(\ell)}$ be the pre-activation vector. 
Since $z^{(\ell-1)}$ consists of polynomials with orders in $\Omega_{\ell-1}$, and the bias $b^{(\ell)}$ corresponds to $T_0(x)$ (where all $s_t=0$, satisfying $\sum |s_t| = 0 \le K^{\ell-1}$), every component of $v$ is a linear combination of Chebyshev polynomials with orders in $\Omega_{\ell-1}$.

The activation function is a polynomial of degree $K$, i.e., $\rho(z) = \sum_{k=0}^{K} \alpha_k z^k$. 
We analyze the term $v^k$ for any component of $v$. 
By the Lemma~\ref{lemma2}, raising a sum of Chebyshev polynomials to the $k$-th power generates new orders. 
Specifically, if the input orders have coefficient complexity bounded by $C$, the output orders have complexity bounded by $k \cdot C$.

Applying this to our case with input complexity $C = K^{\ell-1}$ and maximum power $k = K$, the resulting orders $m$ in $z^{(\ell)}$ satisfy:
\[
\sum_{t=1}^{M} |s'_t| \le K \cdot K^{\ell-1} = K^{\ell}.
\]
Thus, the elements of $z^{(\ell)}$ are composed of Chebyshev polynomials with orders in $\Omega_{\ell}$.

Finally, the network output is a linear transformation $f_{\boldsymbol{\theta}}(x) = W^{(L)} z^{(L-1)} + b^{(L)}$. 
Since the linear operation does not increase the polynomial degree complexity, the spectral orders remain bounded by that of $z^{(L-1)}$. 
Substituting $\ell = L-1$ into our inductive result, the output orders belong to $\mathcal{H}(J)$ with the constraint:
\[
\sum_{t=1}^{M} |s_t| \le K^{L-1}.
\]
This concludes the proof.
\end{proof}

\section{Experimental Settings for Main Tasks}\label{Experimental Settings}
We briefly review the CAFE+ and clarify the two symbols $M$ and $J$ used in our experimental descriptions.
Given a coordinate input $\mathbf{x}\in\mathbb{R}^D$, CAFE+ maps it to a high-dimensional vector through
\begin{equation}
\Psi(\mathbf{x})
=
\bigodot_{i=1}^{N}
\left\{
\mathbf{W}_i
\big[
\Phi_{\mathrm{FF}}(\mathbf{x}),\;
\Phi_{\mathrm{CF}}(\mathbf{x})
\big]
+
\mathbf{b}_i
\right\},
\label{eq:phi_combined_supp}
\end{equation}
where $\Phi_{\mathrm{FF}}$ and $\Phi_{\mathrm{CF}}$ denote the Fourier and Chebyshev encodings, respectively.

The Fourier feature mapping is defined as
\begin{equation}
\Phi_{\mathrm{FF}}(\mathbf{x})
=
\left[
\sin(2\pi \boldsymbol{\omega}_i^\top \mathbf{x}),\;
\cos(2\pi \boldsymbol{\omega}_i^\top \mathbf{x})
\right]_{i=1}^{M},
\label{eq:RFF_M_supp}
\end{equation}
where $M$ indicates the number of sampled Fourier frequencies.

The Chebyshev feature mapping is given by
\begin{equation}
\Phi_{\mathrm{CF}}(\mathbf{x})
=
\big[\,T_j(x_d)\,\big]_{\,d=1,\dots,D;\, j=0,\dots,J-1}
\in \mathbb{R}^{DJ},
\label{eq:CPE_supp}
\end{equation}
where $T_j(\cdot)$ is the Chebyshev polynomial of degree $j$, and $J$ specifies the number of Chebyshev basis functions per dimension.

Throughout our experiments, we use $M$ and $J$ to explicitly denote the number of Fourier features and Chebyshev features employed in the coordinate encoding.

\paragraph{Implementation Note.} 
For our method, the number of Fourier features $M$ and the number of Chebyshev features $J$ need to be specified.
 In practice, our model is relatively robust to the choice of $J$, and we recommend choosing $M$ and $J$ such that the feature dimensions after Fourier and Chebyshev encoding maintain an approximate ratio of $3\!:\!1$. This ratio does not need to be strictly satisfied; it simply provides a convenient guideline for selecting $M$ and $J$. For example, in a 2D image fitting task with a target hidden dimension of 256, one may set $M=96$ and $J=32$.  
After encoding, the Fourier features produce a vector of length $2M=192$, while the Chebyshev features yield a vector of length $2J=64$.  
Their ratio is therefore $192:64 = 3:1$, and the concatenated representation matches the desired dimension of 256.

We provide the detailed experimental settings for 2D image fitting, 3D shape representation, and NeRF experiments as follows:
\begin{table*}[t]
\centering
\caption{PSNR comparison on DIV2K images 0873, 0878, and 0891 under full-resolution and $\times 2$ downsampling settings. 
The number of parameters and average inference time per image are also reported.}

\renewcommand{\arraystretch}{1.2}
\setlength{\tabcolsep}{8pt}

\begin{tabular}{c|cccccc|cc}
\hline
Method & 0873 & 0873 ($\times 2$) & 0878 & 0878 ($\times 2$) & 0891 & 0891 ($\times 2$) & Params (M) & Avg. Time (s) \\
\hline
SIREN     & 24.91 & 31.56 & 32.42 & 36.94 & 27.20 & 32.48 & 0.45 & 292.50 \\
WIRE      & 23.45 & 28.98 & 29.12 & 32.04 & 23.99 & 27.23 & 0.18 & 707.48 \\
SCONE     & \underline{26.10} & 33.14 & \underline{34.09} & \underline{39.12} & \underline{27.68} & \underline{33.19} & 0.45 & 647.06 \\
FINER     & 25.28 & 31.59 & 32.66 & 36.86 & 26.77 & 32.07 & 0.45 & 375.28 \\
SL$^2$A   & 25.95 & \underline{34.28} & 34.00 & 38.59 & 26.82 & 32.53 & 0.84 & 456.12 \\
Ours      & \textbf{29.65} & \textbf{37.18} & \textbf{35.79} & \textbf{40.82} & \textbf{30.79} & \textbf{36.46} & 0.74 & 326.95 \\
\hline
\end{tabular}
\label{tab:div2k_psnr_vertical}
\end{table*}

\setlength{\tabcolsep}{2pt}
\begin{figure*}[htbp]
    \centering

    \begin{tabular}{ccc}
        \includegraphics[width=0.30\textwidth]{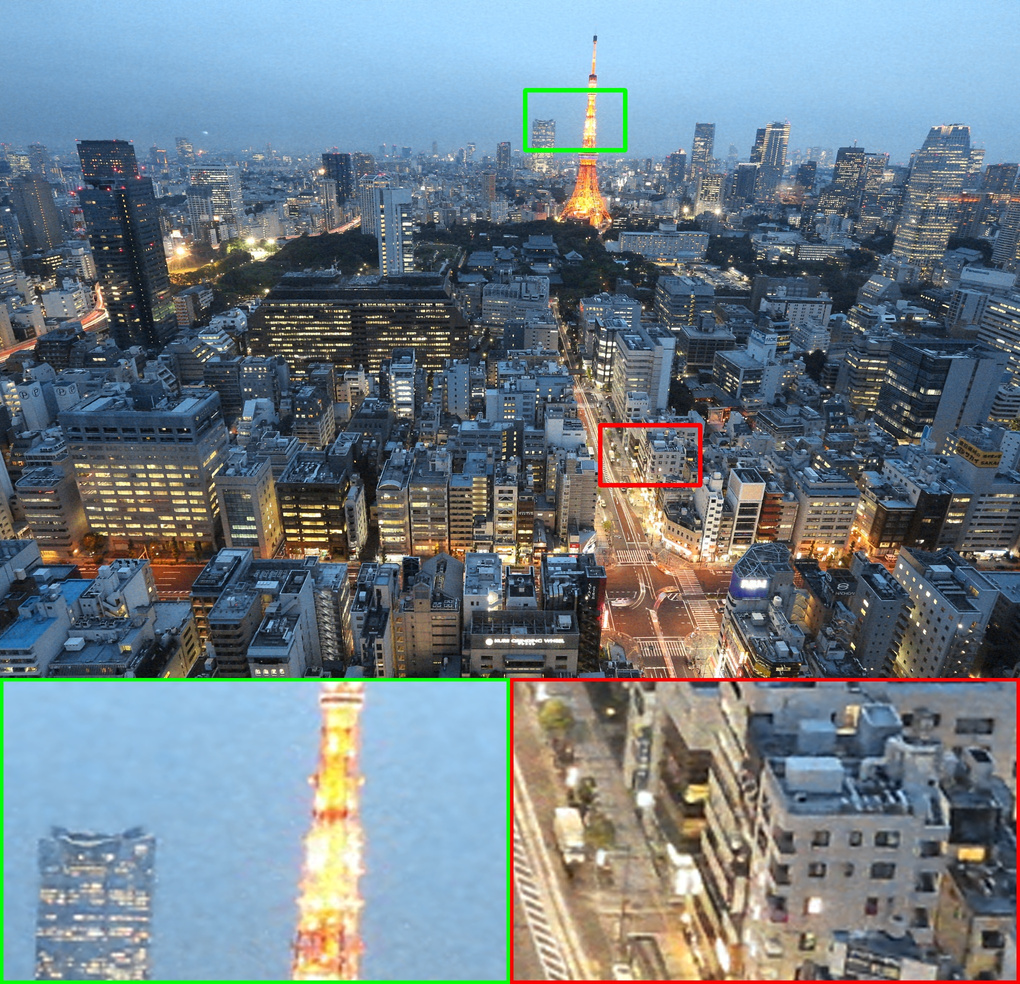} &
        \includegraphics[width=0.30\textwidth]{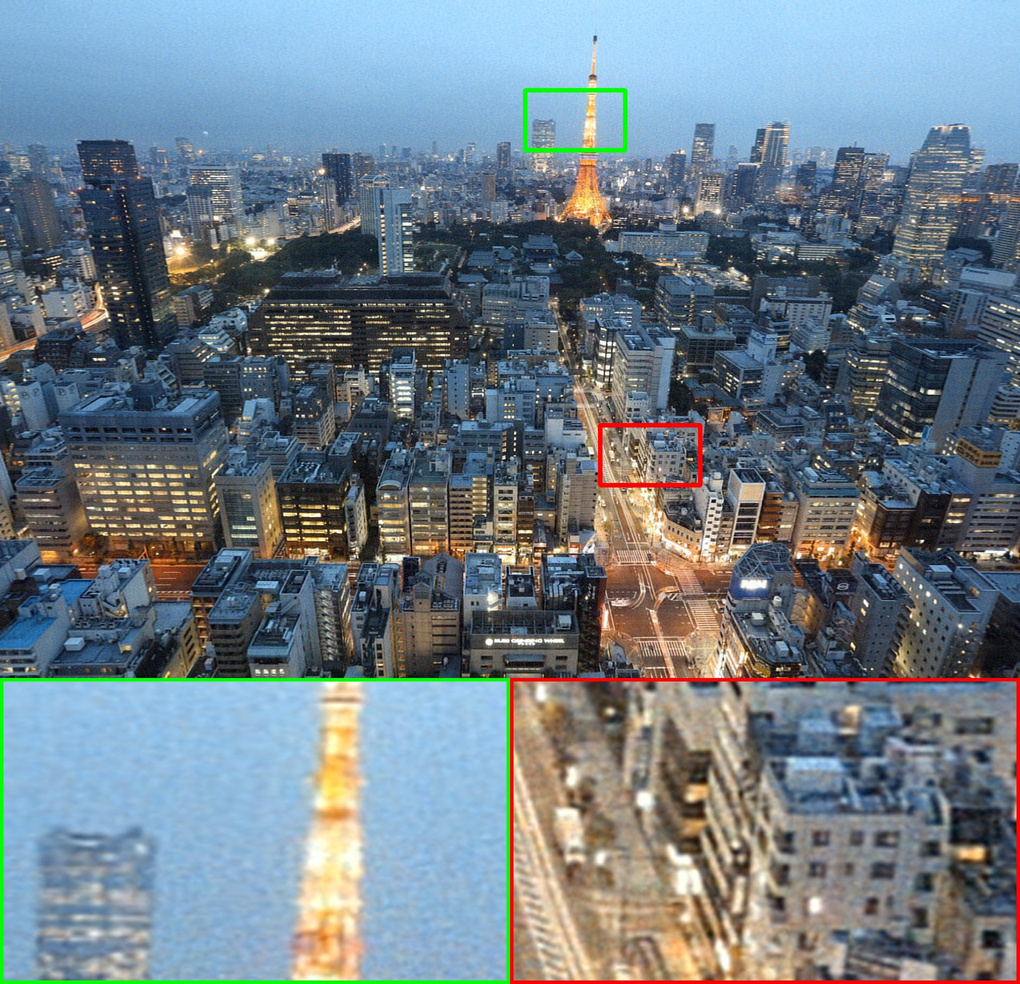} &
        \includegraphics[width=0.30\textwidth]{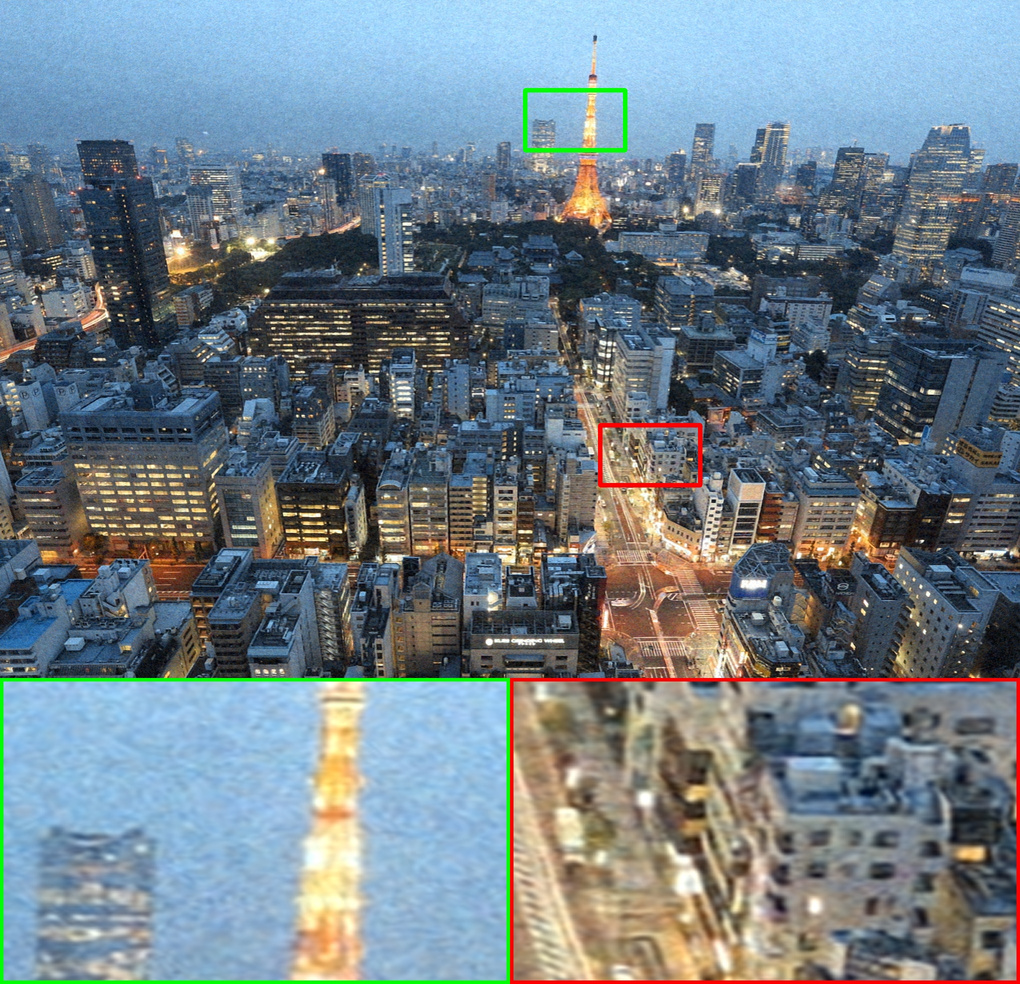} \\
        \textbf{Ours (29.65)} & SL$^2$A (25.95) & FINER (25.28) \\
       \includegraphics[width=0.30\textwidth]{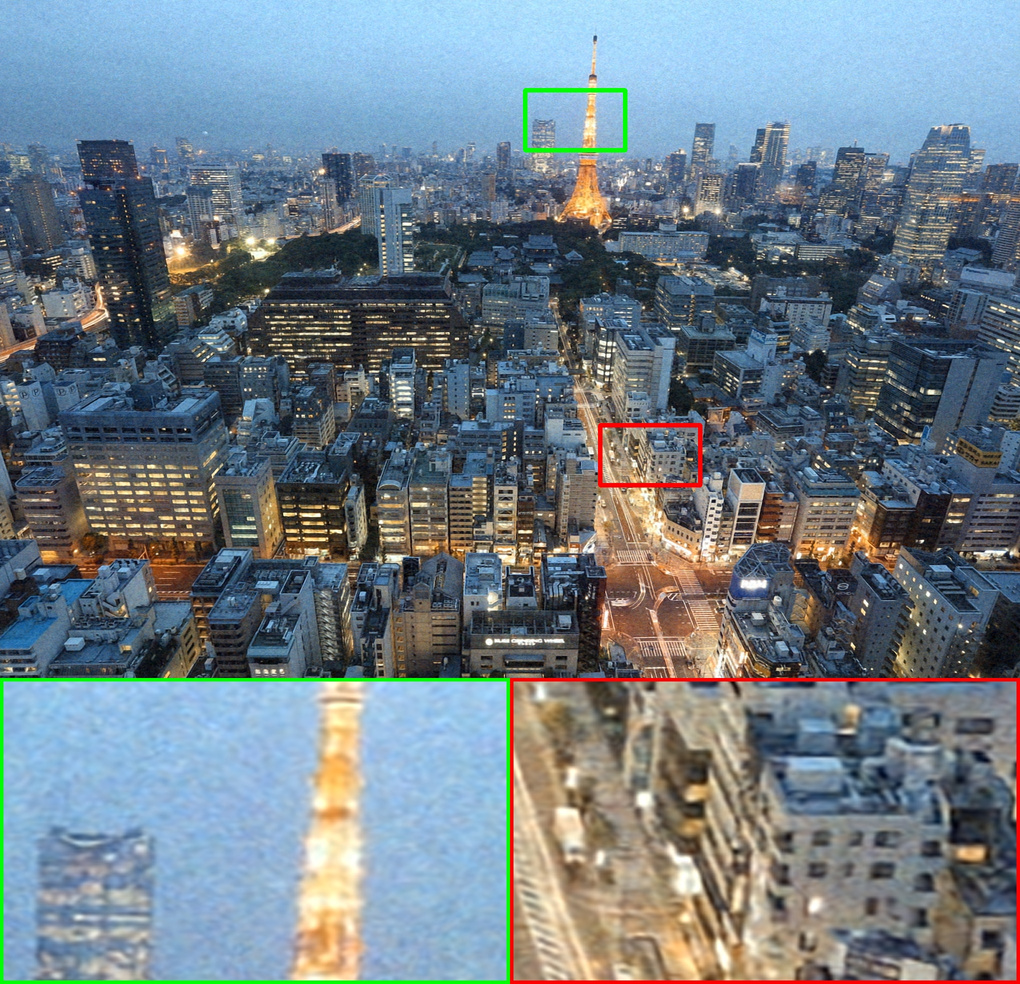} &
        \includegraphics[width=0.30\textwidth]{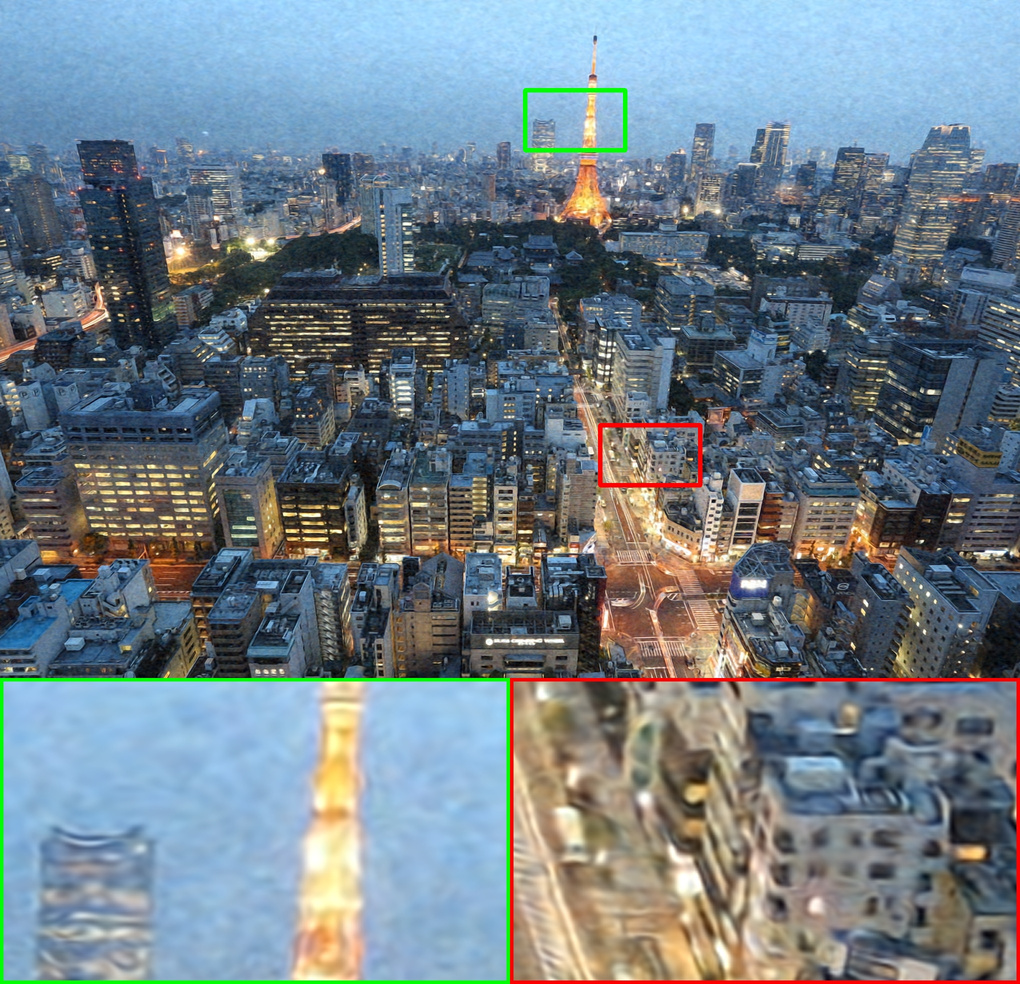} &
        \includegraphics[width=0.30\textwidth]{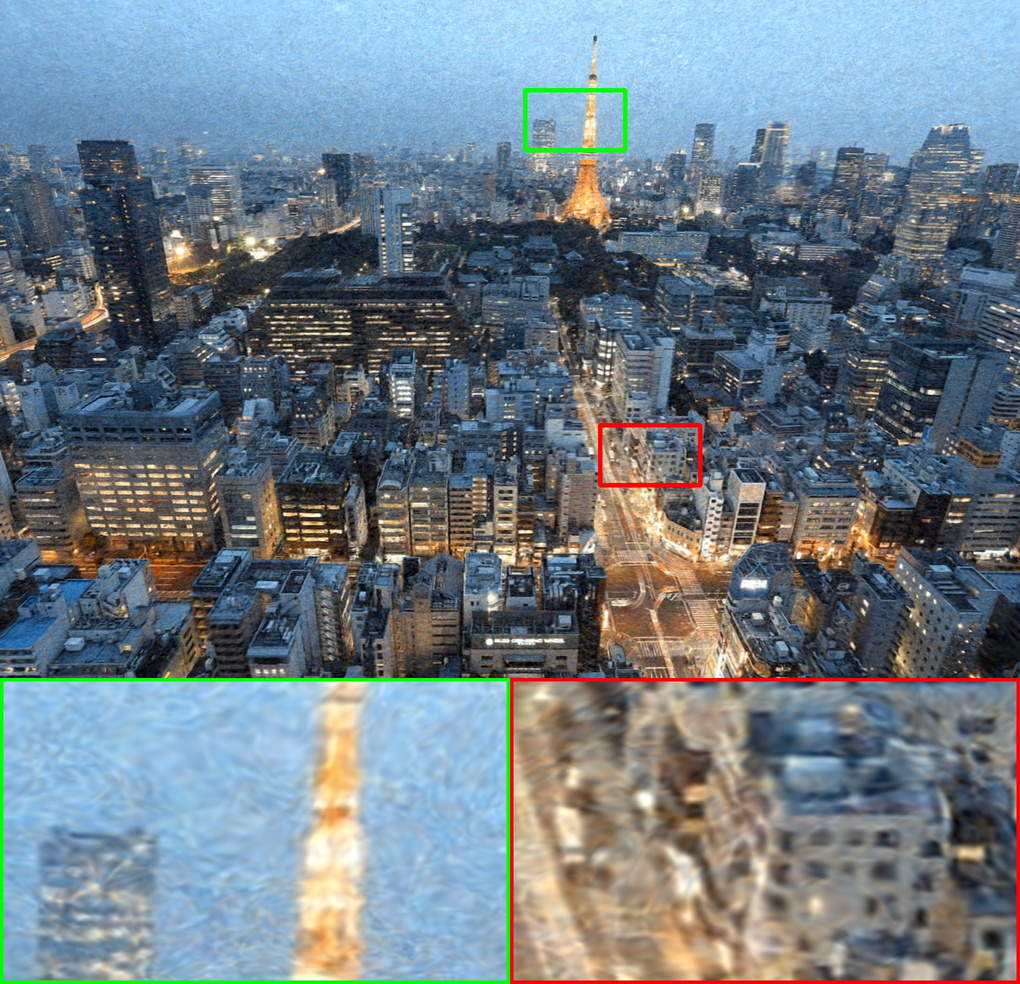} \\
        SCONE (26.10) & SIREN (24.91) & WIRE (23.45) \\  
        
    \end{tabular}

    \caption{Visualization results on the full-resolution DIV2K image 0873, where we zoom in on two selected regions.}
    \label{vis:div2k_full}
\end{figure*}

\begin{itemize}
    \item \textbf{2D Image Fitting.} 
    For SIREN~\cite{sitzmann2020implicit} and FINER~\cite{liu2024finer}, we use $\omega_0 = 30$ and three hidden layers. 
    For Wire~\cite{saragadam2023wire}, we use $\omega_0 = 20$, $s_0 = 30$ and two hidden layers. 
    SCONE~\cite{li2024learning} uses three hidden layers with $\omega_0 = 30$ and $\omega_{\ell}=[90,\,60,\,30,\,10]$. 
    SL$^2$A~\cite{rezaeian2025sl2a} uses three hidden layers with $\text{degree}=512$ and $\text{rank}=128$. 
    Our method employs three parallel linear layers with an RFF scale of 30. The default configuration uses Fourier features $M=88$ and Chebyshev features $J=30$ with one hidden layer in the backbone MLP, while the larger-parameter variant adopts $M=96$ and $J=32$ with two backbone MLP hidden layers.
    Except for SCONE which uses 250 hidden neurons, all other methods use 256 hidden neurons. 
    We train all models for 6,000 iterations.

    \item \textbf{3D Shape Representation.} 
    For SIREN and FINER, we use $\omega_0 = 30$ and three hidden layers. 
    Wire uses $\omega_0 = 20$, $s_0 = 30$ and two hidden layers. 
    SCONE uses two hidden layers with $\omega_0 = 30$ and $\omega_{\ell}=[60,\,30,\,10]$. 
    SL$^2$A uses two hidden layers with $\text{degree}=256$ and $\text{rank}=128$. Our method adopts two parallel linear layers with an RFF scale of 30. 
    We use $M=98$ Fourier features and $J=20$ Chebyshev features, ensuring that the encoded dimension is $2M + 3J = 256$.
    Except for SCONE which uses 250 hidden neurons, all other methods use 256 hidden neurons. 
    During training, all voxels are randomly shuffled in each iteration and processed in mini-batches of up to 200,000 points, for a total of 200 iterations.

    \item \textbf{NeRF~\cite{mildenhall2021nerf}.} 
    Our implementation follows the settings provided by FINER. 
    For our method, we set the frequency features in PE to 10 and concatenate Chebyshev features with $J=8$, resulting in a concatenated feature length of $3 \times 2 \times 10 + 3 \times 8 = 84$. 
    Two parallel linear projections map these features to length 182 and are combined via Hadamard product. 
    The backbone MLP has two hidden layers with 182 neurons each. 
    This produces a 182-dimensional feature vector. Excluding the first feature, the remaining 181 features are concatenated with the input coordinates, processed through two parallel linear layers, and then passed through the two-layer backbone MLP to obtain the final output.

\end{itemize}

Additional visualizations are provided for 2D image fitting (Fig.~\ref{fig:sup_2d_img_fiting_vis}), 3D shape representation (Fig.~\ref{sup_fig:3D_shape}), and NeRF (Fig.~\ref{vis:nerf1}).

\section{Additional Experiments}\label{Additional Experiments}

\subsection{Full-Resolution DIV2K Representation}

We perform image fitting on DIV2K~\cite{agustsson2017ntire} images 0873 (2040$\times$1456), 0878 (2040$\times$1740), and 0891 (1668$\times$2040) at full resolution, as well as their 2$\times$ downsampled versions. We train for 6{,}000 iterations, randomly sampling 512$\times$512 points per iteration.
All methods use hidden layers with 384 neurons. 
For SIREN and FINER, we adopt three hidden layers with $\omega_0=30$. 
For Wire, we use two hidden layers with $\omega_0=20$ and $s_0=20$. 
For SCONE, we use three hidden layers with $\omega_0=30$ and $\omega_{\ell}=[90, 60, 30, 10]$. 
For SL$^2$A, we use three hidden layers with $\text{degree}=512$ and $\text{rank}=192$. 
For our method, we employ three parallel linear layers with an RFF scale of 30, using $M=144$ Fourier features and $J=48$ Chebyshev features. We present the visualization results of the DIV2K image 0873 at its original resolution in Fig.~\ref{vis:div2k_full}. The PSNR, model parameters, and training time are reported in Table~\ref{tab:div2k_psnr_vertical}. These results further demonstrate the effectiveness of our method.

\subsection{Image Restoration}

We applied our framework to three classical image restoration tasks, namely image denoising, inpainting, and super-resolution. Specifically, we considered Gaussian noise with a standard deviation of 0.2, randomly removed 50\% of pixels for inpainting, and performed $\times4$ super-resolution. 

The detailed configurations for each task are summarized below:

\begin{itemize}
    \item \textbf{Denoising.}
    All methods use hidden layers with 256 neurons and two hidden layers.
    SIREN and FINER use $\omega_0=10$; Wire uses $\omega_0=4$ and $s_0=4$;
    SCONE uses $\omega_0=2$ with $\omega_{\ell}=[60,\,30,\,10]$;
    SL$^2$A uses $\text{degree}=64$ and $\text{rank}=64$.
    For our method, we use two parallel linear layers with $M=96$ Fourier features, $J=32$ Chebyshev features, and an RFF scale of 2.
{
\setlength{\tabcolsep}{0.6pt}
\begin{figure*}[htbp]
    \centering
    {\scriptsize
    \begin{tabular}{cccccccc} 
        \includegraphics[width=0.12\textwidth]{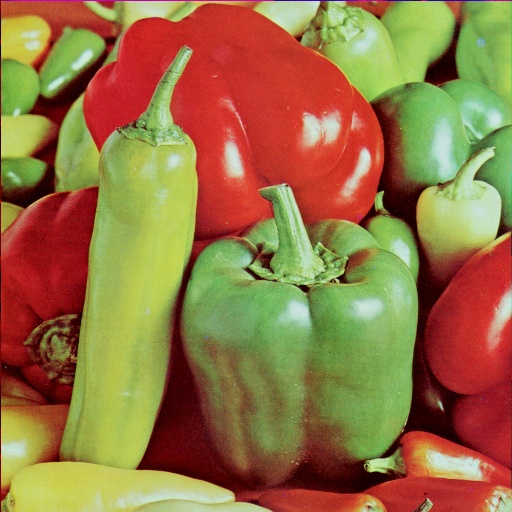} &
        \includegraphics[width=0.12\textwidth]{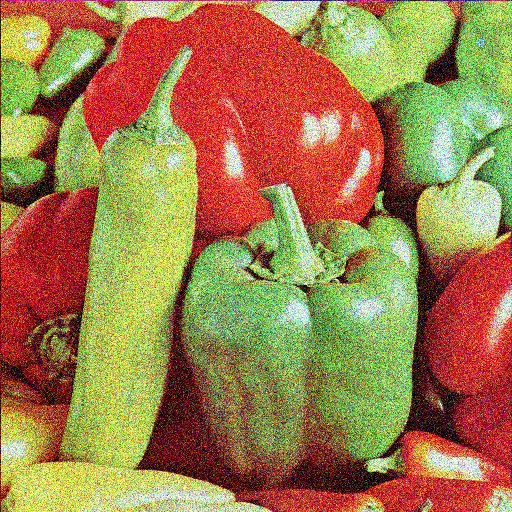} &
        \includegraphics[width=0.12\textwidth]{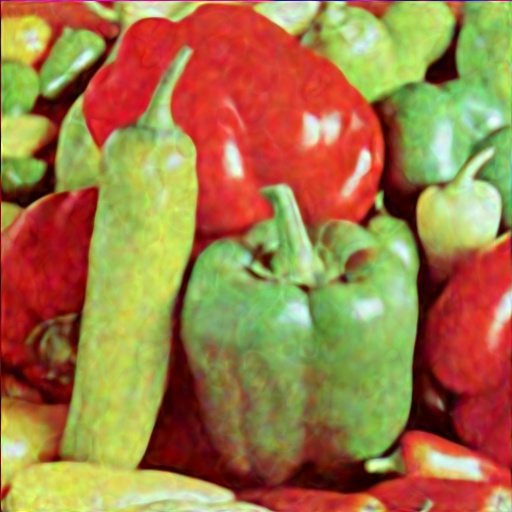} &
        \includegraphics[width=0.12\textwidth]{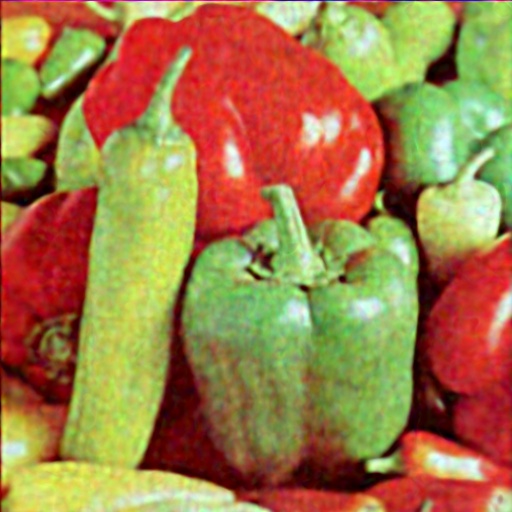} & 
        \includegraphics[width=0.12\textwidth]{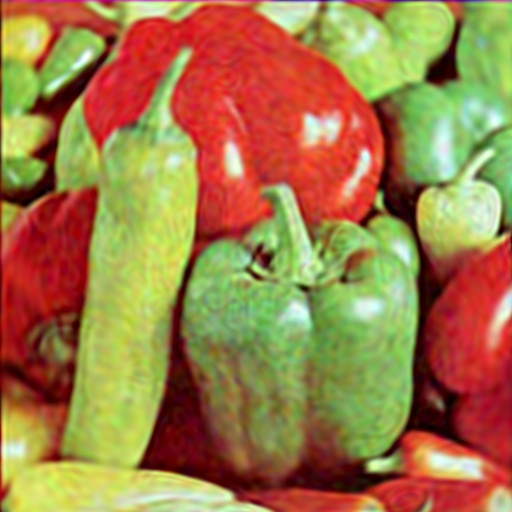} &
        \includegraphics[width=0.12\textwidth]{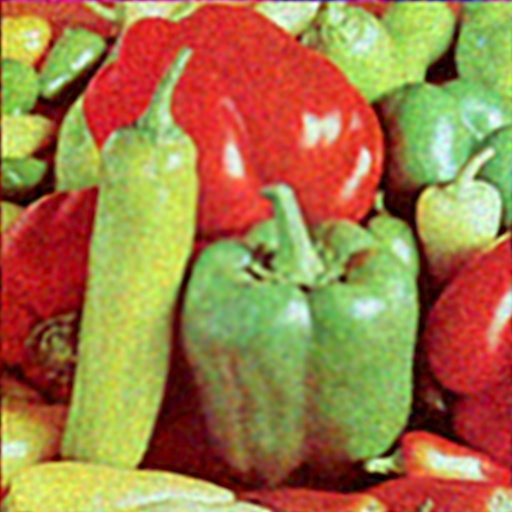} &
        \includegraphics[width=0.12\textwidth]{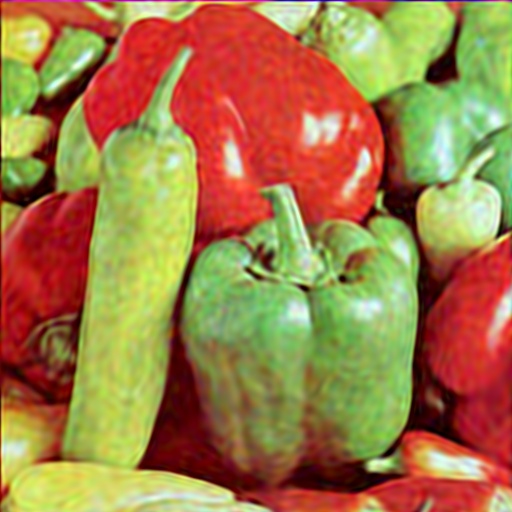} &
        \includegraphics[width=0.12\textwidth]{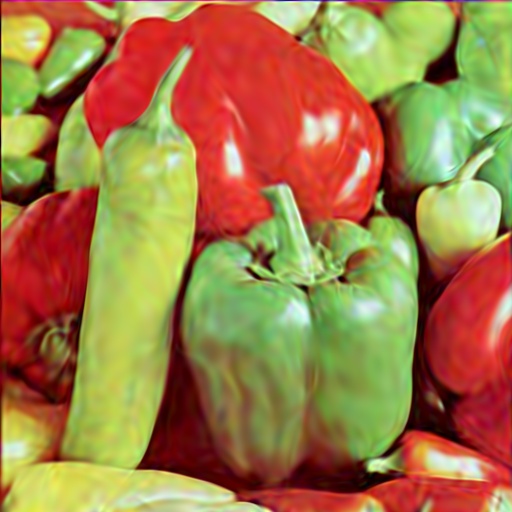} 
        \\
         GT(PSNR) & Observed(13.98) & \textbf{Ours(26.46)} &
        SL$^2$A(25.59) & FINER(25.12) & SCONE(25.05) & SIREN(25.72) & WIRE(26.28) \\
        
        \includegraphics[width=0.12\textwidth]{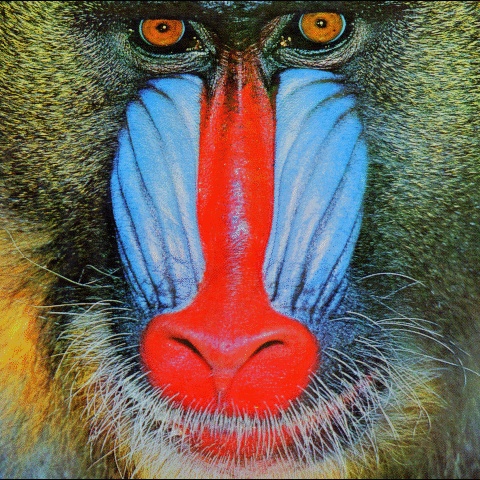} &
        \includegraphics[width=0.12\textwidth]{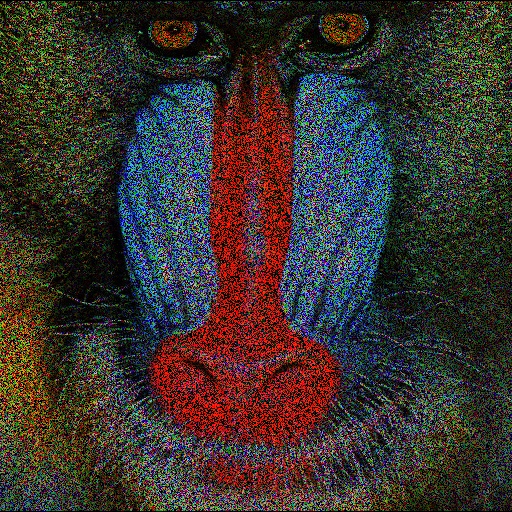} &
        \includegraphics[width=0.12\textwidth]{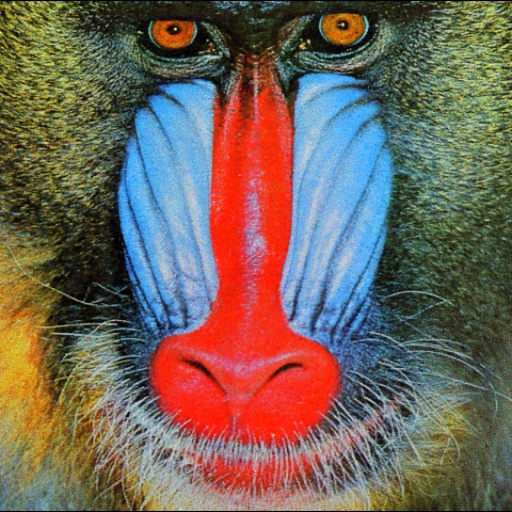} &
        \includegraphics[width=0.12\textwidth]{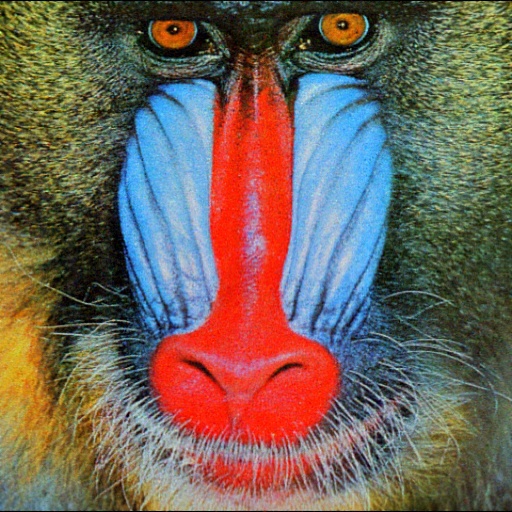} &
        \includegraphics[width=0.12\textwidth]{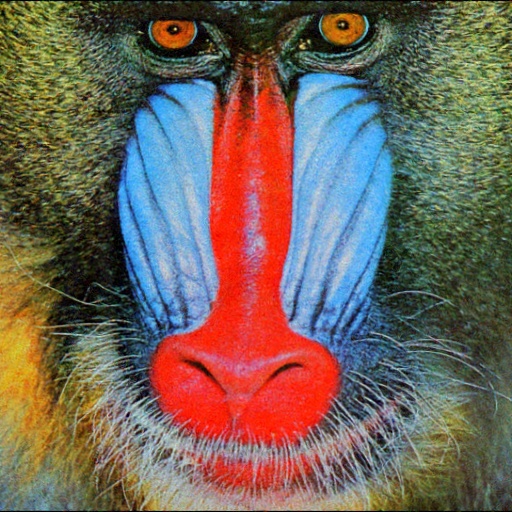} &
        \includegraphics[width=0.12\textwidth]{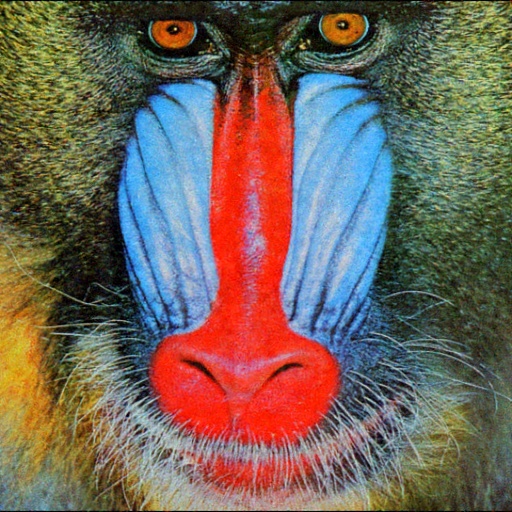} &
        \includegraphics[width=0.12\textwidth]{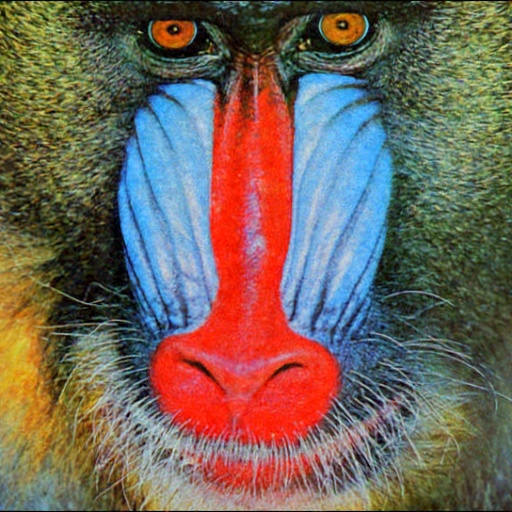} &
        \includegraphics[width=0.12\textwidth]{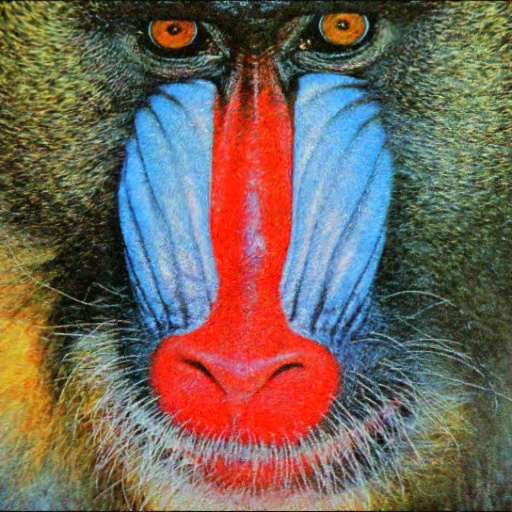}
        \\
         GT(PSNR) & Observed(9.59) & \textbf{Ours(27.28)} &
        SL$^2$A(26.94) & FINER(26.70) & SCONE(26.98) & SIREN(25.73) & WIRE(24.80) \\

        \includegraphics[width=0.12\textwidth]{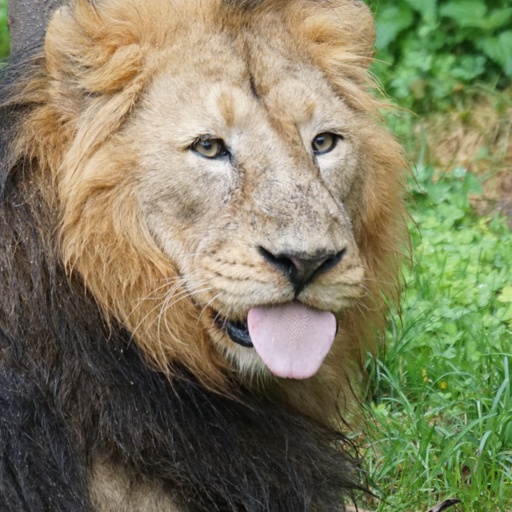} &
        \includegraphics[width=0.12\textwidth]{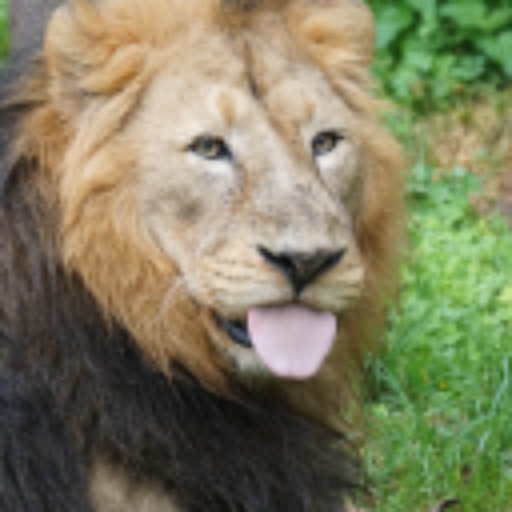} &
        \includegraphics[width=0.12\textwidth]{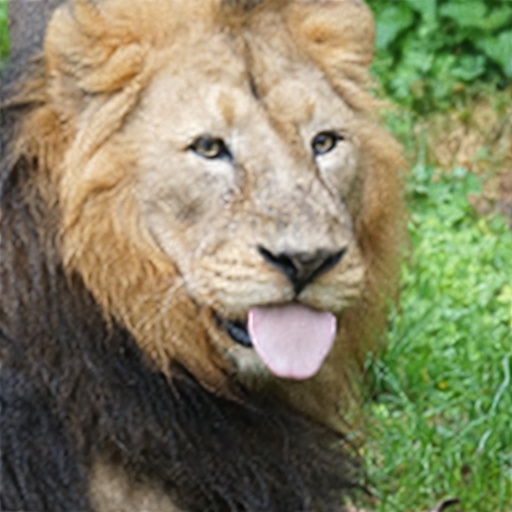} &
        \includegraphics[width=0.12\textwidth]{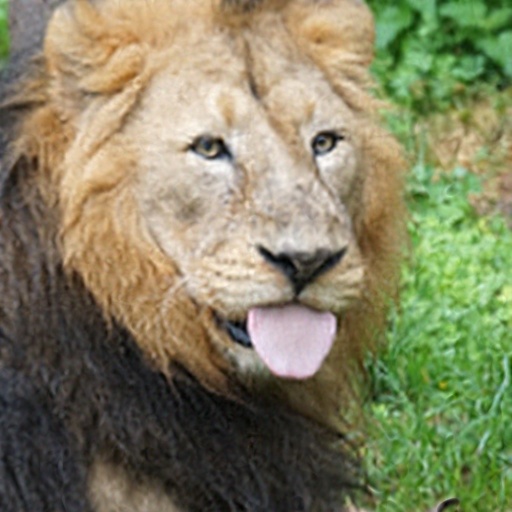} &
        \includegraphics[width=0.12\textwidth]{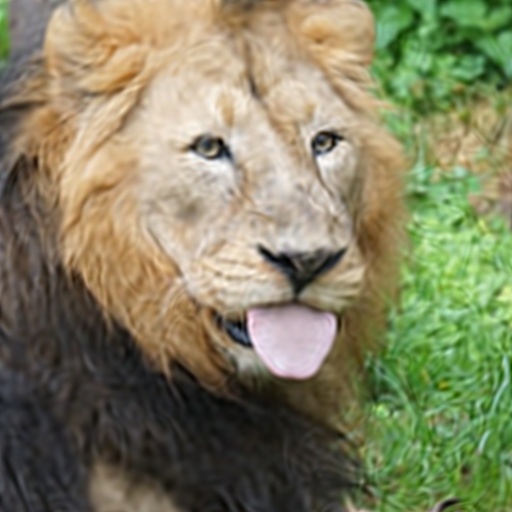} &
        \includegraphics[width=0.12\textwidth]{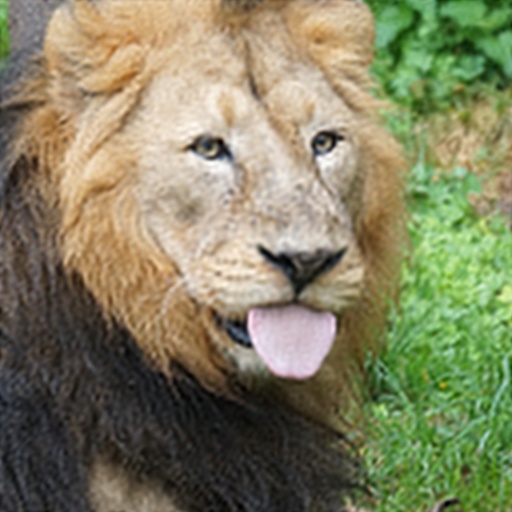} &
        \includegraphics[width=0.12\textwidth]{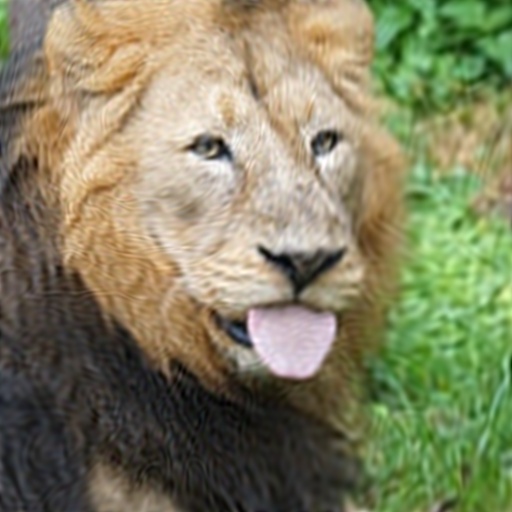} &
        \includegraphics[width=0.12\textwidth]{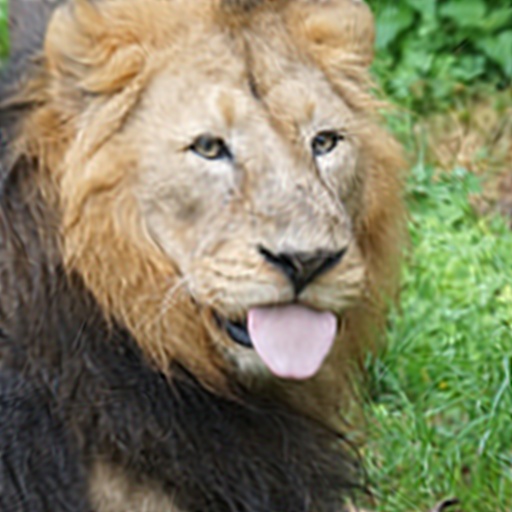}
        \\
         GT(PSNR) & Bilinear(26.71) & \textbf{Ours(27.18)} &
        SL$^2$A(26.94) & FINER(26.57) & SCONE(27.01) & SIREN(25.73) & WIRE(26.87) \\
    \end{tabular}
    }
    \caption{
        Visualization of image denoising, inpainting, and single-image super-resolution tasks (top to bottom).
        For denoising, Gaussian noise with standard deviation 0.2 is added.
        Inpainting randomly removes 50\% of pixels.
        Super-resolution uses a scale factor of 4.
    }
    \label{fig:inverse_problem}
\end{figure*}
}
\begin{figure*}[h]
    \centering
    \setlength{\tabcolsep}{0.6pt} %
    \renewcommand{\arraystretch}{1.0} %
    \begin{tabular}{ccc}
        \includegraphics[height=0.21\textheight]{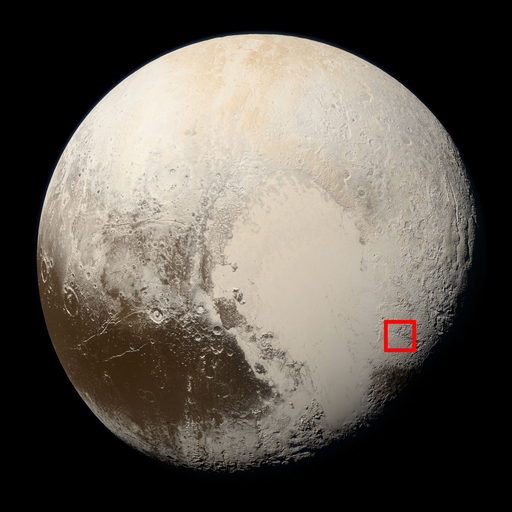} &
        \includegraphics[height=0.21\textheight]{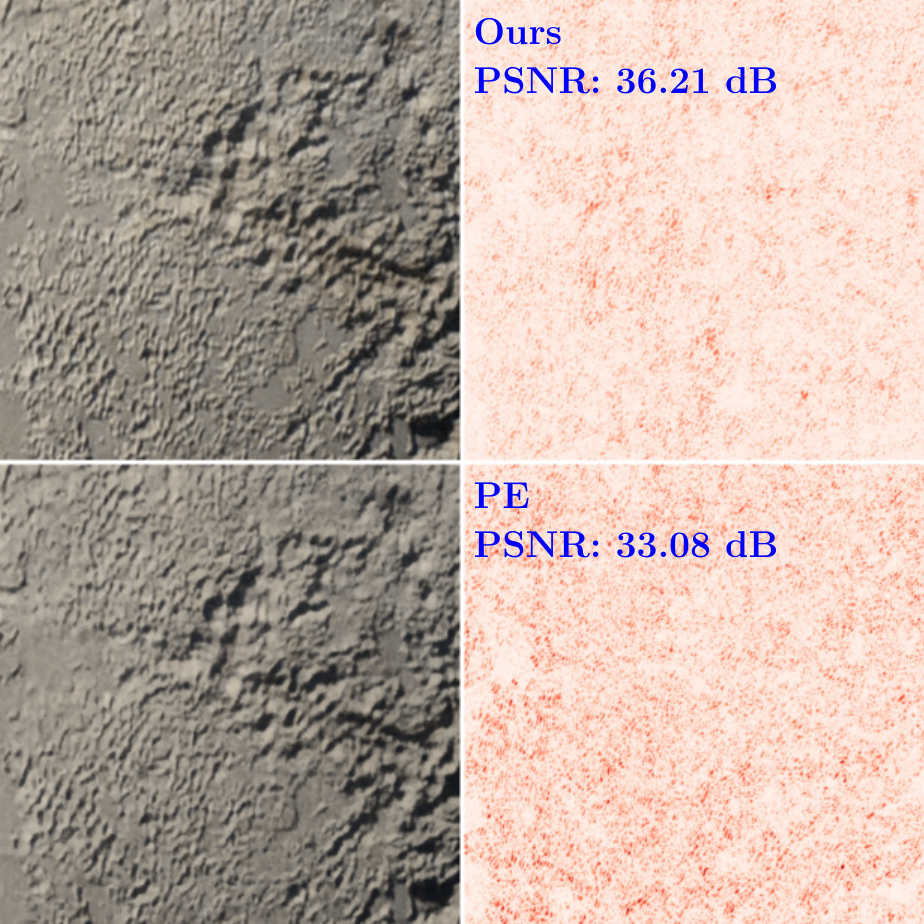} &
        \includegraphics[height=0.21\textheight]{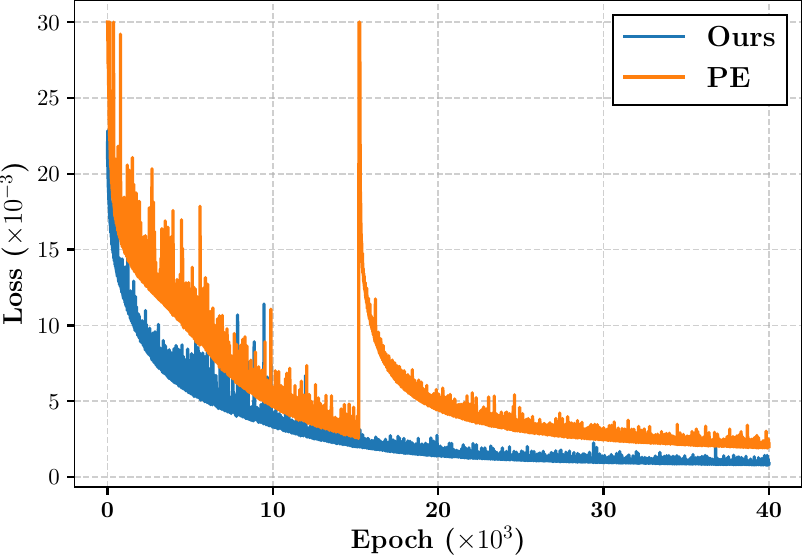} \\
        (a) GT & (b) Cropped patch with residual & (c) Training loss curve
    \end{tabular}
    \caption{Visualization of the gigapixel Pluto image fitting results. Subfigure~(a) shows the original $8000 \times 8000$ image, (b) presents the $512 \times 512$ patches fitted by our method and PE, along with the corresponding residual maps
    (c) displays the training loss curve.}
    \label{fig:three_images_tabular}
\end{figure*}
    \item \textbf{Inpainting.}
    All methods use hidden layers with 256 neurons and two hidden layers.
    SIREN and FINER use $\omega_0=30$; Wire uses $\omega_0=20$ and $s_0=20$;
    SCONE uses $\omega_0=30$ with $\omega_{\ell}=[60,\,30,\,10]$;
    SL$^2$A uses $\text{degree}=256$ and $\text{rank}=128$.
    For our method, we use two parallel linear layers with $M=96$ Fourier features, $J=32$ Chebyshev features, and an RFF scale of 30.

    \item \textbf{Super-resolution ($\times4$).}
    All methods use hidden layers with 256 neurons and two hidden layers.
    SIREN and FINER use $\omega_0=10$; Wire uses $\omega_0=8$ and $s_0=6$;
    SCONE uses $\omega_0=2$ with $\omega_{\ell}=[60,\,30,\,10]$;
    SL$^2$A uses $\text{degree}=64$ and $\text{rank}=64$.
    For our method, we use two parallel linear layers with $M=96$ Fourier features, $J=32$ Chebyshev features, and an RFF scale of 2.
\end{itemize}

For each task, we select one representative image from standard datasets and visualize the results in Fig.~\ref{fig:inverse_problem}. Our method also demonstrates strong performance across these image restoration tasks.

\subsection{Gigapixel Image
Representation}

We further demonstrate the superiority of the proposed CAFE+ on a gigapixel image representation task. Following ACORN~\cite{martel2021acorn}, we use a widely adopted $8000 \times 8000$ Pluto image and train for 40k iterations. For the baseline, the PE method employs an MLP with four hidden layers of 1,536 neurons each. In contrast, our method augments the original PE encoding by concatenating Chebyshev features with $J=8$, followed by two linear layers (each mapping 62 inputs to 1,536 outputs) whose outputs are combined via a Hadamard product before being fed into the backbone MLP. This modification allows us to use a backbone with only two hidden layers of 1,536 neurons while keeping all other settings identical to PE. As a result, the number of parameters is reduced from 9.52M to 4.92M. As shown in Fig.~\ref{fig:three_images_tabular}, our method reconstructs finer details, achieves a 3.13~dB PSNR improvement, and exhibits faster and more stable convergence. These results clearly demonstrate the effectiveness of our approach.

\subsection{Larger Scale NeRF and Scene-Level SDF Reconstruction}

\noindent\textbf{Larger Scale NeRF.} We further evaluate NeRF using all training images at the original resolution on the standard scenes (\textit{Lego} / \textit{Ship} / \textit{Drums} / \textit{Hotdog}). We compare our method with the two strongest baselines, SIREN and FINER. The quantitative results are shown in Table~\ref{tab:nerf_fullres}. 
\noindent{\textbf{Scene-Level SDF.}} We evaluate the methods on the \textit{Replica room0} scene using IoU as the metric under the same parameter budget. The results are: SIREN (0.854), FINER (0.844), and Ours (\textbf{0.899}).

\begin{table}[ht]
\centering
\caption{Quantitative comparison on NeRF scenes using all training images at the original resolution. PSNR is reported in dB.}
\label{tab:nerf_fullres}
\setlength{\tabcolsep}{6pt}
\resizebox{0.7\linewidth}{!}{
\begin{tabular}{lcccc}
\toprule
Method & Lego & Ship & Drums & Hotdog \\
\midrule
SIREN & 26.94 & 20.62 & 23.09 & 30.89 \\
FINER & 28.00 & 20.88 & \textbf{23.65} & 31.64 \\
Ours  & \textbf{30.26} & \textbf{21.92} & 23.51 & \textbf{32.35} \\
\bottomrule
\end{tabular}
}
\end{table}

\subsection{Comparison with MFN and BACON}

MFN, BACON, and our method all employ the Hadamard product, where MFN and BACON progressively synthesize frequency components across layers via a serial recursive architecture. In contrast, our approach is encoding-based and adopts a parallel design, which leads to faster training and removes the need for carefully modulated initialization. As summarized in Table~\ref{tab:dik_psnr_bacon}, our method achieves consistently better reconstruction quality while significantly reducing training time compared with MFN and BACON.

\begin{table}[ht]
\centering
\captionsetup{skip=2pt}
\caption{Comparison with BACON and MFN on Image Fitting}
\label{tab:dik_psnr_bacon}
\setlength{\tabcolsep}{2.4pt}
\resizebox{\columnwidth}{!}{
\footnotesize
\renewcommand{\arraystretch}{0.9}
\begin{tabular}{@{}lcccccccccc@{}}
\toprule
Method & D2K0 & D2K1 & D2K2 & D2K3 & D2K4 & D2K5 & D2K6 & D2K7 & Params & Time (s) \\
\midrule
MFN & 35.6 & 40.1 & 36.4 & 41.9 & 38.4 & 35.2 & 39.9 & 38.9 & 0.33M & 279.1 \\
BACON & 34.0 & 39.0 & 37.6 & 39.6 & 39.5 & 35.7 & 38.2 & 36.0 & 0.33M & 245.6 \\
Ours & \textbf{39.5} & \textbf{44.3} & \textbf{44.2} & \textbf{44.7} &
\textbf{45.0} & \textbf{39.2} & \textbf{42.6} & \textbf{45.0} &
0.33M & \textbf{149.8} \\
\bottomrule
\end{tabular}
}
\end{table}

\section{Additional Ablation Studies}\label{Additional Ablation}
\subsection{Comparison of Chebyshev and Fourier Features on 1D Function Fitting}

In Fig.~\ref{fig:1d}, we compare Fourier features and Chebyshev features (denoted as CF in the figure) by fitting two 1D functions to intuitively reveal their respective strengths. In our experimental setup, each coordinate is mapped to a 32-dimensional feature vector, and the RFF scale is set to 5. The MLP architecture consists of a single hidden layer with 32 neurons. We sample 1,000 points for training over 2,000 epochs and evaluate the fitting error on 4,000 uniformly sampled test points. As shown in the figure, Chebyshev features exhibit clear advantages when modeling smooth function regions, whereas Fourier features tend to introduce oscillations that lead to noisy predictions. Conversely, when approximating high-frequency functions, Chebyshev features incur significantly larger errors, while Fourier features maintain superior accuracy.
\begin{figure}[tbp]
    \centering
    \includegraphics[width=\linewidth]{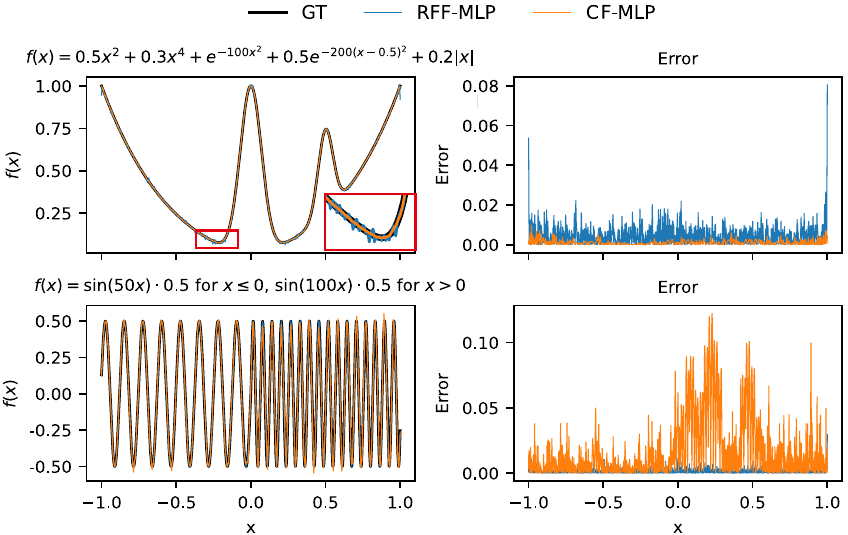}
    \caption{Comparison of RFF and CF in function fitting. CF excels at capturing smooth, low-frequency components but struggles with high-frequency variations, whereas RFF effectively represents high-frequency content yet introduces oscillatory errors on low-frequency functions.}
    \label{fig:1d}
\end{figure}

\subsection{Impact of High-Frequency Ratios on Reconstruction}

We introduce Chebyshev features with the primary motivation of enhancing the representation of low-frequency signals. 
However, this naturally raises an important concern regarding whether incorporating Chebyshev features may affect the fitting capability of CAFE when the signal contains a large proportion of high-frequency components. 
\begin{figure}[tbp]
    \centering
    \begin{subfigure}[b]{0.32\columnwidth}
        \includegraphics[width=\linewidth]{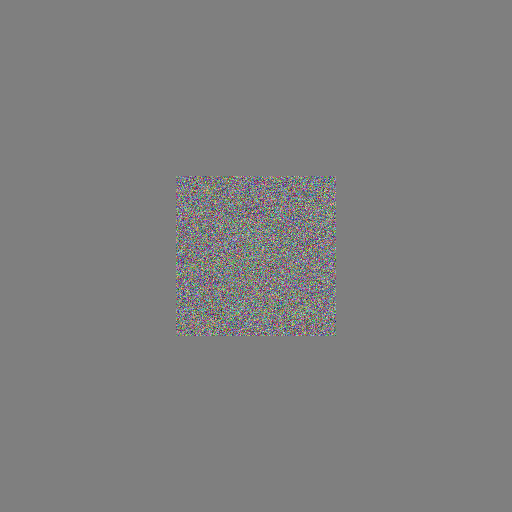}
        \caption{$\text{ratio=0.1}$}
    \end{subfigure}
    \begin{subfigure}[b]{0.32\columnwidth}
        \includegraphics[width=\linewidth]{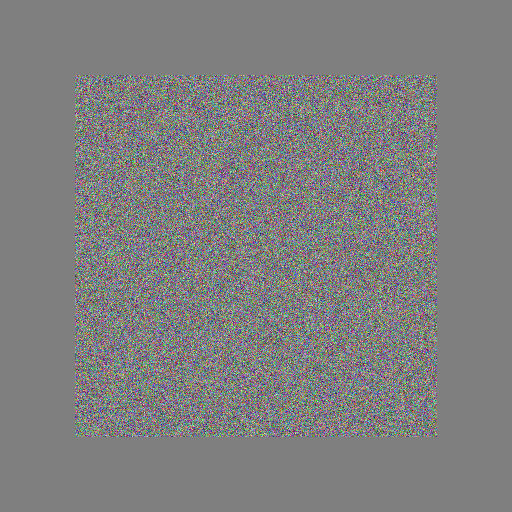}
        \caption{\text{ratio=0.5}}
    \end{subfigure}
    \begin{subfigure}[b]{0.32\columnwidth}
        \includegraphics[width=\linewidth]{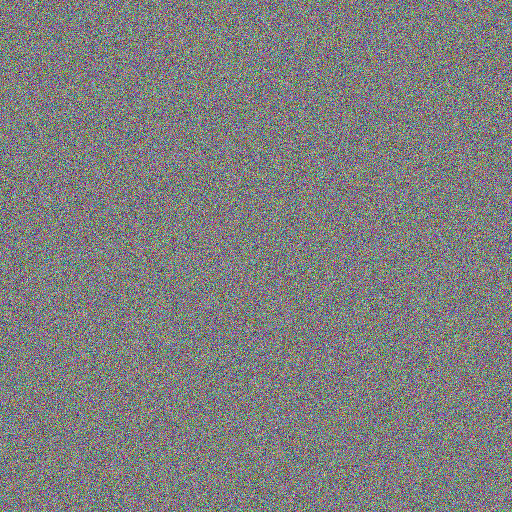}
        \caption{\text{ratio=1.0}}
    \end{subfigure}
    \caption{Visualization of the constructed noisy images under different noise ratios. }
\label{fig:noise_vis}
\end{figure}

\begin{figure}[ht]
    \centering
    \begin{tikzpicture}
    \begin{axis}[
        width=0.93\linewidth,
        height=5.5cm,        
        xlabel={Noise Ratio},
        ylabel={PSNR},
        label style={font=\small},
        tick label style={font=\small},       
        xmin=0, xmax=1,
        ymin=15, ymax=75, 
        xtick={0,0.1,0.2,0.3,0.4,0.5,0.6,0.7,0.8,0.9,1},        
        ymajorgrids=true,
        xmajorgrids=true,
        grid style={dashed, gray!40},         
        legend style={
            at={(0.5,1.02)}, 
            anchor=south,
            legend columns=2,
            font=\small,
            draw=none, 
            fill=none  
        },        
        every axis plot/.append style={thick}
    ]
    \addplot[
        color=blue,       
        mark=square*,     
        mark size=2.0pt, 
        mark options={solid} 
    ] coordinates {
        (0.1,68.51) (0.2,46.63) (0.3,35.24) (0.4,31.96) (0.5,25.92)
        (0.6,24.57) (0.7,22.95) (0.8,22.40) (0.9,21.64) (1.0,21.42)
    };
    \addlegendentry{CAFE}
    \addplot[
        color=red,       
        mark=*,           
        mark size=2.5pt, 
        dashed,           
        mark options={solid} 
    ] coordinates {
        (0.1,71.35) (0.2,55.25) (0.3,43.11) (0.4,34.64) (0.5,29.61)
        (0.6,26.66) (0.7,24.84) (0.8,23.14) (0.9,21.36) (1.0,21.35)
    };
    \addlegendentry{CAFE+}
    \end{axis}
    \end{tikzpicture}
    \caption{PSNR comparison of CAFE and CAFE+ across different noise ratios.}
    \label{cafe+ vs cafe}
\end{figure}
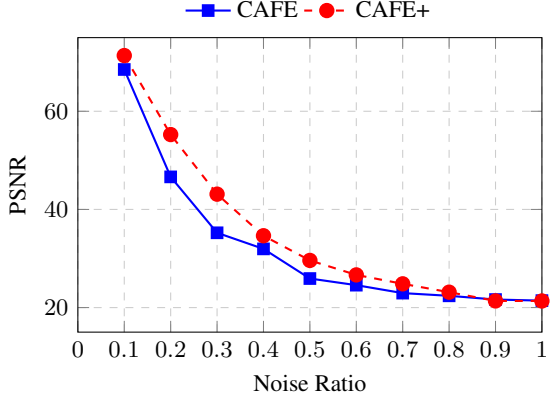

\begin{figure}[t]
\centering
\includegraphics[width=0.6\linewidth]{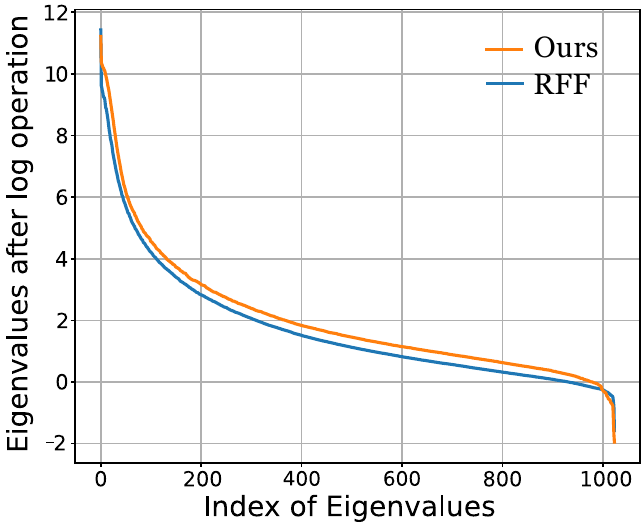}
\caption{NTK eigenvalues visualization.}
\label{fig:eigenvalues}
\end{figure}

To investigate this, we construct a synthetic dataset specifically designed to control the ratio of high-frequency content. As shown in Fig.~\ref{fig:noise_vis}, the dataset consists of a high-frequency central region and a flat background. Let $\mathbf{I}_{\text{noise}}$ denote a Gaussian noise map and $\mathbf{I}_{\text{smooth}} = \mathbf{0}$ represent a zero-filled background. We define a central binary mask $\mathbf{O}\rho$, where the parameter $\rho \in (0, 1]$ determines the ratio of the masked area to the total image area. The final ground-truth image $\mathbf{I}$ is synthesized as
\begin{equation*}
\mathbf{I} = \mathbf{O}\rho \odot \mathbf{I}_{\text{noise}} + (1 - \mathbf{O}\rho) \odot \mathbf{I}_{\text{smooth}},
\end{equation*}
where $\odot$ denotes element-wise multiplication. In our experiments, we vary $\rho$ from $0.1$ to $1.0$ to simulate different proportions of high-frequency information.

As illustrated in the comparison between CAFE+ and CAFE, CAFE+ consistently outperforms CAFE across a wide range of ratios. When the ratio exceeds 0.9, its performance becomes comparable to that of CAFE. This indicates that CAFE+ effectively enhances the representation of mixed spectral distributions without compromising performance in high-frequency dominant scenarios, demonstrating the practical applicability and robustness of our method. 

\subsection{NTK Eigenvalues}

We visualize the NTK eigenvalue of CAFE and RFF, as shown in Fig.~\ref{fig:eigenvalues}. CAFE exhibits a more favorable eigenvalue distribution.

\subsection{Loss Curve Comparison with SIREN}

We plot the training loss curves of our method and SIREN on the image fitting task, as shown in Fig.~\ref{fig:training_loss}. On the one hand, our method converges faster. On the other hand, even beyond the predefined training iterations (6,000 in our experiments), our method continues to achieve lower loss values. These results demonstrate the superior optimization efficiency of our framework.

\begin{figure}[t]
\centering
\includegraphics[width=0.8\linewidth]{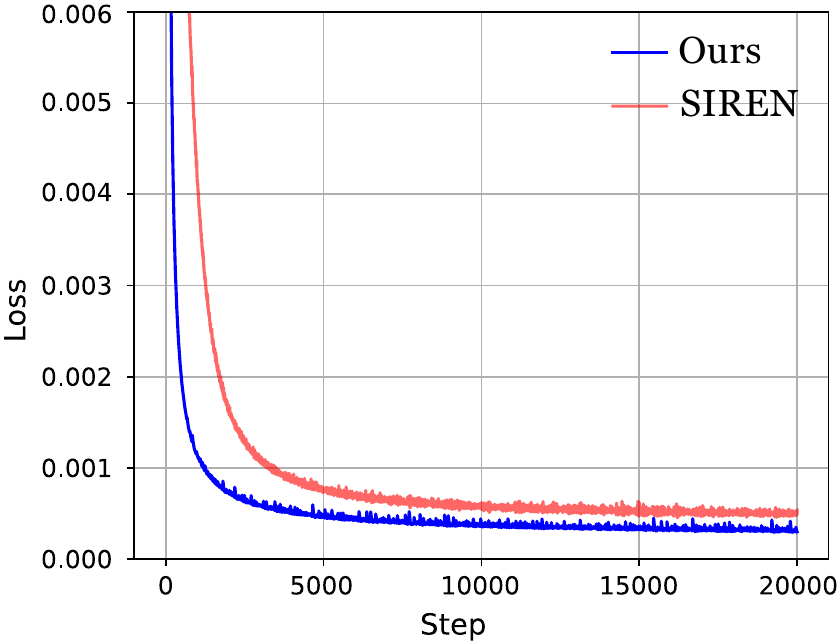}
\caption{Training loss curves of our method and SIREN. Our method converges faster and reaches a lower loss value.}
\label{fig:training_loss}
\end{figure}

{
\setlength{\tabcolsep}{2pt}  

\begin{figure*}[htbp]
\renewcommand{\arraystretch}{0.8}
\setlength\tabcolsep{1pt}
\centering
\begin{tabular}{*{7}{>{\centering\arraybackslash}m{0.13\linewidth}}}
\small{Ours(L)} & \small{Ours} & 
 \small{SL$^2$A} &
\small{FINER} & \small{SCONE} & \small{SIREN} & \small{WIRE}  \\

\begin{tikzpicture}
\node[anchor=north west, inner sep=0] (image) at (0,0)
{\includegraphics[width=\linewidth]{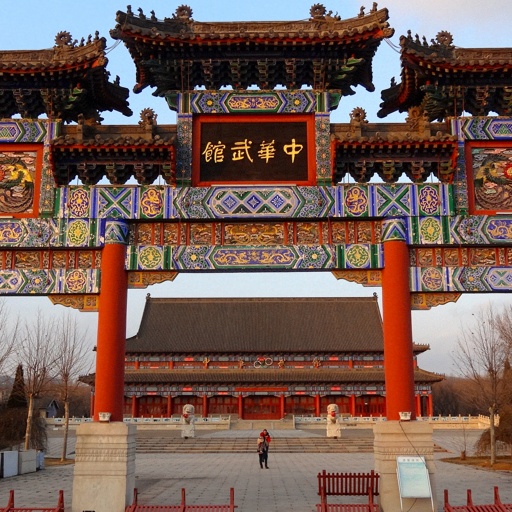}};
\begin{scope}[x={(image.south east)}, y={(image.north west)}]
\node[anchor=north west,
      fill=white, fill opacity=0.7, text opacity=1,
      inner sep=1pt, font=\scriptsize, rounded corners=1pt]
      at (0,1) {39.47 dB};
\end{scope}
\end{tikzpicture} &

\begin{tikzpicture}
\node[anchor=north west, inner sep=0] (image) at (0,0)
{\includegraphics[width=\linewidth]{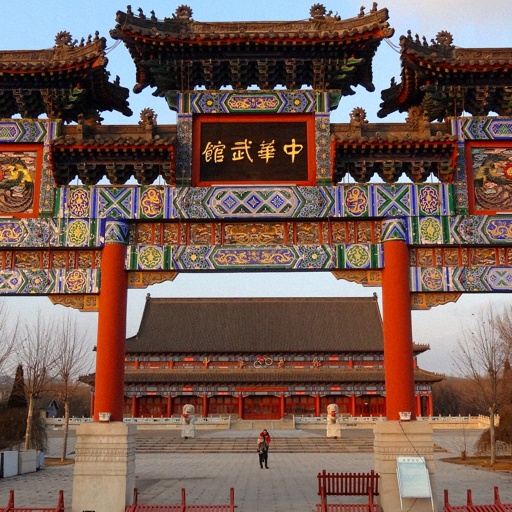}};
\begin{scope}[x={(image.south east)}, y={(image.north west)}]
\node[anchor=north west,
      fill=white, fill opacity=0.7, text opacity=1,
      inner sep=1pt, font=\scriptsize, rounded corners=1pt]
      at (0,1) {36.92 dB};
\end{scope}
\end{tikzpicture} &

\begin{tikzpicture}
\node[anchor=north west, inner sep=0] (image) at (0,0)
{\includegraphics[width=\linewidth]{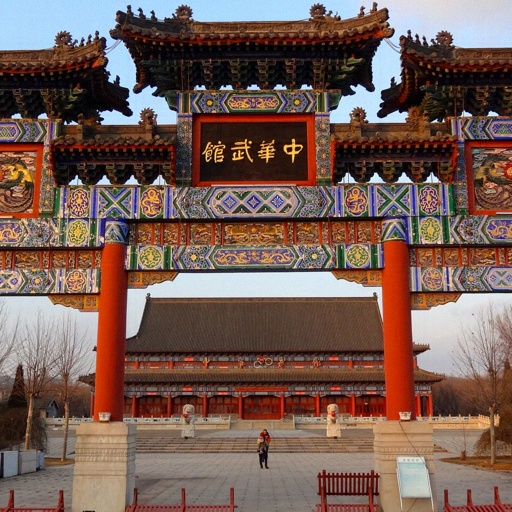}};
\begin{scope}[x={(image.south east)}, y={(image.north west)}]
\node[anchor=north west,
      fill=white, fill opacity=0.7, text opacity=1,
      inner sep=1pt, font=\scriptsize, rounded corners=1pt]
      at (0,1) {36.22 dB};
\end{scope}
\end{tikzpicture} &

\begin{tikzpicture}
\node[anchor=north west, inner sep=0] (image) at (0,0)
{\includegraphics[width=\linewidth]{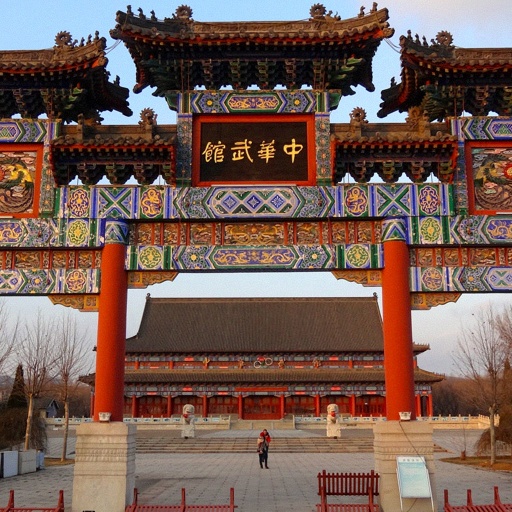}};
\begin{scope}[x={(image.south east)}, y={(image.north west)}]
\node[anchor=north west,
      fill=white, fill opacity=0.7, text opacity=1,
      inner sep=1pt, font=\scriptsize, rounded corners=1pt]
      at (0,1) {35.93 dB};
\end{scope}
\end{tikzpicture} &

\begin{tikzpicture}
\node[anchor=north west, inner sep=0] (image) at (0,0)
{\includegraphics[width=\linewidth]{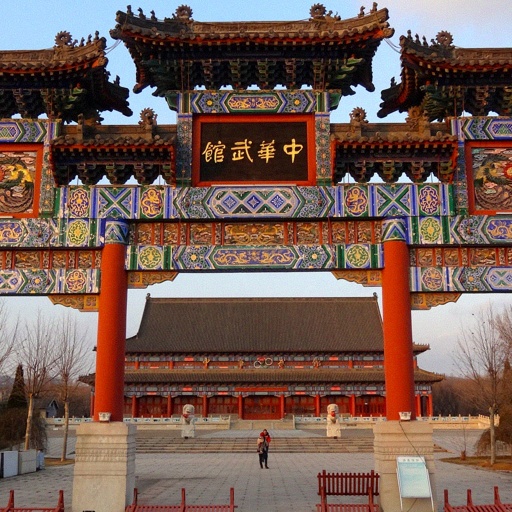}};
\begin{scope}[x={(image.south east)}, y={(image.north west)}]
\node[anchor=north west,
      fill=white, fill opacity=0.7, text opacity=1,
      inner sep=1pt, font=\scriptsize, rounded corners=1pt]
      at (0,1) {35.32 dB};
\end{scope}
\end{tikzpicture} &

\begin{tikzpicture}
\node[anchor=north west, inner sep=0] (image) at (0,0)
{\includegraphics[width=\linewidth]{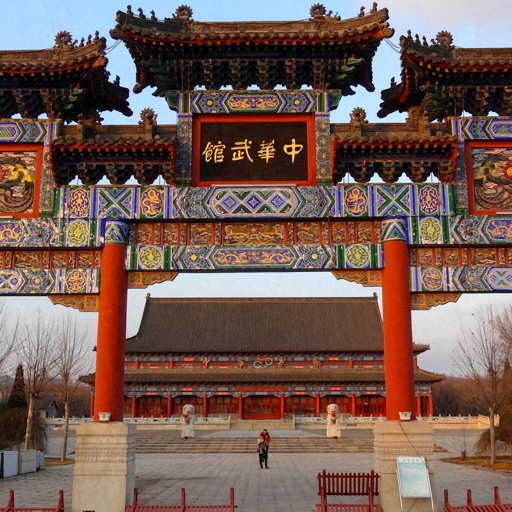}};
\begin{scope}[x={(image.south east)}, y={(image.north west)}]
\node[anchor=north west,
      fill=white, fill opacity=0.7, text opacity=1,
      inner sep=1pt, font=\scriptsize, rounded corners=1pt]
      at (0,1) {33.48 dB};
\end{scope}
\end{tikzpicture} &

\begin{tikzpicture}
\node[anchor=north west, inner sep=0] (image) at (0,0)
{\includegraphics[width=\linewidth]{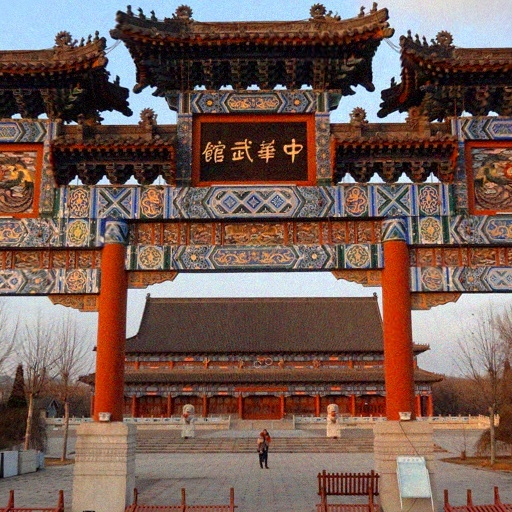}};
\begin{scope}[x={(image.south east)}, y={(image.north west)}]
\node[anchor=north west,
      fill=white, fill opacity=0.7, text opacity=1,
      inner sep=1pt, font=\scriptsize, rounded corners=1pt]
      at (0,1) {27.99 dB};
\end{scope}
\end{tikzpicture}
\\


\begin{tikzpicture}
\node[anchor=north west, inner sep=0] (image) at (0,0)
{\includegraphics[width=\linewidth]{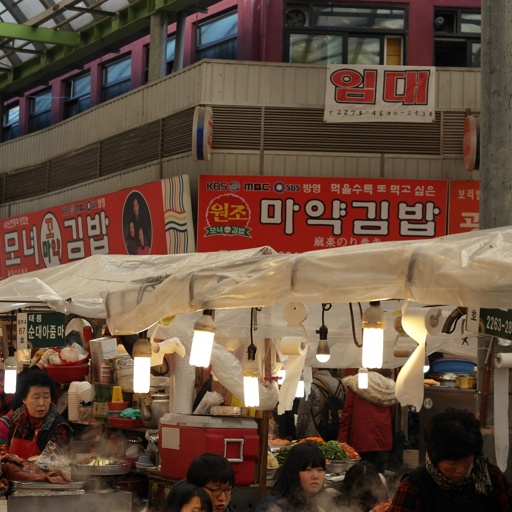}};
\begin{scope}[x={(image.south east)}, y={(image.north west)}]
\node[anchor=north west,
      fill=white, fill opacity=0.7, text opacity=1,
      inner sep=1pt, font=\scriptsize, rounded corners=1pt]
      at (0,1) {44.34 dB};
\end{scope}
\end{tikzpicture} &

\begin{tikzpicture}
\node[anchor=north west, inner sep=0] (image) at (0,0)
{\includegraphics[width=\linewidth]{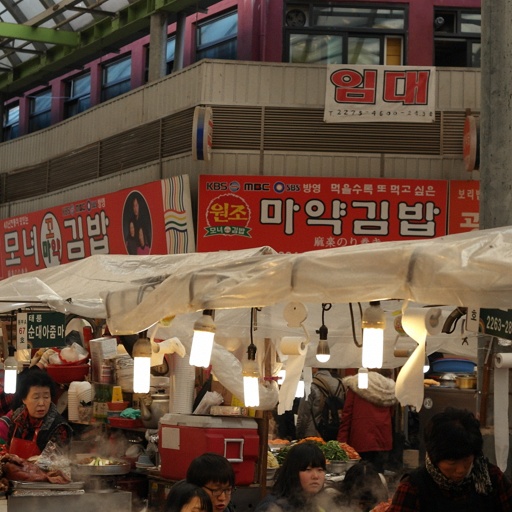}};
\begin{scope}[x={(image.south east)}, y={(image.north west)}]
\node[anchor=north west,
      fill=white, fill opacity=0.7, text opacity=1,
      inner sep=1pt, font=\scriptsize, rounded corners=1pt]
      at (0,1) {42.57 dB};
\end{scope}
\end{tikzpicture} &

\begin{tikzpicture}
\node[anchor=north west, inner sep=0] (image) at (0,0)
{\includegraphics[width=\linewidth]{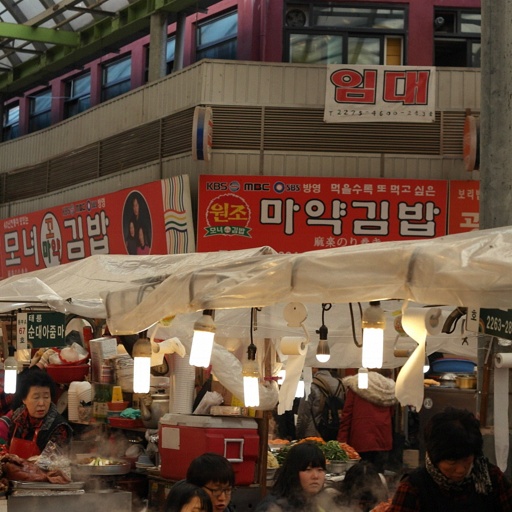}};
\begin{scope}[x={(image.south east)}, y={(image.north west)}]
\node[anchor=north west,
      fill=white, fill opacity=0.7, text opacity=1,
      inner sep=1pt, font=\scriptsize, rounded corners=1pt]
      at (0,1) {41.91 dB};
\end{scope}
\end{tikzpicture} &

\begin{tikzpicture}
\node[anchor=north west, inner sep=0] (image) at (0,0)
{\includegraphics[width=\linewidth]{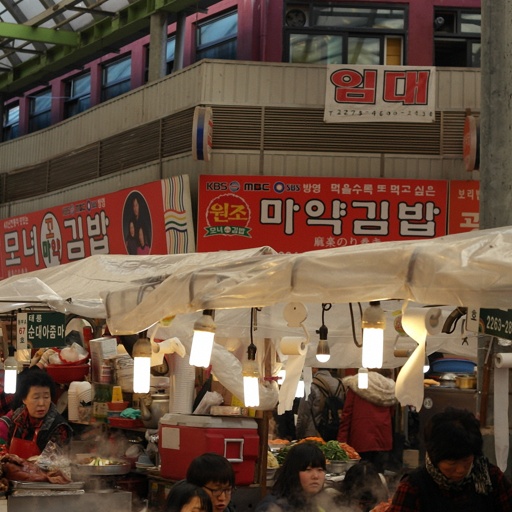}};
\begin{scope}[x={(image.south east)}, y={(image.north west)}]
\node[anchor=north west,
      fill=white, fill opacity=0.7, text opacity=1,
      inner sep=1pt, font=\scriptsize, rounded corners=1pt]
      at (0,1) {42.33 dB};
\end{scope}
\end{tikzpicture} &

\begin{tikzpicture}
\node[anchor=north west, inner sep=0] (image) at (0,0)
{\includegraphics[width=\linewidth]{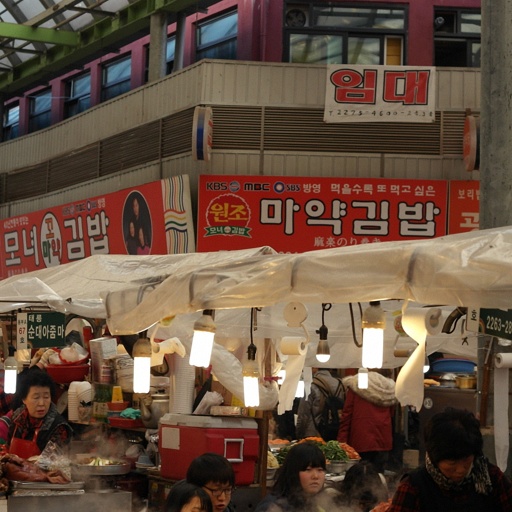}};
\begin{scope}[x={(image.south east)}, y={(image.north west)}]
\node[anchor=north west,
      fill=white, fill opacity=0.7, text opacity=1,
      inner sep=1pt, font=\scriptsize, rounded corners=1pt]
      at (0,1) {41.88 dB};
\end{scope}
\end{tikzpicture} &

\begin{tikzpicture}
\node[anchor=north west, inner sep=0] (image) at (0,0)
{\includegraphics[width=\linewidth]{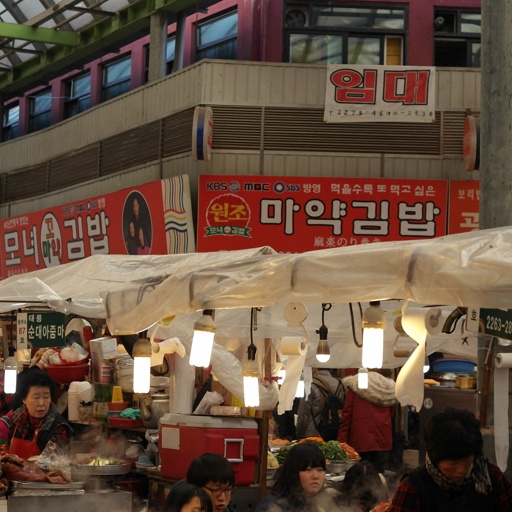}};
\begin{scope}[x={(image.south east)}, y={(image.north west)}]
\node[anchor=north west,
      fill=white, fill opacity=0.7, text opacity=1,
      inner sep=1pt, font=\scriptsize, rounded corners=1pt]
      at (0,1) {40.08 dB};
\end{scope}
\end{tikzpicture} &

\begin{tikzpicture}
\node[anchor=north west, inner sep=0] (image) at (0,0)
{\includegraphics[width=\linewidth]{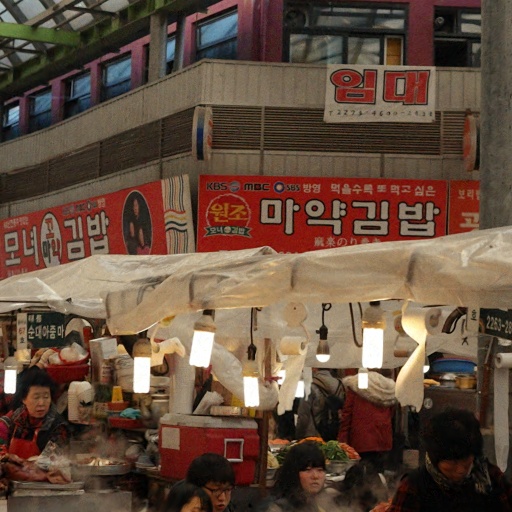}};
\begin{scope}[x={(image.south east)}, y={(image.north west)}]
\node[anchor=north west,
      fill=white, fill opacity=0.7, text opacity=1,
      inner sep=1pt, font=\scriptsize, rounded corners=1pt]
      at (0,1) {34.44 dB};
\end{scope}
\end{tikzpicture}
\\


\begin{tikzpicture}
\node[anchor=north west, inner sep=0] (image) at (0,0)
{\includegraphics[width=\linewidth]{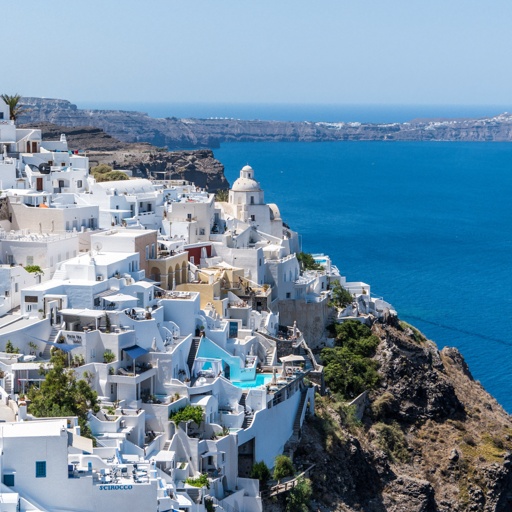}};
\begin{scope}[x={(image.south east)}, y={(image.north west)}]
\node[anchor=north west,
      fill=white, fill opacity=0.7, text opacity=1,
      inner sep=1pt, font=\scriptsize, rounded corners=1pt]
      at (0,1) {44.18 dB};
\end{scope}
\end{tikzpicture} &

\begin{tikzpicture}
\node[anchor=north west, inner sep=0] (image) at (0,0)
{\includegraphics[width=\linewidth]{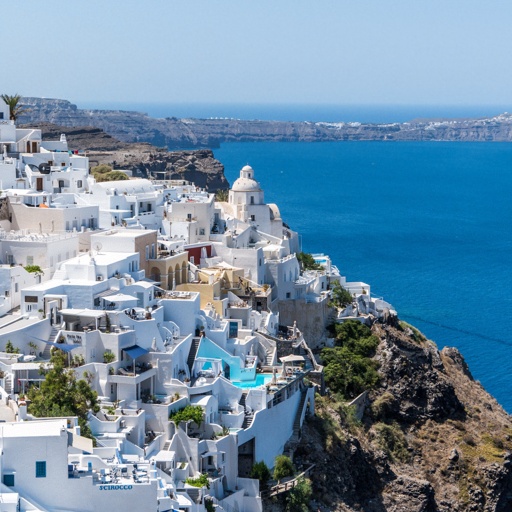}};
\begin{scope}[x={(image.south east)}, y={(image.north west)}]
\node[anchor=north west,
      fill=white, fill opacity=0.7, text opacity=1,
      inner sep=1pt, font=\scriptsize, rounded corners=1pt]
      at (0,1) {42.13 dB};
\end{scope}
\end{tikzpicture} &

\begin{tikzpicture}
\node[anchor=north west, inner sep=0] (image) at (0,0)
{\includegraphics[width=\linewidth]{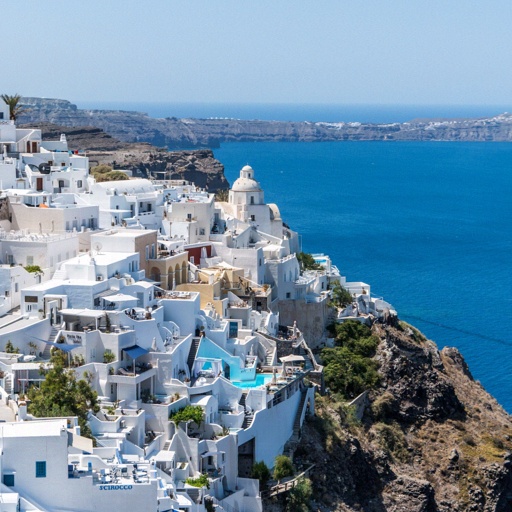}};
\begin{scope}[x={(image.south east)}, y={(image.north west)}]
\node[anchor=north west,
      fill=white, fill opacity=0.7, text opacity=1,
      inner sep=1pt, font=\scriptsize, rounded corners=1pt]
      at (0,1) {39.87 dB};
\end{scope}
\end{tikzpicture} &

\begin{tikzpicture}
\node[anchor=north west, inner sep=0] (image) at (0,0)
{\includegraphics[width=\linewidth]{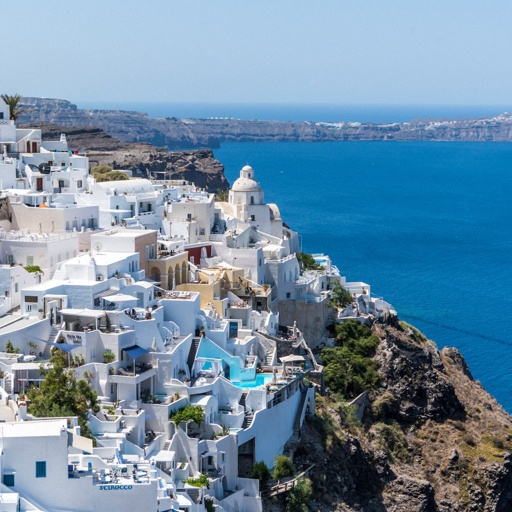}};
\begin{scope}[x={(image.south east)}, y={(image.north west)}]
\node[anchor=north west,
      fill=white, fill opacity=0.7, text opacity=1,
      inner sep=1pt, font=\scriptsize, rounded corners=1pt]
      at (0,1) {39.44 dB};
\end{scope}
\end{tikzpicture} &

\begin{tikzpicture}
\node[anchor=north west, inner sep=0] (image) at (0,0)
{\includegraphics[width=\linewidth]{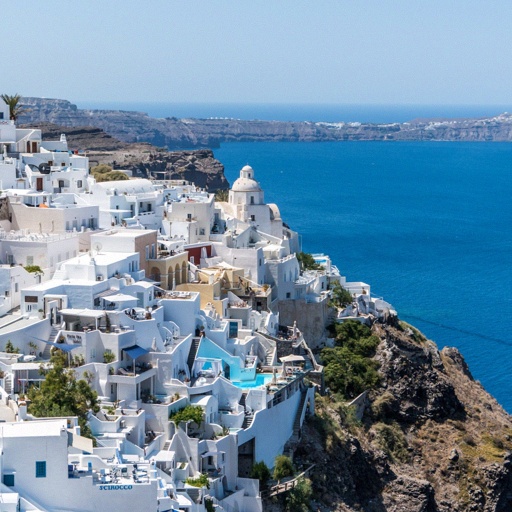}};
\begin{scope}[x={(image.south east)}, y={(image.north west)}]
\node[anchor=north west,
      fill=white, fill opacity=0.7, text opacity=1,
      inner sep=1pt, font=\scriptsize, rounded corners=1pt]
      at (0,1) {38.56 dB};
\end{scope}
\end{tikzpicture} &

\begin{tikzpicture}
\node[anchor=north west, inner sep=0] (image) at (0,0)
{\includegraphics[width=\linewidth]{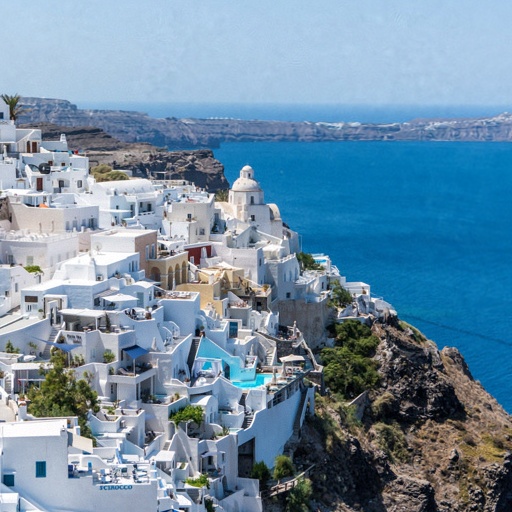}};
\begin{scope}[x={(image.south east)}, y={(image.north west)}]
\node[anchor=north west,
      fill=white, fill opacity=0.7, text opacity=1,
      inner sep=1pt, font=\scriptsize, rounded corners=1pt]
      at (0,1) {36.68 dB};
\end{scope}
\end{tikzpicture} &

\begin{tikzpicture}
\node[anchor=north west, inner sep=0] (image) at (0,0)
{\includegraphics[width=\linewidth]{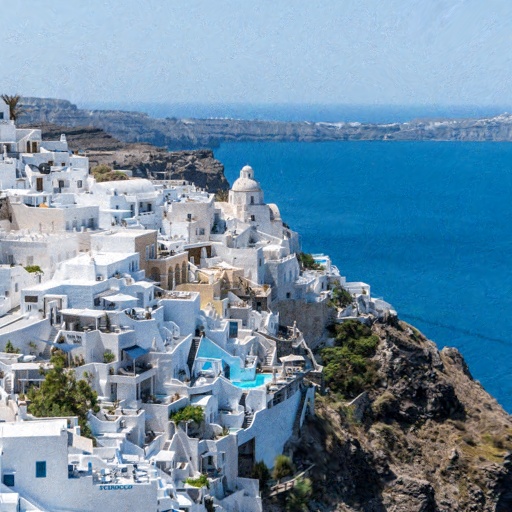}};
\begin{scope}[x={(image.south east)}, y={(image.north west)}]
\node[anchor=north west,
      fill=white, fill opacity=0.7, text opacity=1,
      inner sep=1pt, font=\scriptsize, rounded corners=1pt]
      at (0,1) {31.12 dB};
\end{scope}
\end{tikzpicture}
\\


\begin{tikzpicture}
\node[anchor=north west, inner sep=0] (image) at (0,0)
{\includegraphics[width=\linewidth]{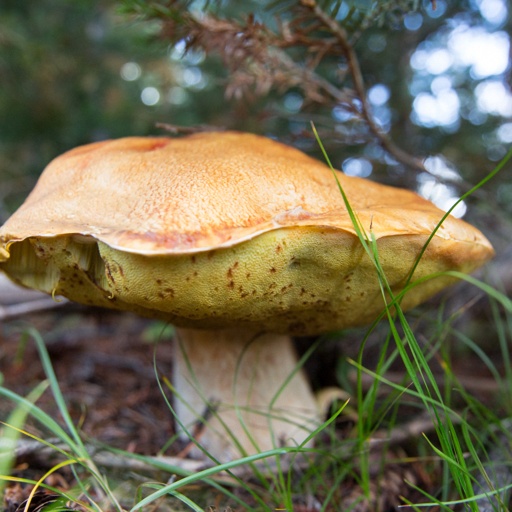}};
\begin{scope}[x={(image.south east)}, y={(image.north west)}]
\node[anchor=north west,
      fill=white, fill opacity=0.7, text opacity=1,
      inner sep=1pt, font=\scriptsize, rounded corners=1pt]
      at (0,1) {44.69 dB};
\end{scope}
\end{tikzpicture} &

\begin{tikzpicture}
\node[anchor=north west, inner sep=0] (image) at (0,0)
{\includegraphics[width=\linewidth]{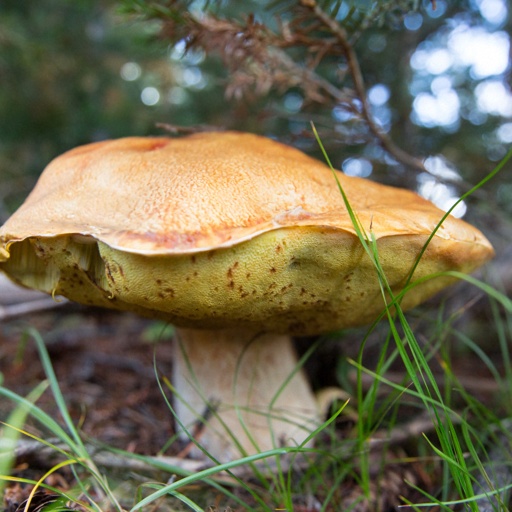}};
\begin{scope}[x={(image.south east)}, y={(image.north west)}]
\node[anchor=north west,
      fill=white, fill opacity=0.7, text opacity=1,
      inner sep=1pt, font=\scriptsize, rounded corners=1pt]
      at (0,1) {42.92 dB};
\end{scope}
\end{tikzpicture} &

\begin{tikzpicture}
\node[anchor=north west, inner sep=0] (image) at (0,0)
{\includegraphics[width=\linewidth]{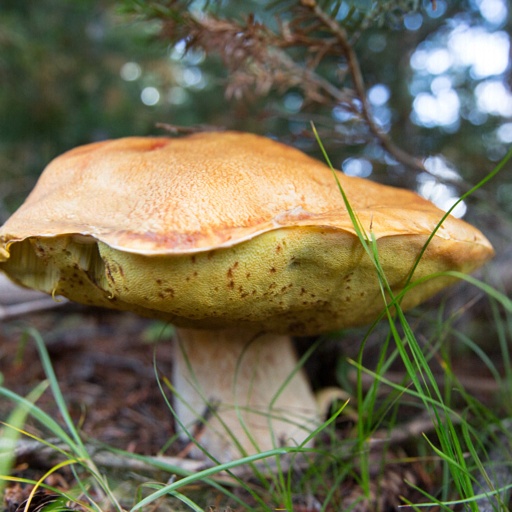}};
\begin{scope}[x={(image.south east)}, y={(image.north west)}]
\node[anchor=north west,
      fill=white, fill opacity=0.7, text opacity=1,
      inner sep=1pt, font=\scriptsize, rounded corners=1pt]
      at (0,1) {41.87 dB};
\end{scope}
\end{tikzpicture} &

\begin{tikzpicture}
\node[anchor=north west, inner sep=0] (image) at (0,0)
{\includegraphics[width=\linewidth]{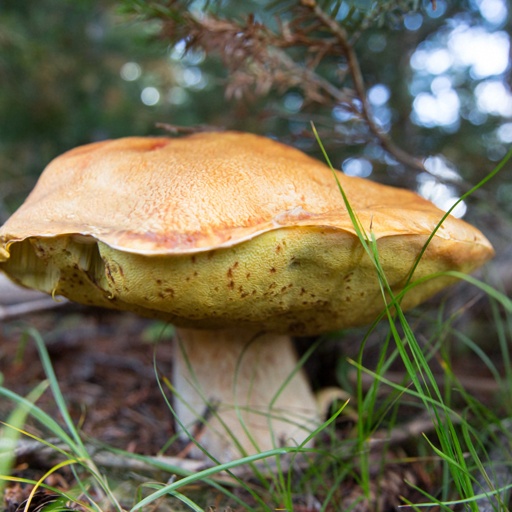}};
\begin{scope}[x={(image.south east)}, y={(image.north west)}]
\node[anchor=north west,
      fill=white, fill opacity=0.7, text opacity=1,
      inner sep=1pt, font=\scriptsize, rounded corners=1pt]
      at (0,1) {41.99 dB};
\end{scope}
\end{tikzpicture} &

\begin{tikzpicture}
\node[anchor=north west, inner sep=0] (image) at (0,0)
{\includegraphics[width=\linewidth]{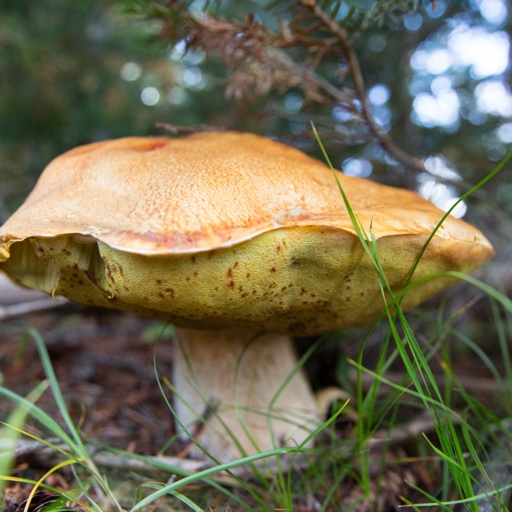}};
\begin{scope}[x={(image.south east)}, y={(image.north west)}]
\node[anchor=north west,
      fill=white, fill opacity=0.7, text opacity=1,
      inner sep=1pt, font=\scriptsize, rounded corners=1pt]
      at (0,1) {42.27 dB};
\end{scope}
\end{tikzpicture} &

\begin{tikzpicture}
\node[anchor=north west, inner sep=0] (image) at (0,0)
{\includegraphics[width=\linewidth]{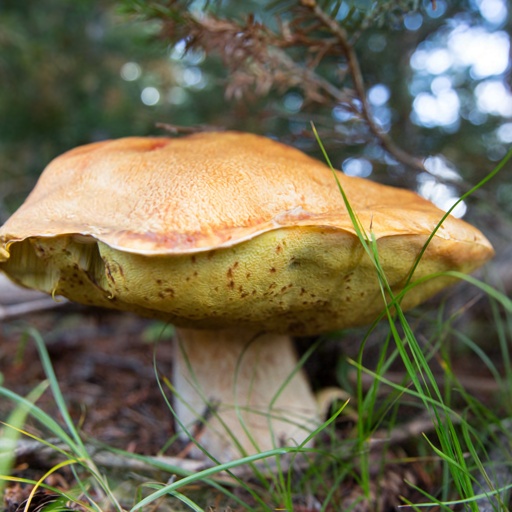}};
\begin{scope}[x={(image.south east)}, y={(image.north west)}]
\node[anchor=north west,
      fill=white, fill opacity=0.7, text opacity=1,
      inner sep=1pt, font=\scriptsize, rounded corners=1pt]
      at (0,1) {40.39 dB};
\end{scope}
\end{tikzpicture} &

\begin{tikzpicture}
\node[anchor=north west, inner sep=0] (image) at (0,0)
{\includegraphics[width=\linewidth]{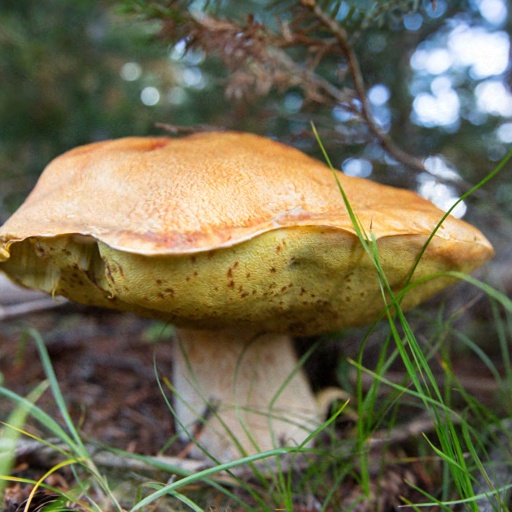}};
\begin{scope}[x={(image.south east)}, y={(image.north west)}]
\node[anchor=north west,
      fill=white, fill opacity=0.7, text opacity=1,
      inner sep=1pt, font=\scriptsize, rounded corners=1pt]
      at (0,1) {36.06 dB};
\end{scope}
\end{tikzpicture}
\\


\begin{tikzpicture}
\node[anchor=north west, inner sep=0] (image) at (0,0)
{\includegraphics[width=\linewidth]{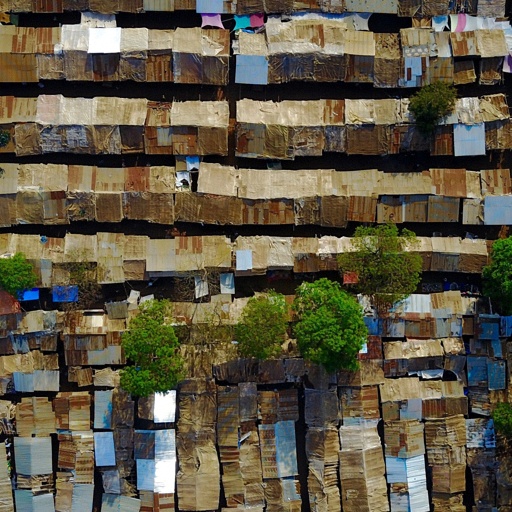}};
\begin{scope}[x={(image.south east)}, y={(image.north west)}]
\node[anchor=north west,
      fill=white, fill opacity=0.7, text opacity=1,
      inner sep=1pt, font=\scriptsize, rounded corners=1pt]
      at (0,1) {39.15 dB};
\end{scope}
\end{tikzpicture} &

\begin{tikzpicture}
\node[anchor=north west, inner sep=0] (image) at (0,0)
{\includegraphics[width=\linewidth]{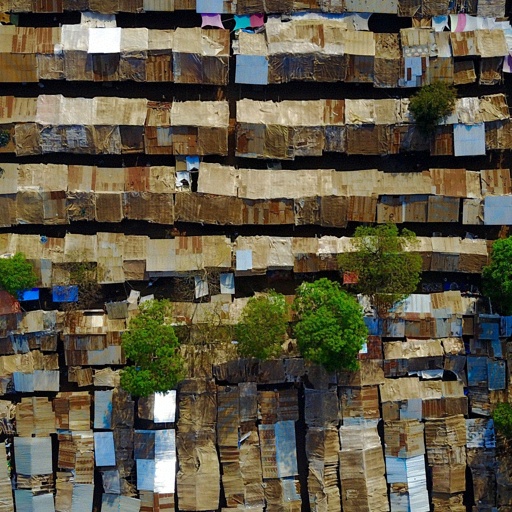}};
\begin{scope}[x={(image.south east)}, y={(image.north west)}]
\node[anchor=north west,
      fill=white, fill opacity=0.7, text opacity=1,
      inner sep=1pt, font=\scriptsize, rounded corners=1pt]
      at (0,1) {36.64 dB};
\end{scope}
\end{tikzpicture} &

\begin{tikzpicture}
\node[anchor=north west, inner sep=0] (image) at (0,0)
{\includegraphics[width=\linewidth]{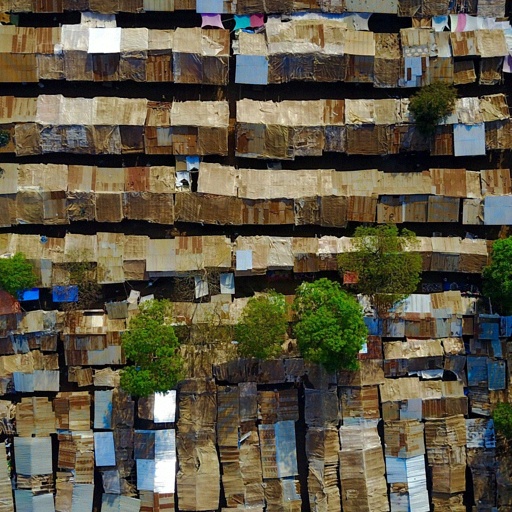}};
\begin{scope}[x={(image.south east)}, y={(image.north west)}]
\node[anchor=north west,
      fill=white, fill opacity=0.7, text opacity=1,
      inner sep=1pt, font=\scriptsize, rounded corners=1pt]
      at (0,1) {37.03 dB};
\end{scope}
\end{tikzpicture} &

\begin{tikzpicture}
\node[anchor=north west, inner sep=0] (image) at (0,0)
{\includegraphics[width=\linewidth]{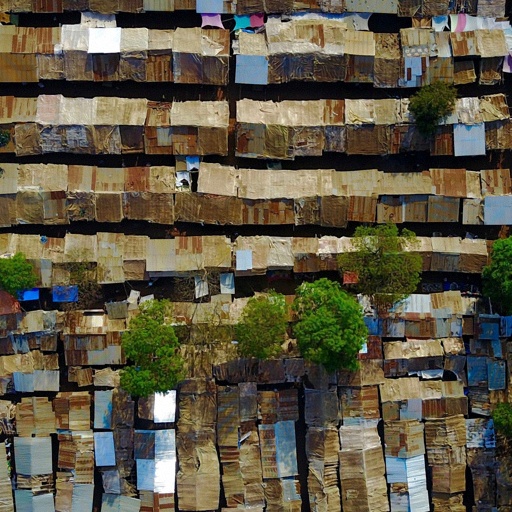}};
\begin{scope}[x={(image.south east)}, y={(image.north west)}]
\node[anchor=north west,
      fill=white, fill opacity=0.7, text opacity=1,
      inner sep=1pt, font=\scriptsize, rounded corners=1pt]
      at (0,1) {36.78 dB};
\end{scope}
\end{tikzpicture} &

\begin{tikzpicture}
\node[anchor=north west, inner sep=0] (image) at (0,0)
{\includegraphics[width=\linewidth]{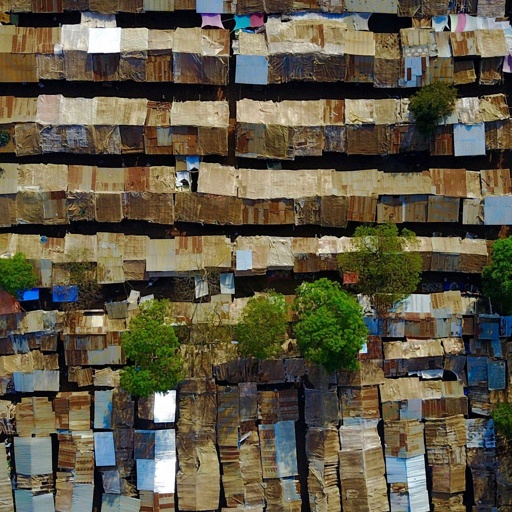}};
\begin{scope}[x={(image.south east)}, y={(image.north west)}]
\node[anchor=north west,
      fill=white, fill opacity=0.7, text opacity=1,
      inner sep=1pt, font=\scriptsize, rounded corners=1pt]
      at (0,1) {36.87 dB};
\end{scope}
\end{tikzpicture} &

\begin{tikzpicture}
\node[anchor=north west, inner sep=0] (image) at (0,0)
{\includegraphics[width=\linewidth]{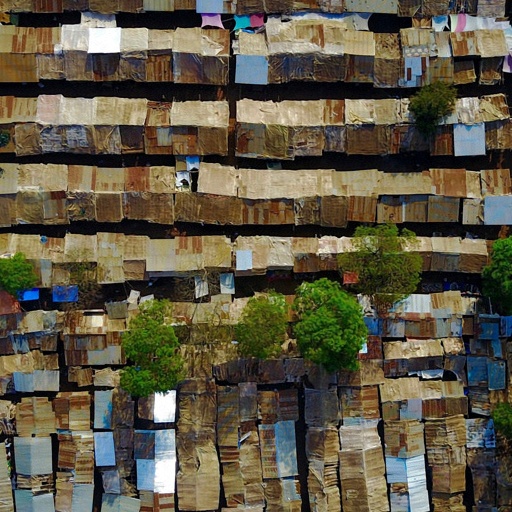}};
\begin{scope}[x={(image.south east)}, y={(image.north west)}]
\node[anchor=north west,
      fill=white, fill opacity=0.7, text opacity=1,
      inner sep=1pt, font=\scriptsize, rounded corners=1pt]
      at (0,1) {34.41 dB};
\end{scope}
\end{tikzpicture} &

\begin{tikzpicture}
\node[anchor=north west, inner sep=0] (image) at (0,0)
{\includegraphics[width=\linewidth]{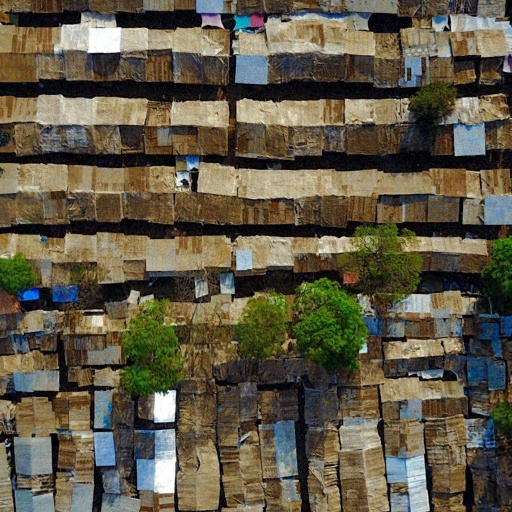}};
\begin{scope}[x={(image.south east)}, y={(image.north west)}]
\node[anchor=north west,
      fill=white, fill opacity=0.7, text opacity=1,
      inner sep=1pt, font=\scriptsize, rounded corners=1pt]
      at (0,1) {28.80 dB};
\end{scope}
\end{tikzpicture}
\\


\begin{tikzpicture}
\node[anchor=north west, inner sep=0] (image) at (0,0)
{\includegraphics[width=\linewidth]{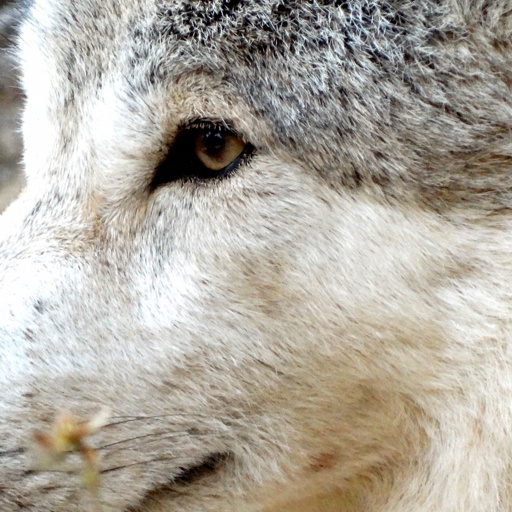}};
\begin{scope}[x={(image.south east)}, y={(image.north west)}]
\node[anchor=north west,
      fill=white, fill opacity=0.7, text opacity=1,
      inner sep=1pt, font=\scriptsize, rounded corners=1pt]
      at (0,1) {42.63 dB};
\end{scope}
\end{tikzpicture} &

\begin{tikzpicture}
\node[anchor=north west, inner sep=0] (image) at (0,0)
{\includegraphics[width=\linewidth]{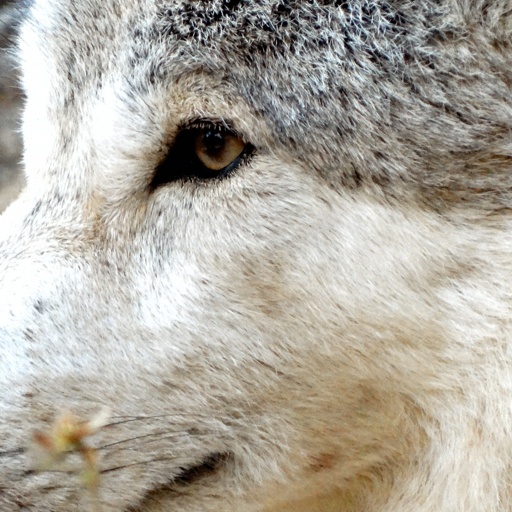}};
\begin{scope}[x={(image.south east)}, y={(image.north west)}]
\node[anchor=north west,
      fill=white, fill opacity=0.7, text opacity=1,
      inner sep=1pt, font=\scriptsize, rounded corners=1pt]
      at (0,1) {41.00 dB};
\end{scope}
\end{tikzpicture} &

\begin{tikzpicture}
\node[anchor=north west, inner sep=0] (image) at (0,0)
{\includegraphics[width=\linewidth]{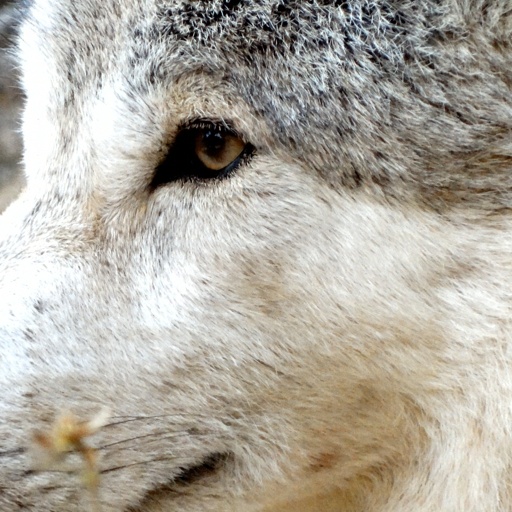}};
\begin{scope}[x={(image.south east)}, y={(image.north west)}]
\node[anchor=north west,
      fill=white, fill opacity=0.7, text opacity=1,
      inner sep=1pt, font=\scriptsize, rounded corners=1pt]
      at (0,1) {41.11 dB};
\end{scope}
\end{tikzpicture} &

\begin{tikzpicture}
\node[anchor=north west, inner sep=0] (image) at (0,0)
{\includegraphics[width=\linewidth]{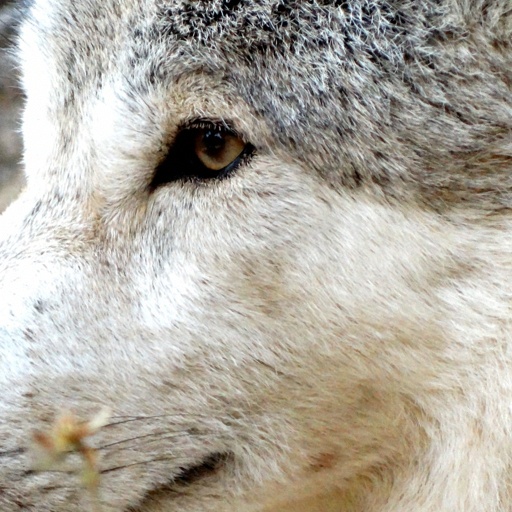}};
\begin{scope}[x={(image.south east)}, y={(image.north west)}]
\node[anchor=north west,
      fill=white, fill opacity=0.7, text opacity=1,
      inner sep=1pt, font=\scriptsize, rounded corners=1pt]
      at (0,1) {41,08 dB};
\end{scope}
\end{tikzpicture} &

\begin{tikzpicture}
\node[anchor=north west, inner sep=0] (image) at (0,0)
{\includegraphics[width=\linewidth]{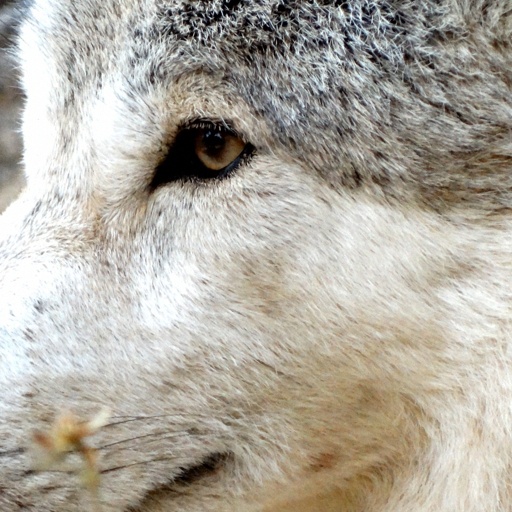}};
\begin{scope}[x={(image.south east)}, y={(image.north west)}]
\node[anchor=north west,
      fill=white, fill opacity=0.7, text opacity=1,
      inner sep=1pt, font=\scriptsize, rounded corners=1pt]
      at (0,1) {41.78 dB};
\end{scope}
\end{tikzpicture} &

\begin{tikzpicture}
\node[anchor=north west, inner sep=0] (image) at (0,0)
{\includegraphics[width=\linewidth]{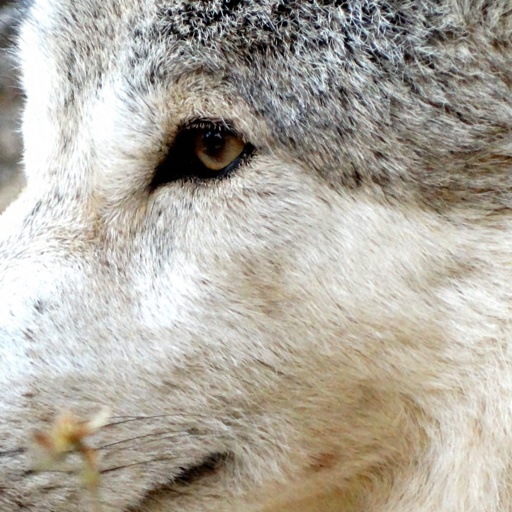}};
\begin{scope}[x={(image.south east)}, y={(image.north west)}]
\node[anchor=north west,
      fill=white, fill opacity=0.7, text opacity=1,
      inner sep=1pt, font=\scriptsize, rounded corners=1pt]
      at (0,1) {38.78 dB};
\end{scope}
\end{tikzpicture} &

\begin{tikzpicture}
\node[anchor=north west, inner sep=0] (image) at (0,0)
{\includegraphics[width=\linewidth]{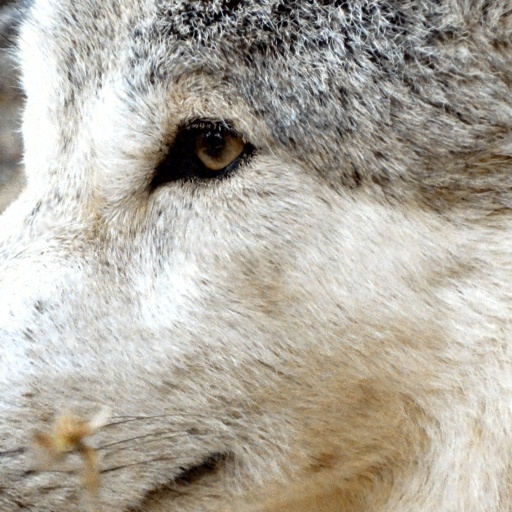}};
\begin{scope}[x={(image.south east)}, y={(image.north west)}]
\node[anchor=north west,
      fill=white, fill opacity=0.7, text opacity=1,
      inner sep=1pt, font=\scriptsize, rounded corners=1pt]
      at (0,1) {32.50 dB};
\end{scope}
\end{tikzpicture}
\\


\begin{tikzpicture}
\node[anchor=north west, inner sep=0] (image) at (0,0)
{\includegraphics[width=\linewidth]{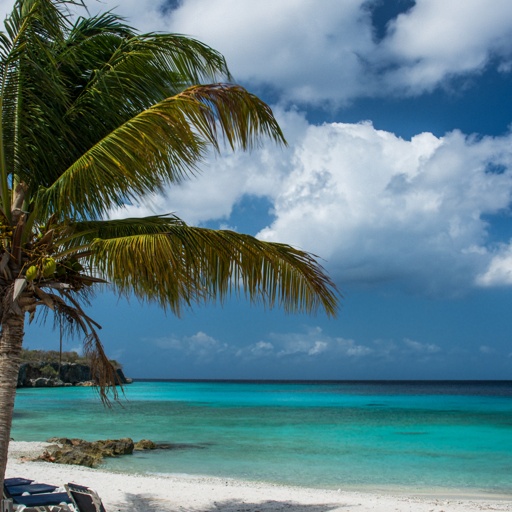}};
\begin{scope}[x={(image.south east)}, y={(image.north west)}]
\node[anchor=north west,
      fill=white, fill opacity=0.7, text opacity=1,
      inner sep=1pt, font=\scriptsize, rounded corners=1pt]
      at (0,1) {45.01 dB};
\end{scope}
\end{tikzpicture} &

\begin{tikzpicture}
\node[anchor=north west, inner sep=0] (image) at (0,0)
{\includegraphics[width=\linewidth]{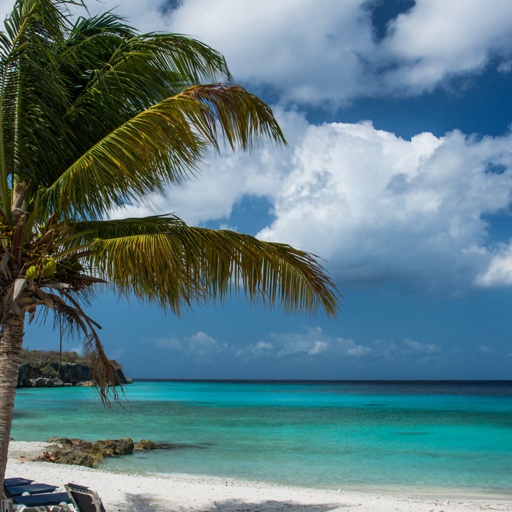}};
\begin{scope}[x={(image.south east)}, y={(image.north west)}]
\node[anchor=north west,
      fill=white, fill opacity=0.7, text opacity=1,
      inner sep=1pt, font=\scriptsize, rounded corners=1pt]
      at (0,1) {42.61 dB};
\end{scope}
\end{tikzpicture} &

\begin{tikzpicture}
\node[anchor=north west, inner sep=0] (image) at (0,0)
{\includegraphics[width=\linewidth]{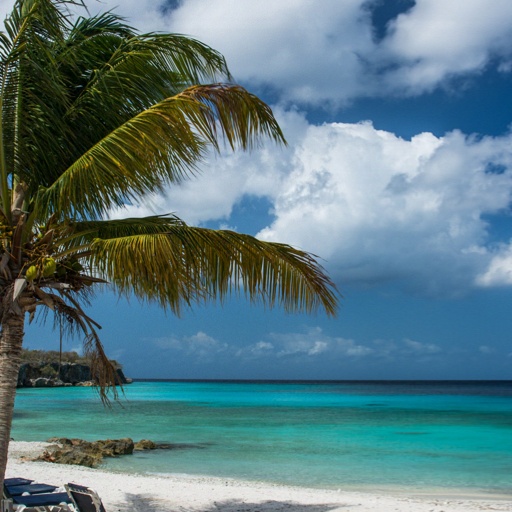}};
\begin{scope}[x={(image.south east)}, y={(image.north west)}]
\node[anchor=north west,
      fill=white, fill opacity=0.7, text opacity=1,
      inner sep=1pt, font=\scriptsize, rounded corners=1pt]
      at (0,1) {40.23 dB};
\end{scope}
\end{tikzpicture} &

\begin{tikzpicture}
\node[anchor=north west, inner sep=0] (image) at (0,0)
{\includegraphics[width=\linewidth]{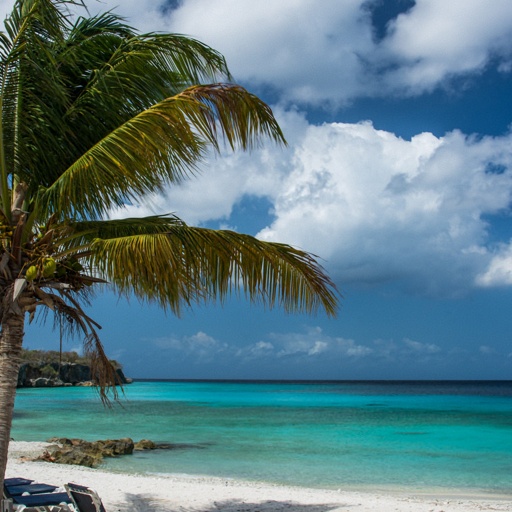}};
\begin{scope}[x={(image.south east)}, y={(image.north west)}]
\node[anchor=north west,
      fill=white, fill opacity=0.7, text opacity=1,
      inner sep=1pt, font=\scriptsize, rounded corners=1pt]
      at (0,1) {39.73 dB};
\end{scope}
\end{tikzpicture} &

\begin{tikzpicture}
\node[anchor=north west, inner sep=0] (image) at (0,0)
{\includegraphics[width=\linewidth]{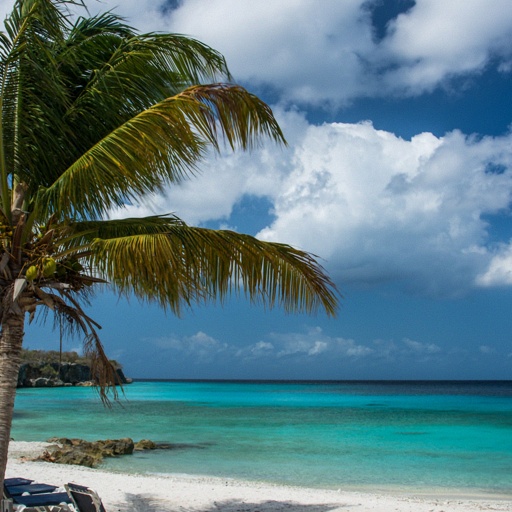}};
\begin{scope}[x={(image.south east)}, y={(image.north west)}]
\node[anchor=north west,
      fill=white, fill opacity=0.7, text opacity=1,
      inner sep=1pt, font=\scriptsize, rounded corners=1pt]
      at (0,1) {38.92 dB};
\end{scope}
\end{tikzpicture} &

\begin{tikzpicture}
\node[anchor=north west, inner sep=0] (image) at (0,0)
{\includegraphics[width=\linewidth]{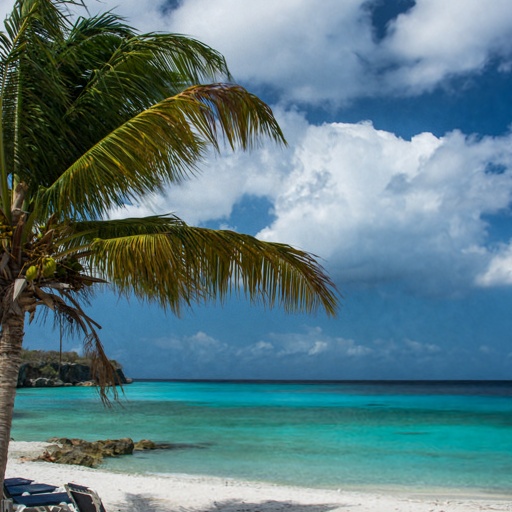}};
\begin{scope}[x={(image.south east)}, y={(image.north west)}]
\node[anchor=north west,
      fill=white, fill opacity=0.7, text opacity=1,
      inner sep=1pt, font=\scriptsize, rounded corners=1pt]
      at (0,1) {37.58 dB};
\end{scope}
\end{tikzpicture} &

\begin{tikzpicture}
\node[anchor=north west, inner sep=0] (image) at (0,0)
{\includegraphics[width=\linewidth]{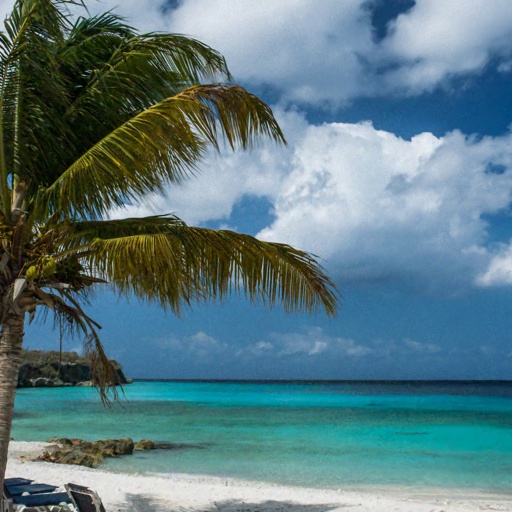}};
\begin{scope}[x={(image.south east)}, y={(image.north west)}]
\node[anchor=north west,
      fill=white, fill opacity=0.7, text opacity=1,
      inner sep=1pt, font=\scriptsize, rounded corners=1pt]
      at (0,1) {33.50 dB};
\end{scope}
\end{tikzpicture}
\\

\end{tabular}

\caption{Visualization of the 2D image fitting task. PSNR values are shown at the upper-left corner of each image.}
\label{fig:sup_2d_img_fiting_vis}
\end{figure*}
}

{
\setlength{\tabcolsep}{1pt}
\begin{figure*}[htbp]
    \centering
    \resizebox{\linewidth}{!}{
    \begin{tabular}{cccccc} %
        \includegraphics[width=0.15\textwidth]{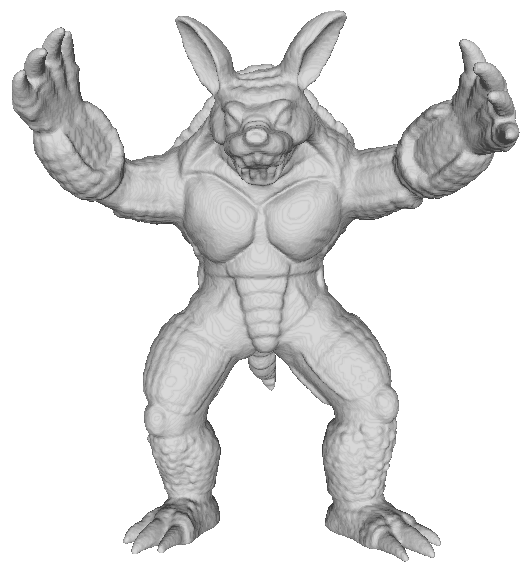} &
        \includegraphics[width=0.15\textwidth]{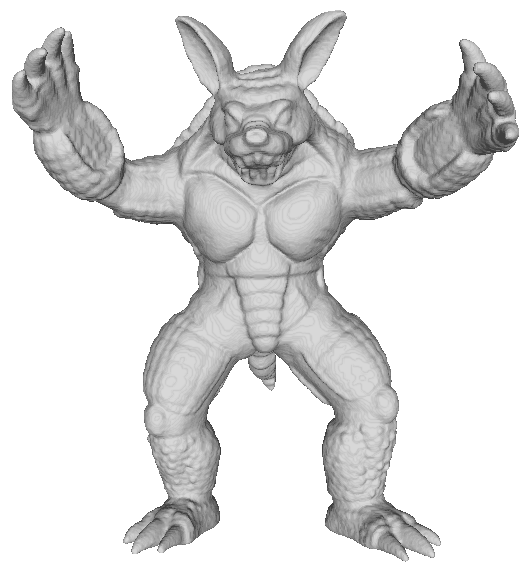} &
        \includegraphics[width=0.15\textwidth]{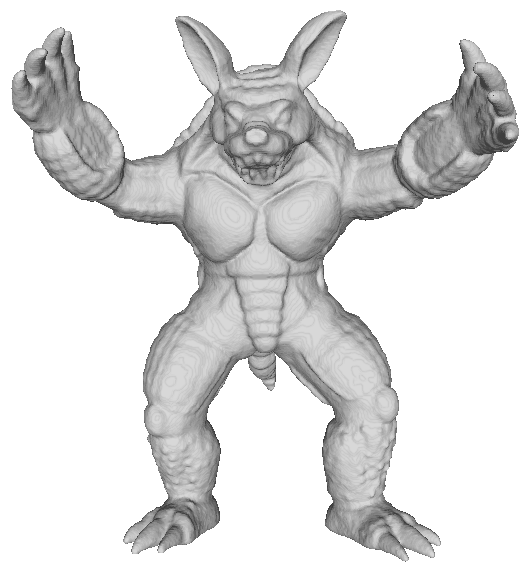} &
        \includegraphics[width=0.15\textwidth]{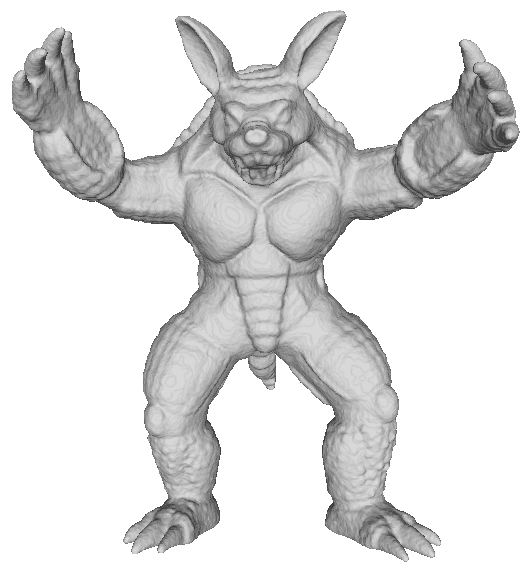} &
        \includegraphics[width=0.15\textwidth]{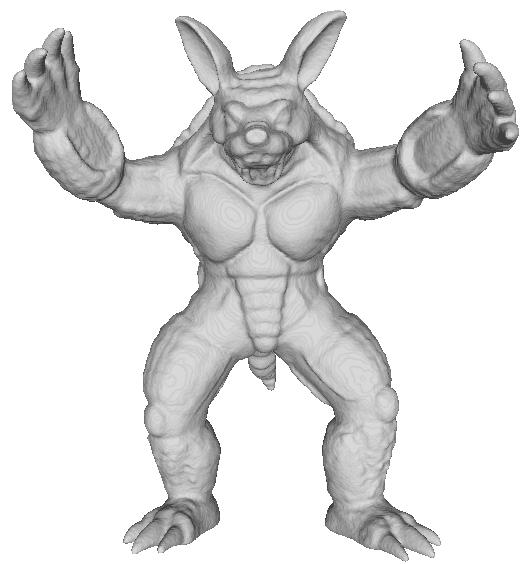} &
        \includegraphics[width=0.15\textwidth]{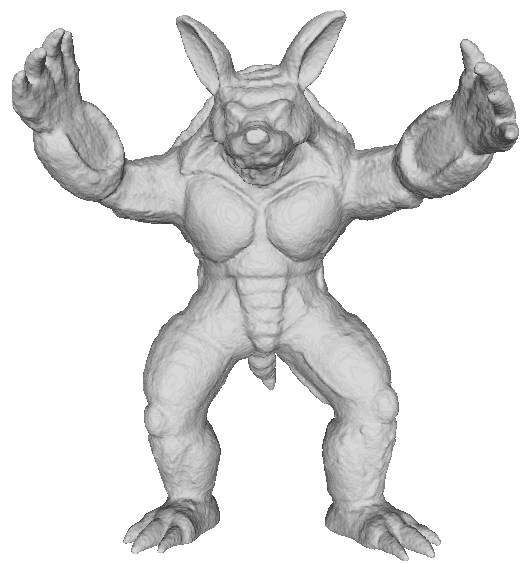} 
        
        \\

        \textbf{Ours(0.9996)} &  SL$^2$A(0.9993) & FINER(0.9950) & SCONE(0.9905) & SIREN(0.9933) & WIRE(0.9861) \\
        
        \includegraphics[width=0.16\textwidth]{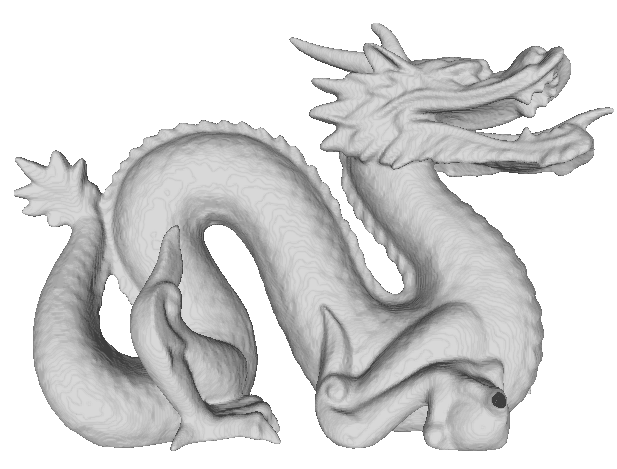} &
        \includegraphics[width=0.16\textwidth]{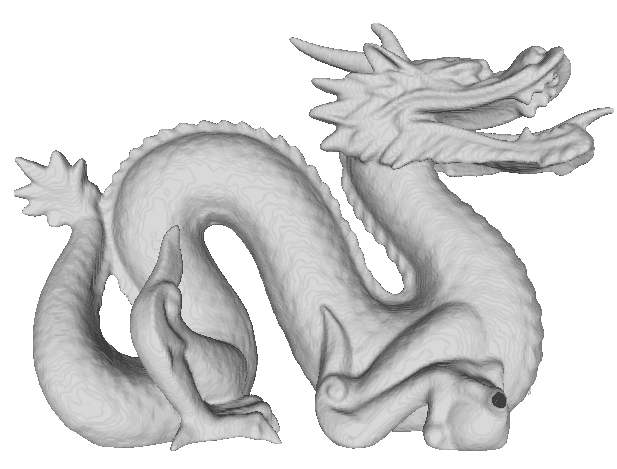} &
        \includegraphics[width=0.16\textwidth]{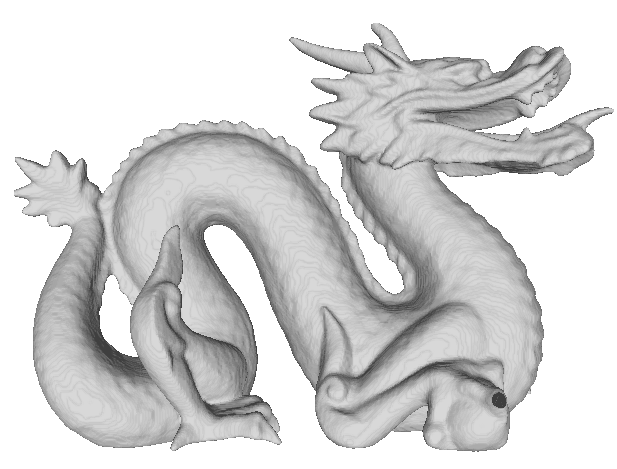} &
        \includegraphics[width=0.16\textwidth]{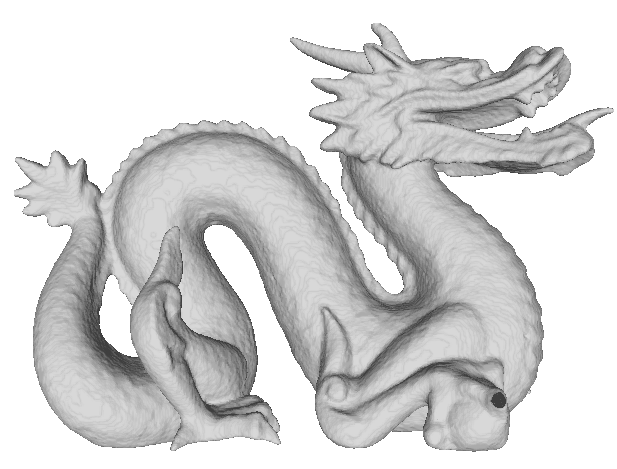} &
        \includegraphics[width=0.16\textwidth]{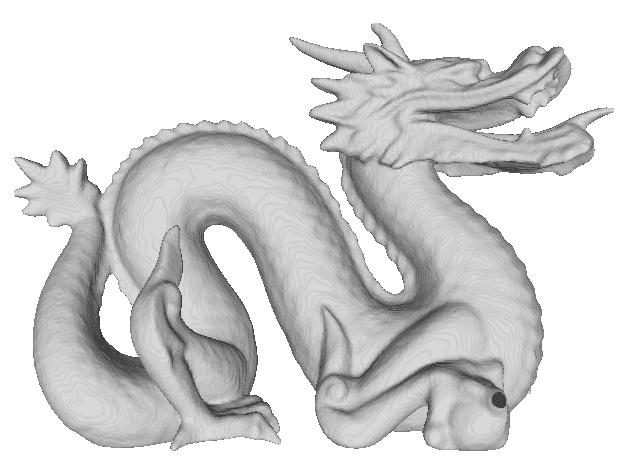} &
        \includegraphics[width=0.16\textwidth]{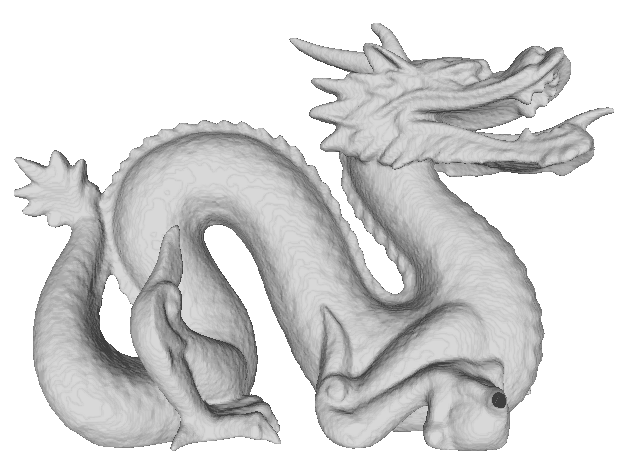}  \\

        \textbf{Ours(0.9994)} &  SL$^2$A(0.9991) & FINER(0.9959) & SCONE(0.9918) & SIREN(0.9940) & WIRE(0.9934) \\

        \includegraphics[width=0.16\textwidth]{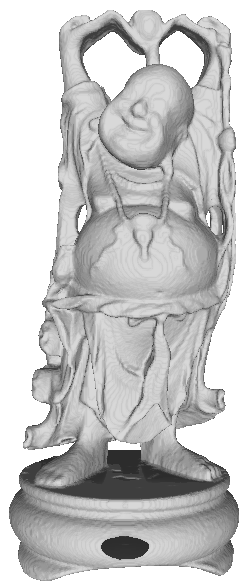} &
        \includegraphics[width=0.16\textwidth]{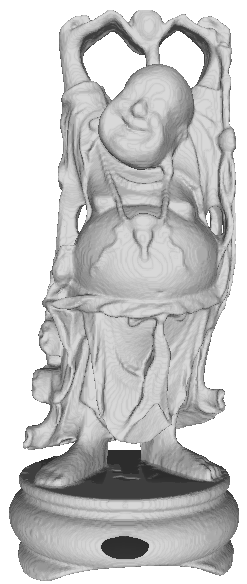} &
        \includegraphics[width=0.16\textwidth]{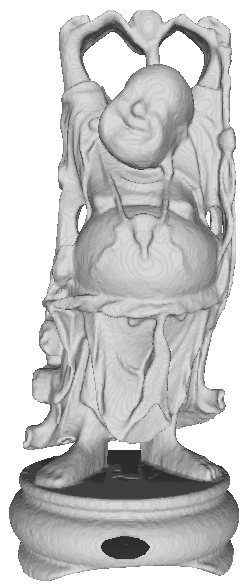} &
        \includegraphics[width=0.16\textwidth]{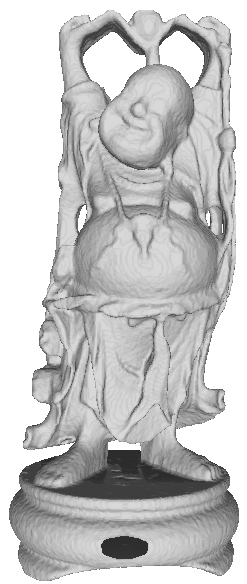} &
        \includegraphics[width=0.16\textwidth]{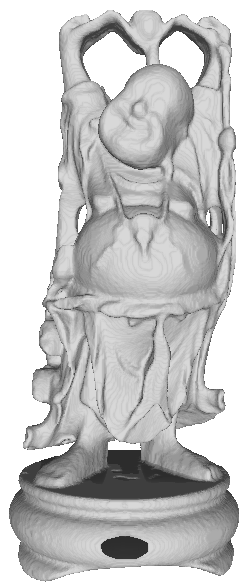} &
        \includegraphics[width=0.16\textwidth]{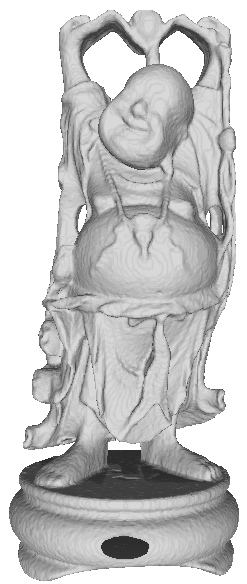}  \\

        \textbf{Ours(0.9995)} &  SL$^2$A(0.9994) & FINER(0.9958) & SCONE(0.9924) & SIREN(0.9934) & WIRE(0.9945) \\

        \includegraphics[width=0.16\textwidth]{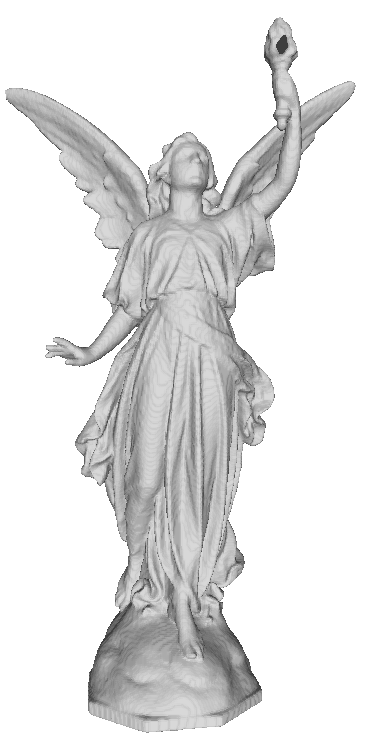} &
        \includegraphics[width=0.16\textwidth]{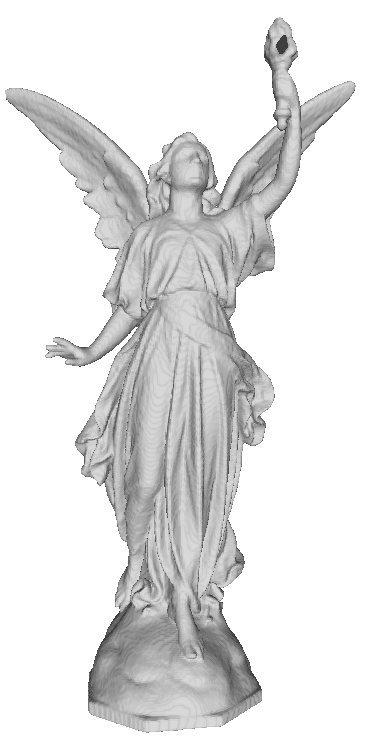} &
        \includegraphics[width=0.16\textwidth]{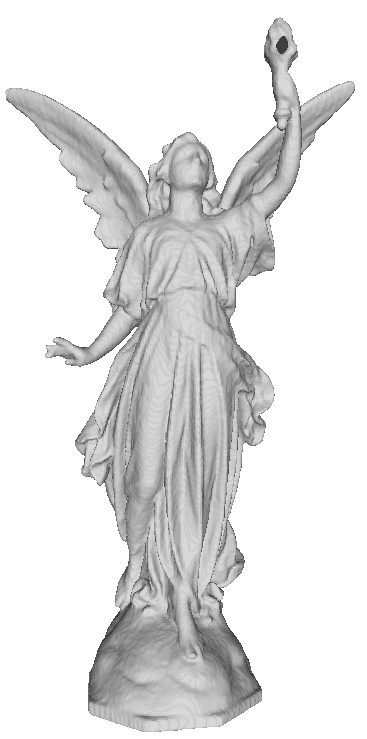} &
        \includegraphics[width=0.16\textwidth]{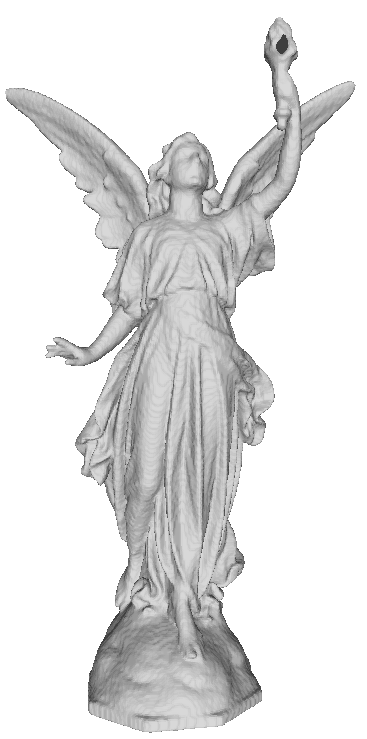} &
        \includegraphics[width=0.16\textwidth]{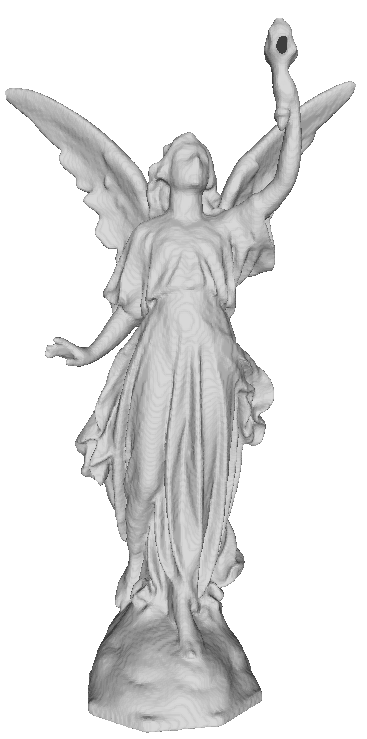} &
        \includegraphics[width=0.16\textwidth]{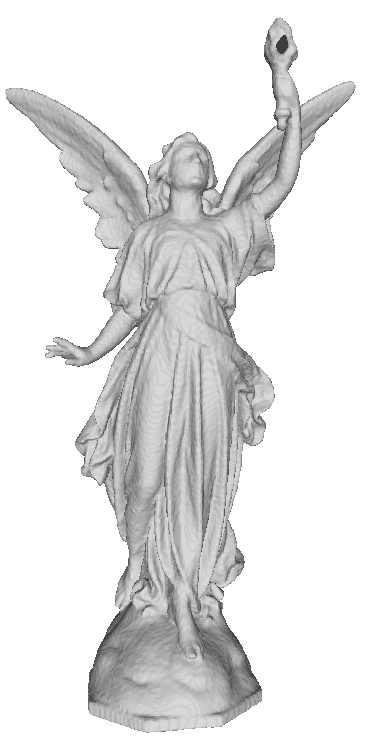}  \\

        \textbf{Ours(0.9995)} &  SL$^2$A(0.9992) & FINER(0.9929) & SCONE(0.9905) & SIREN(0.9866) & WIRE(0.9930) \\
    \end{tabular}
    }
    \caption{Visualization of the 3D shape representation task.
}
\label{sup_fig:3D_shape}
\end{figure*}
}

{
\setlength{\tabcolsep}{0.5pt}
\begin{figure*}[htbp]
    \centering
    \resizebox{\linewidth}{!}{
    \begin{tabular}{cccccc}

        \includegraphics[width=0.16\textwidth]{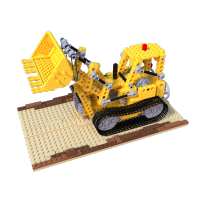} &
        \includegraphics[width=0.16\textwidth]{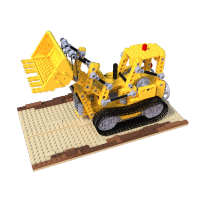} &
        \includegraphics[width=0.16\textwidth]{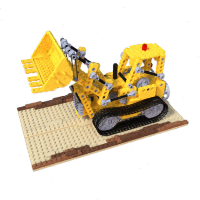} &
        \includegraphics[width=0.16\textwidth]{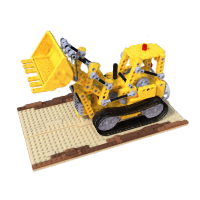} &
        \includegraphics[width=0.16\textwidth]{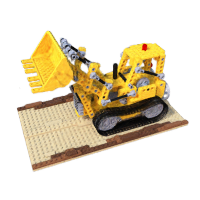} &
        \includegraphics[width=0.16\textwidth]{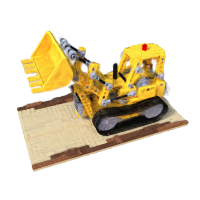} \\
        GT(PSNR) & \textbf{Ours(31.86)} & FINER(30.78) & SIREN(29.94) & WIRE(29.22) & Gauss(26.40) \\

                \includegraphics[width=0.16\textwidth]{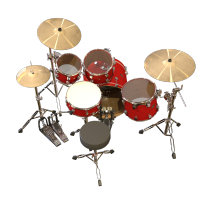} &
        \includegraphics[width=0.16\textwidth]{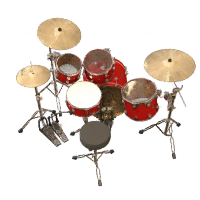} &
        \includegraphics[width=0.16\textwidth]{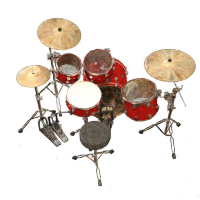} &
        \includegraphics[width=0.16\textwidth]{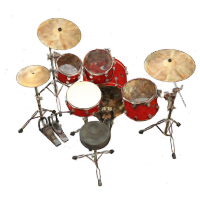} &
        \includegraphics[width=0.16\textwidth]{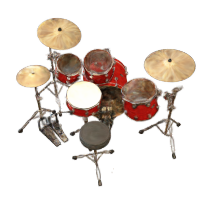} &
        \includegraphics[width=0.16\textwidth]{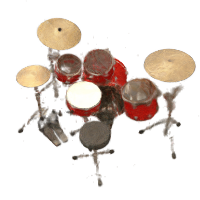} \\
                GT(PSNR) & Ours(25.59) & \textbf{FINER(25.66)} & SIREN(25.55) & WIRE(24.57) & Gauss(22.46) \\
        \includegraphics[width=0.16\textwidth]{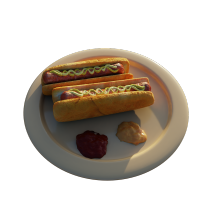} &
        \includegraphics[width=0.16\textwidth]{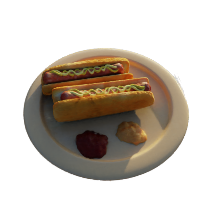} &
        \includegraphics[width=0.16\textwidth]{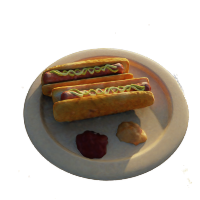} &
        \includegraphics[width=0.16\textwidth]{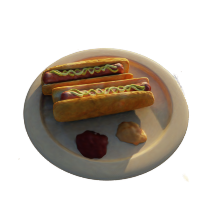} &
        \includegraphics[width=0.16\textwidth]{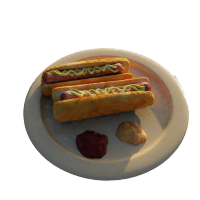} &
        \includegraphics[width=0.16\textwidth]{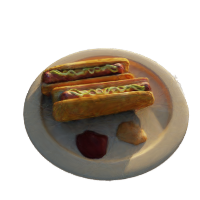} \\
                GT(PSNR) & \textbf{Ours(34.65)} & FINER(33.80) & SIREN(33.91) & WIRE(32.78) & Gauss(31.28) \\
        \includegraphics[width=0.16\textwidth]{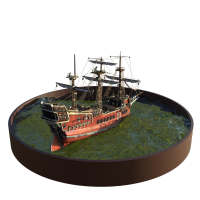} &
        \includegraphics[width=0.16\textwidth]{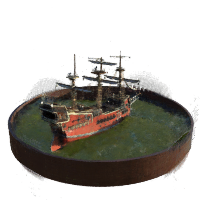} &
        \includegraphics[width=0.16\textwidth]{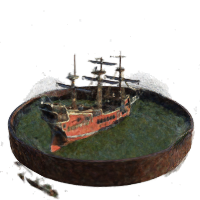} &
        \includegraphics[width=0.16\textwidth]{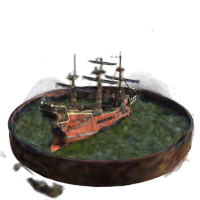} &
        \includegraphics[width=0.16\textwidth]{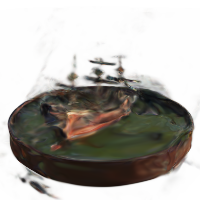} &
        \includegraphics[width=0.16\textwidth]{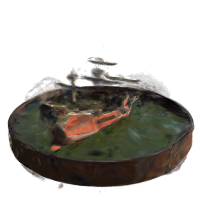} \\
                GT(PSNR) & \textbf{Ours(23.12)} & FINER(22.02) & SIREN(21.79) & WIRE(21.33) & Gauss(21.24) \\
        
    \end{tabular}
    }
    
    \caption{Visualization of the neural radiance fields task.
    \label{vis:nerf1}
}
\end{figure*}
}

\setlength{\tabcolsep}{10pt}
\begin{figure*}[htbp]
    \centering

    \begin{tabular}{ccc}
        \includegraphics[width=0.26\textwidth]{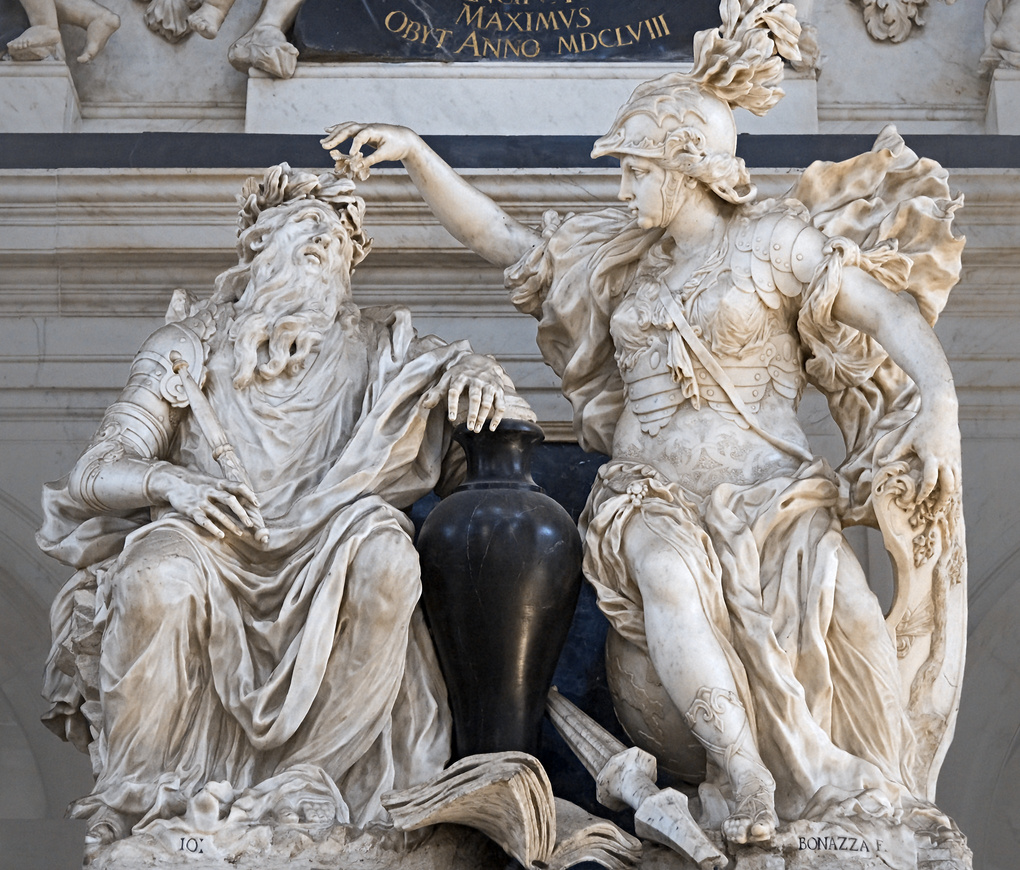} &
        \includegraphics[width=0.26\textwidth]{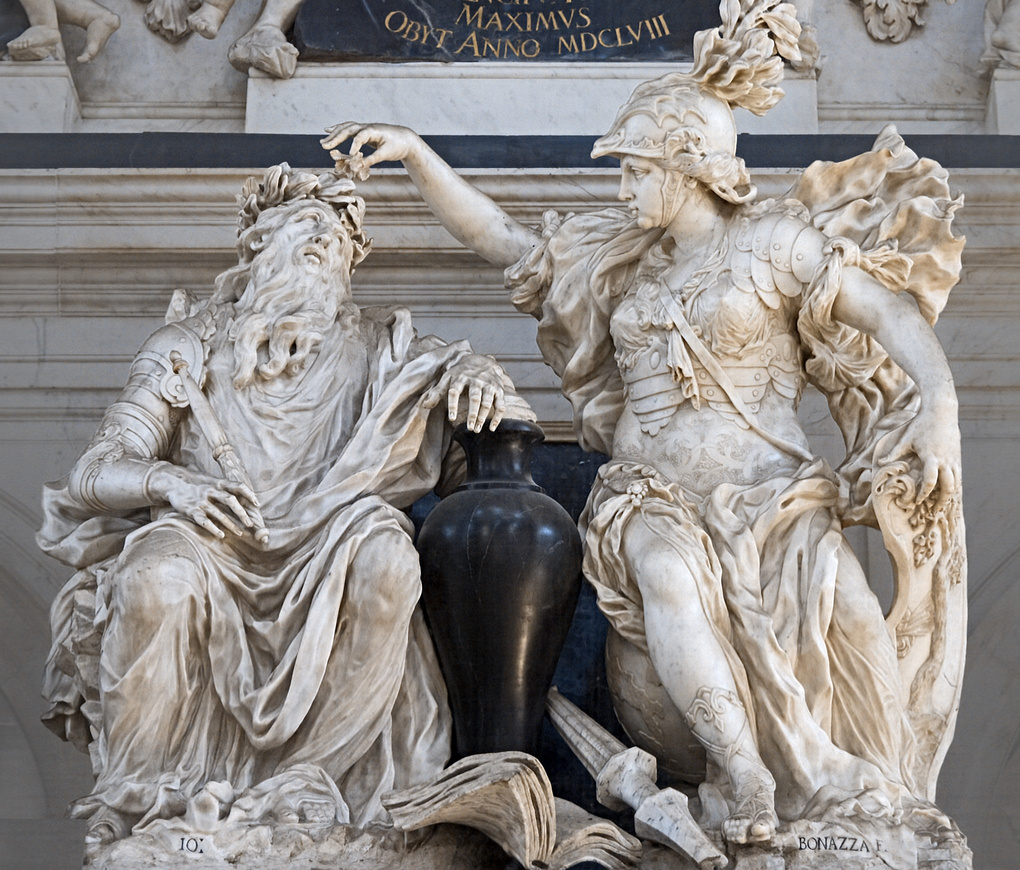} &
        \includegraphics[width=0.26\textwidth]{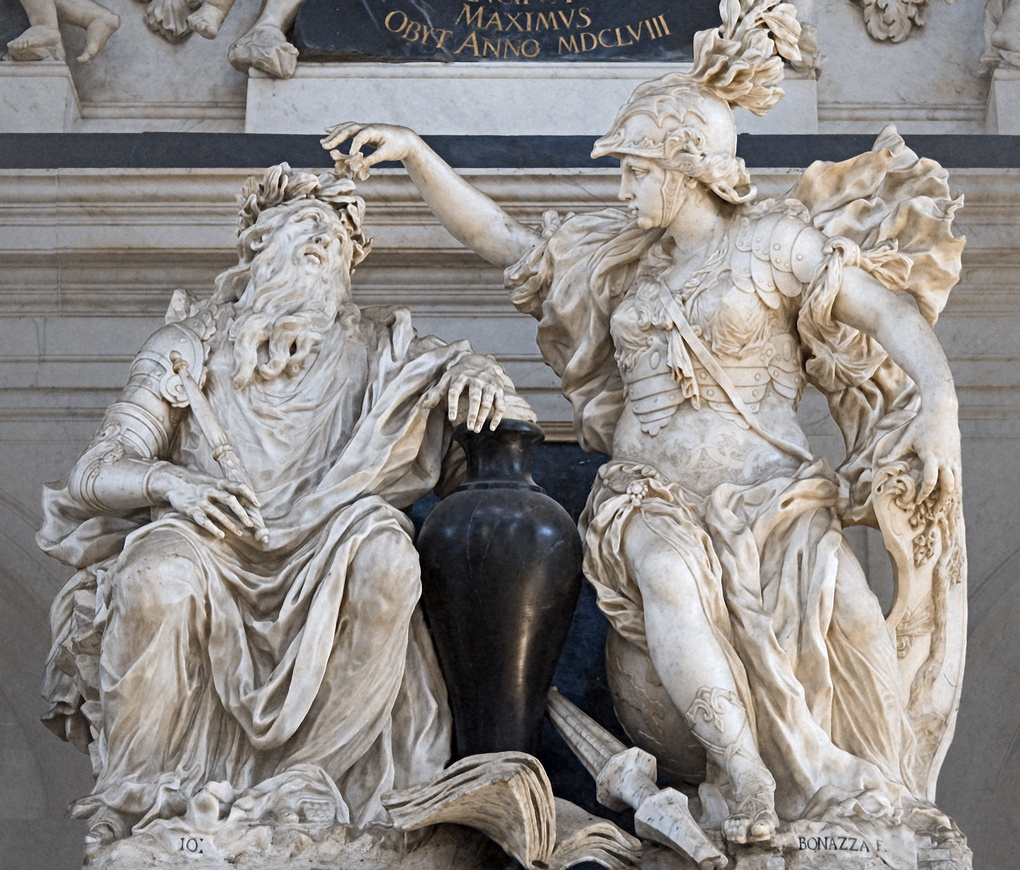} \\
        \textbf{Ours (35.79)} & SL$^2$A (34.00) & FINER (32.66) \\
       \includegraphics[width=0.26\textwidth]{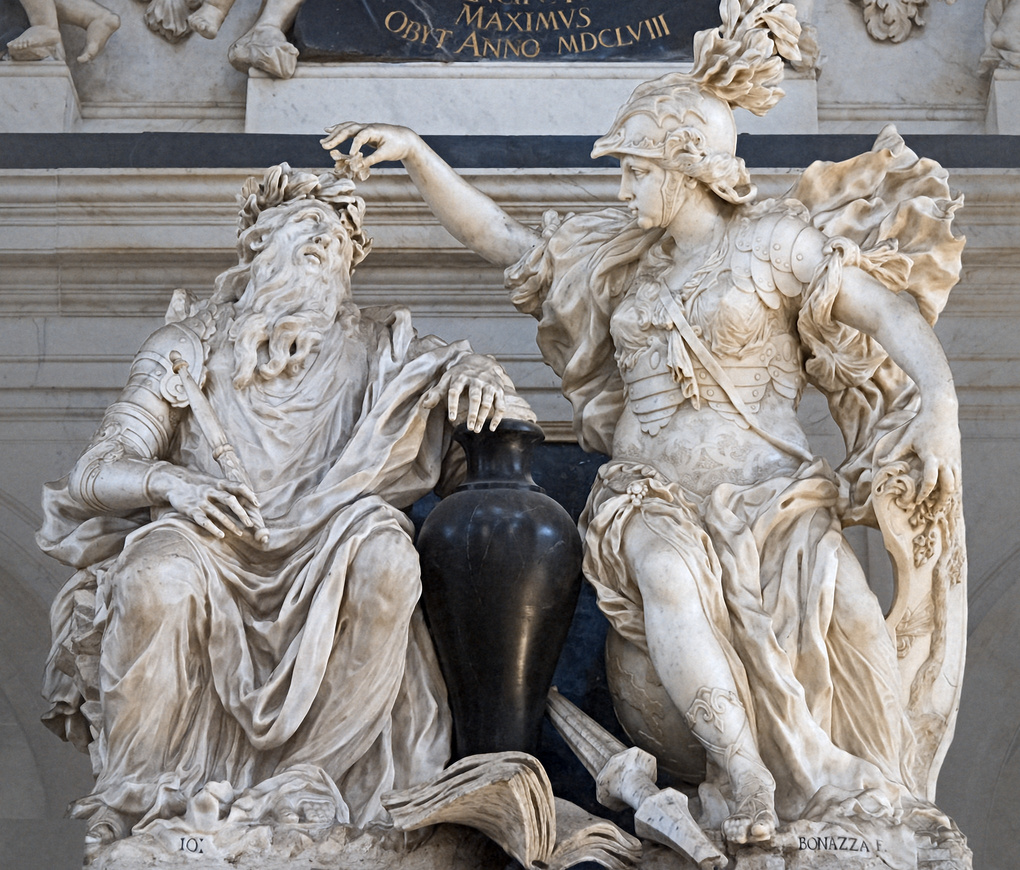} &
        \includegraphics[width=0.26\textwidth]{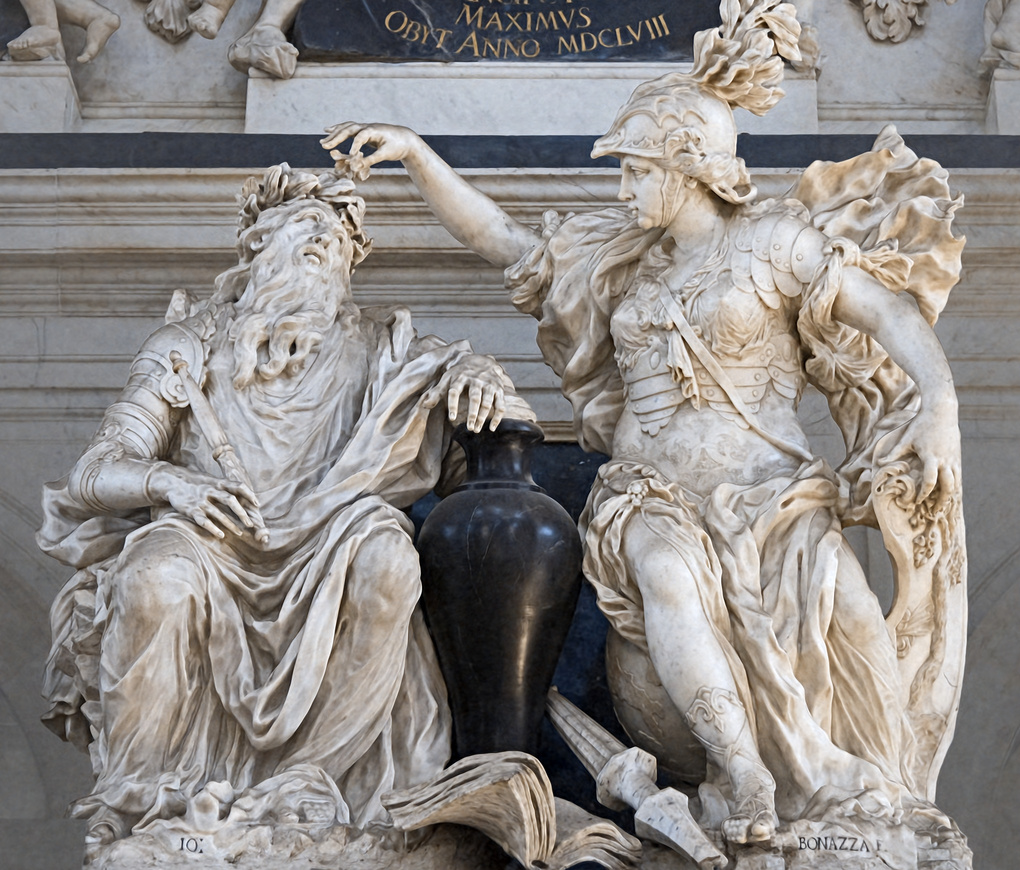} &
        \includegraphics[width=0.26\textwidth]{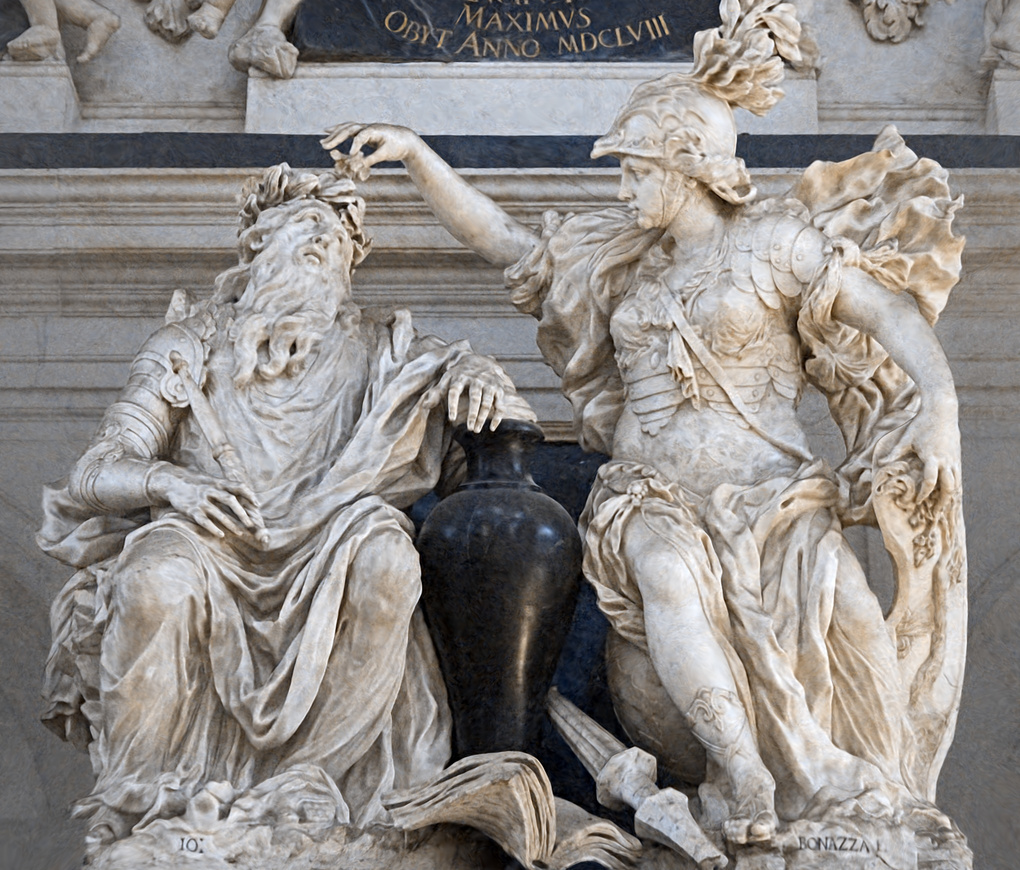} \\
        SCONE (34.09) & SIREN (32.42) & WIRE (29.12)\\

        \\
        \includegraphics[width=0.26\textwidth]{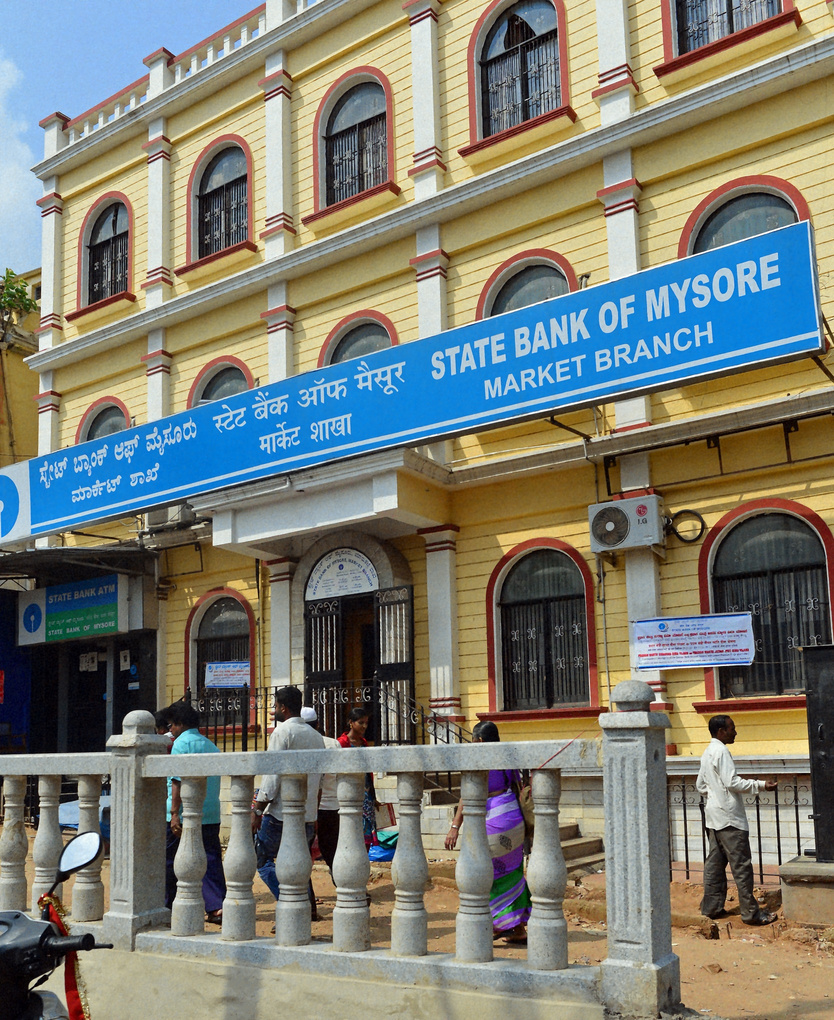} &
        \includegraphics[width=0.26\textwidth]{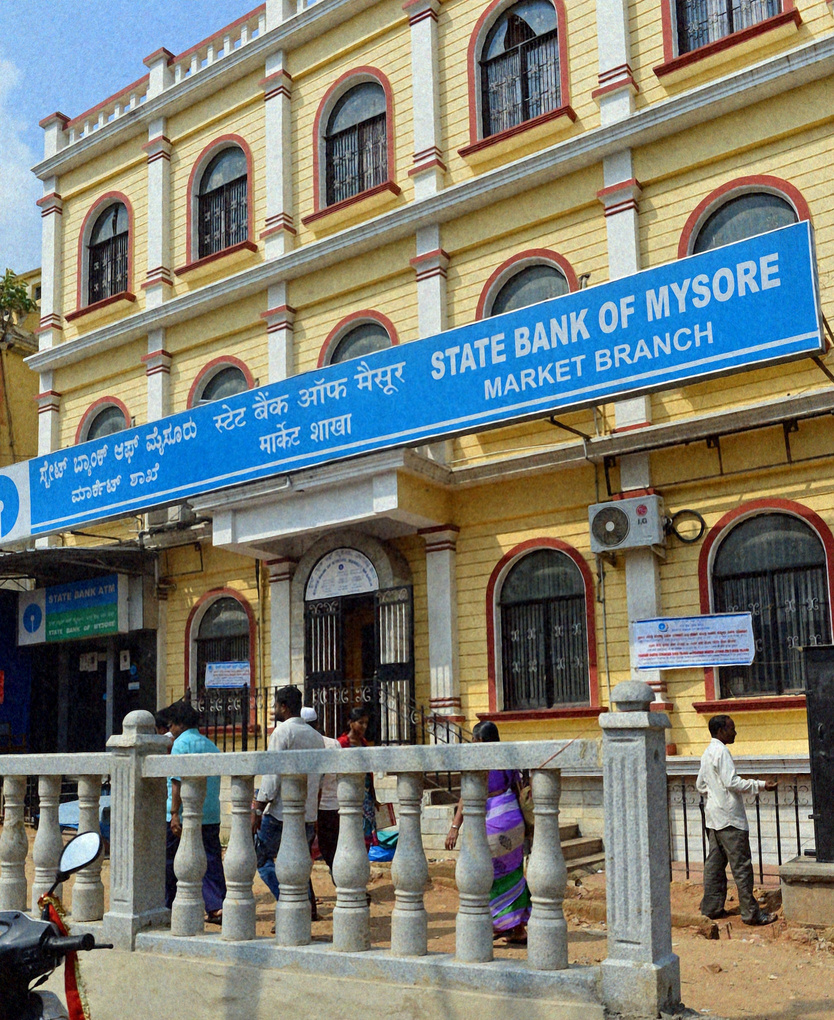} &
        \includegraphics[width=0.26\textwidth]{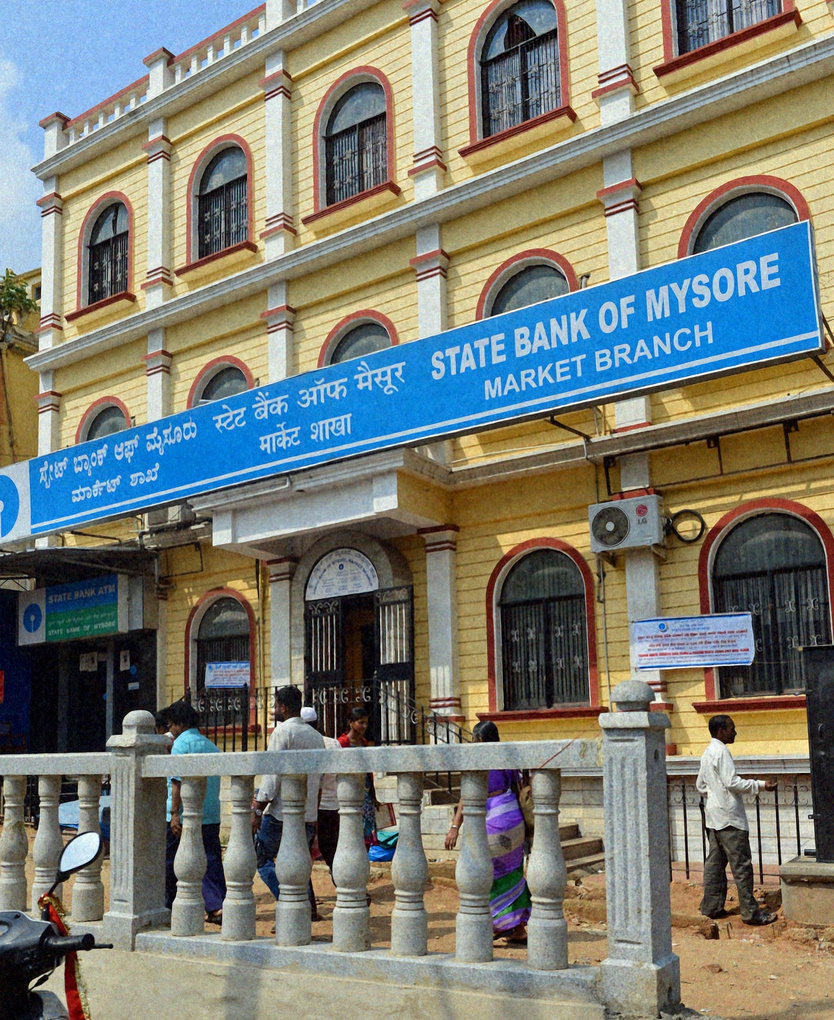} \\
        \textbf{Ours (30.79)} & SL$^2$A (26.82) & FINER (26.77) \\
       \includegraphics[width=0.26\textwidth]{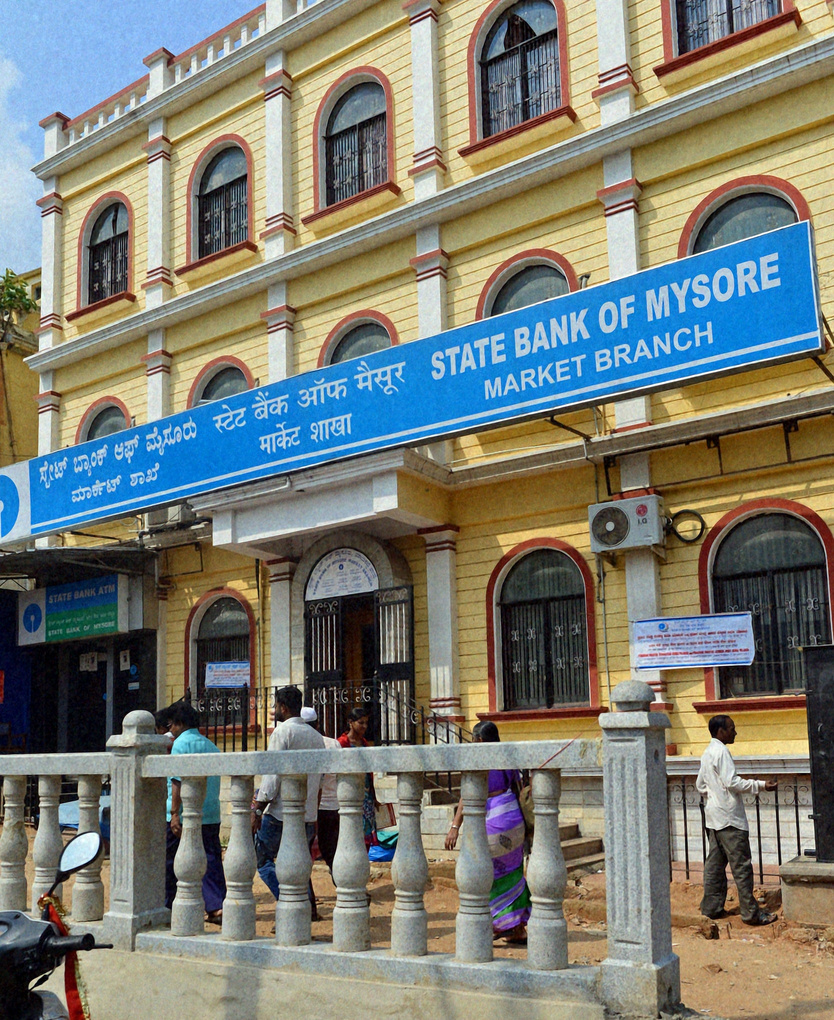} &
        \includegraphics[width=0.26\textwidth]{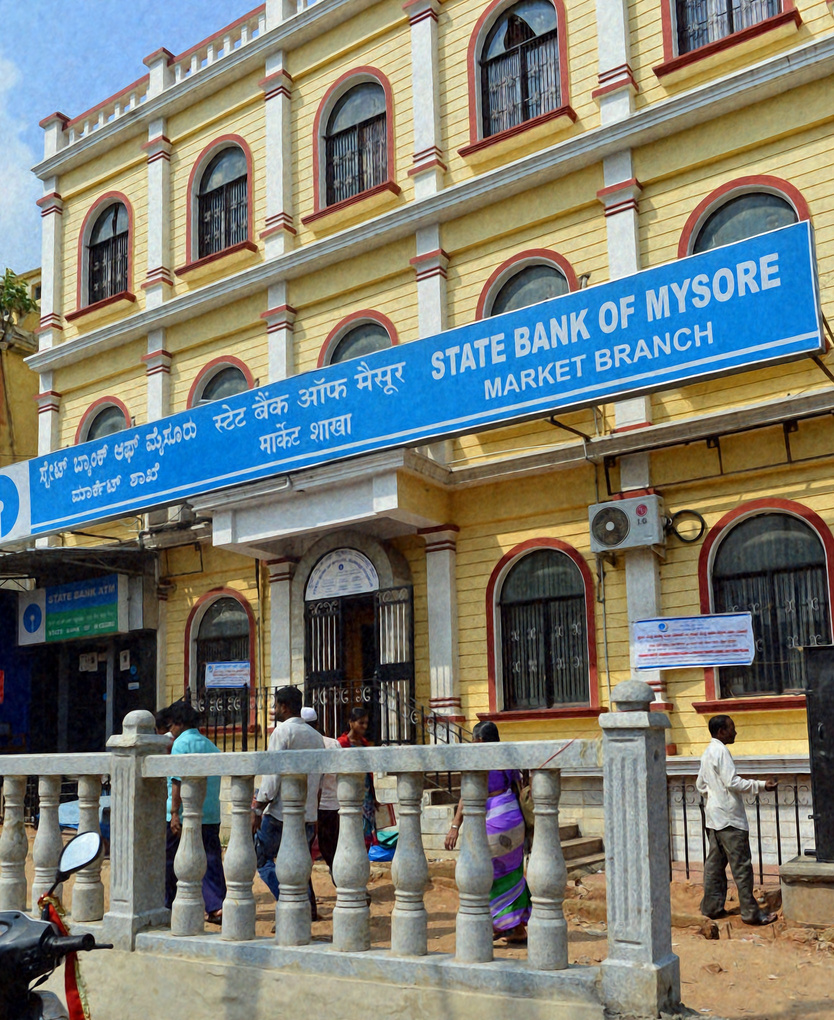} &
        \includegraphics[width=0.26\textwidth]{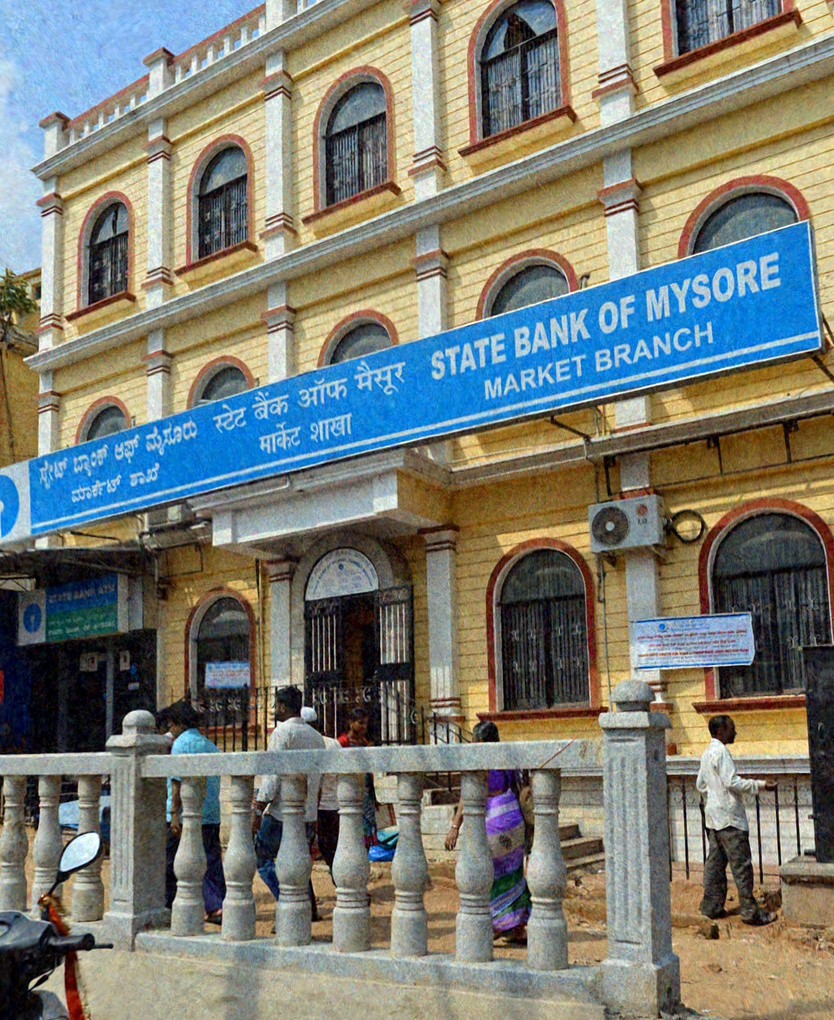} \\
        SCONE (27.68) & SIREN (27.20) & WIRE (23.99)
        
    \end{tabular}

    \caption{Visualization of the full-resolution representation results on the DIV2K images 0878 and 0891.}
    \label{vis:div2k_full1}
\end{figure*}

\end{document}